\newtheorem{theorem}{Theorem}
\newtheorem{proposition}{Proposition}
\newtheorem{corollary}{Corollary}
\newtheorem{lemma}{Lemma}
\newtheorem{example}{Example}
\newtheorem*{remark}{Remark}
\journal{xxx}
\begin{document}

\begin{frontmatter}


\title{Improved weight initialization for deep and narrow feedforward neural network}



\author[label1]{Hyunwoo Lee}
\ead{lhw908@knu.ac.kr}

\author[label2]{Yunho Kim}
\ead{yunhokim@unist.ac.kr}

\author[label3]{Seung Yeop Yang}
\ead{seungyeop.yang@knu.ac.kr}

\author[label1]{Hayoung Choi\corref{cor1}}
\cortext[cor1]{Corresponding author.}
\ead{hayoung.choi@knu.ac.kr}

\address[label1]{Department of Mathematics, Kyungpook National University, Daegu 41566, Republic of Korea.}
\address[label2]{Department of Mathematical Sciences, Ulsan National Institute of Science and Technology, Ulsan 44919, Republic of Korea.}
\address[label3]{KNU LAMP Research Center, KNU Institute of Basic Sciences, Department of Mathematics, Kyungpook National University, Daegu, 41566, Republic of Korea}

\begin{abstract}
Appropriate weight initialization settings, along with the ReLU activation function, have become cornerstones of modern deep learning, enabling the training and deployment of highly effective and efficient neural network models across diverse areas of artificial intelligence. 
The problem of \textquotedblleft dying ReLU," where ReLU neurons become inactive and yield zero output, presents a significant challenge in the training of deep neural networks with ReLU activation function. 
Theoretical research and various methods have been introduced to address the problem. 
However, even with these methods and research, training remains challenging for extremely deep and narrow feedforward networks with ReLU activation function. In this paper, we propose a novel weight initialization method to address this issue.
We establish several properties of our initial weight matrix and demonstrate how these properties enable the effective propagation of signal vectors. 
Through a series of experiments and comparisons with existing methods, we demonstrate the effectiveness of the novel initialization method.
\end{abstract}

\begin{keyword}
weight initialization \sep initial weight matrix \sep deep learning \sep feedforward neural networks \sep ReLU activation function 

\end{keyword}

\end{frontmatter}

\section{Introduction}
\label{sec:intro}

Training neural networks have enabled dramatic advances across a wide variety of domains, notably image recognition~\cite{ImageNEt}, natural language processing~\cite{NLP} 
and generative models~\cite{GAN}.
Numerous well-known neural networks belong to the family of feedforward neural networks~(FFNNs), which are used for input-output mapping. 
Traditionally, FFNN connection weights are optimized by the back-propagation algorithm~\cite{Backpropagation}. 
In the early stages of research on FFNNs, the networks with one or a limited number of hidden layers, which are now referred to as shallow networks, were common.
Consequently, extensive research was conducted to understand their properties.
Notably, these networks have been demonstrated to serve as general function approximators~\cite{approximator2, approximator1, approximator3, approximator4}. Deeper networks, with their multiple hidden layers, have shown enhanced performance in tasks requiring high levels of pattern recognition, such as image and speech analysis~\cite{deeper}. However, as the depth of FFNNs increases, the problem of the vanishing gradient becomes more pronounced~\cite{vanishing_gradient}.
This occurs because the network weights receive updates based on the derivative of the error during training. In certain situations, the gradient becomes extremely small, making 
it almost impossible for weights to change, and in severe cases, it can halt the training 
process altogether.

The rectified linear unit~(ReLU) is one of the most widely-used activation functions in the field of deep learning due to its superior training performance compared to other activation functions~\cite{relu4}.The phenomenon known as \textquotedblleft dying ReLU" is a type of vanishing gradient issue when ReLU neurons become inactive and an output of 0 for any input~\cite{relu1}. It has been widely recognized as a major obstacle to training deep neural networks with ReLU activation function~\cite{relu2, relu3}. To address this issue, several methods have been introduced. These methods can be broadly classified into three general approaches. The first approach is to change network architectures, including the activation functions~\cite{activation3, activation4, activation2, activation}. Another approach involves various normalization techniques~\cite{normal1, normal3, normal2}. The third approach specifically is to study the weights and biases initialization with fixed network architectures~\cite{Xavier, He, Orthogonal}. The third approach is the topic of our research in this paper.

Numerous papers have discussed various weight initialization methods for neural networks and emphasized their importance~\cite{important_weight}. 
The most popular weight initializations are Xavier initialization~\cite{Xavier} and Kaiming initialization~\cite{He}.
Both methods adjust the variance of the initial weight matrix to prevent the vanishing/exploding problem, enabling deeper networks to be trained.
Saxe et al.~\cite{Orthogonal} discuss an orthogonal initialization method based on an orthonormal basis.
ZerO initialization~\cite{zero} which is fully deterministic initialization has benefits in training extremely deep neural networks without batch normalization. 
ZerO initialization utilizes Hadamard transforms to break the training degeneracy. 
For more details, see the review paper~\cite{review} and references therein.

Although deep and wide networks are popular and the most successful in practice, deep and wide networks need high computational costs to train a huge number of parameters.
On the other hand, deep and narrow networks also play important roles theoretically and practically.
As demonstrated in \cite{reluint3}, deep and narrow ReLU networks are essential when creating finite element basis functions.
This application highlights the use of deep ReLU networks in finite element methods for solving partial differential equations.
Additionally, various theoretical studies~\cite{reluint8, reluint4,reluint7, reluint5, reluint6} exploring the 
the expressive power of ReLU networks heavily depends on deep and narrow networks for approximating polynomials through sparse concatenations.

Weight initialization methods have been developed to prevent the dying ReLU problems in deep and narrow FFNNs with the ReLU activation function.
Lu et al.~\cite{reluinit1} provided rigorous proof that as the depth of a deep FFNNs with ReLU activation function approaches infinity, it will eventually become inactive with a certain probability.
Then they propose a randomized asymmetric initialization~(RAI) designed to prevent the dying ReLU problem effectively.
Burkholz et al.~\cite{reluinit2} calculated the precise joint signal output distribution for FFNNs with Gaussian weights and biases, without relying on mean field assumptions, and analyzed deviations from the mean field results. They further discussed the limitations of the standard initialization method, such as its lack of dynamical isometry, and
proposed a simple alternative weight initialization method, namely, the Gaussian submatrix initialization~(GSM).
These studies have improved training performance in deep and narrow feedforward ReLU networks. Despite these advancements, our experiments show that existing methods did not perform well in extremely deep or narrow scenarios. To overcome the problem, this article proposes a novel weight initialization method for FFNNs with ReLU activation functions.
The proposed weight initialization has several properties such as orthogonality, positive entry predominance, and fully deterministic.
Furthermore, due to the properties of the proposed initial weight matrix, it effectively transmits signals even in deep and narrow FFNNs with ReLU activation.

We empirically benchmarked our proposed weight initialization method 
on MNIST~\cite{mnist} and Fashion MNIST datasets comparing to previous weight initialization methods 
such as Xavier~\cite{Xavier}, He~\cite{He}, Orthogonal~\cite{Orthogonal}, Identity, ZerO~\cite{zero}, RAI~\cite{reluinit1}, and GSM~\cite{reluinit2}. Initially, we applied our proposed weight initialization method to various dataset sizes of FFNN models using ReLU activation functions. Our method significantly improves validation accuracy in the models with no hidden layers or in narrower networks with fewer nodes, clearly outperforming other initialization methods. Moreover, various computational experiments show that the proposed method holds depth independence, width independence, and activation function independence. For depth independence,  experiments were conducted on both the MNIST and Fashion MNIST datasets and tabular datasets like the Wine Quality dataset~\cite{wine}, and the Iris dataset~\cite{iris}. It demonstrated that the proposed method performs well for depth independence, excelling in training deep feedforward neural networks across different numbers of layers. Also, it was shown that our method holds width independence, effectively training networks with various numbers of nodes per layer. It achieved robust validation accuracy and rapid convergence, even in network configurations that traditionally challenge other weight initialization methods. Moreover, our method demonstrated independence from activation functions in the ReLU family. The preceding experiments underscore that the proposed initialization method is more independent of network architecture. 

\subsection*{A. Contributions}
In this paper, we propose a novel weight initialization method in extremely deep and narrow feedforward neural networks (FFNNs) with a rectified linear unit (ReLU) activation function. The main contributions of this paper are summarized as follows.

\begin{itemize}
    \item We propose a novel weight initialization method that prevents the dying ReLU problem in extremely deep and narrow FFNNs with ReLU activation function.
    \item We analyze the properties of the proposed initial weight matrix. 
    We demonstrated orthogonality and the absolute value of column sum of $\mathbf{Q}^{\epsilon}$ is less or equal to $\epsilon$. Furthermore, we show that $\mathbf{W}^{\epsilon}$ with a constant row(or column) sum. We also show that $\mathbf{W}^{\epsilon}\mathbf{x}$ has more positive entries.
    \item We conducted experiments applying our proposed method and existing methods in various scenarios.
\end{itemize}

\subsection*{B. Organization and Notations}
The remainder of this paper is organized as follows. 
In Section~\ref{sec:methodology}, we present existing weight initialization methods and introduce our proposed weight initialization method. 
Next, various properties of the proposed initial weight matrix are provided in Section~\ref{sec:property}.
Section~\ref{sec:experiments} presents simulation results.
Finally, conclusions are drawn in Section~\ref{sec:conclusion}.

\smallskip

{\bf{Notations}}: Let $\mathbb{R}$ be the set of real numbers and $\mathbb{R}_{+}$ be the set of nonnegative real numbers. 
The standard inner product of two vectors $\mathbf{u}$ and $\mathbf{v}$ is denoted by $\langle \mathbf{u}, \mathbf{v} \rangle$, and $\|\mathbf{v}\|$ denotes the Euclidean norm. Denote the $m\times n$ matrix whose all entries are ones as $\mathbf{J}_{m\times n}$
and denote the $m\times n$ matrix with ones on the main diagonal and zeros elsewhere as $\mathbf{I}_{m\times n}$.
For $m=n$ we simply denote $\mathbf{I}_m$ and $\mathbf{J}_m$ instead of $\mathbf{I}_{m\times m}$ and $\mathbf{J}_{m \times m}$, respectively.
Denote $\mathbf{1}_{m} = \left[1~ 1~ \cdots~ 1 \right]^T \in \mathbb{R}^{m}$.
However, if the size is clear from context, we will drop $m$ from our notation for brevity.
 $\mathbf{e}_j$ ($j=1,\ldots,m$) denotes the vector in $\mathbb{R}^m$ with a $1$ in the $j$-th coordinate and $0$'s elsewhere. $\mathcal{O}(\cdot)$ represents the big $O$ notation.

\bigskip

\section{Methodology}\label{sec:methodology}
Before introducing our proposed weight initialization method, we briefly give basic concepts and prior work.
\subsection{Basic Conceptions}
Let $K$ pairs of training samples $\{(\mathbf{x}_i, \mathbf{y}_i)\}_{i=1}^K$, where $\mathbf{x}_i\in\mathbb{R}^{N_x}$ is training input and $\mathbf{y}_i\in\mathbb{R}^{N_y}$ is its corresponding output. 
Here, $N_x$ and $N_y$ are the number of nodes in the input layer and output layer, respectively.
The result $\mathbf{y}_i$ will be a vector with continuous values in the case of regression problems, a binary one-hot vector for classification problems, and so forth.
An FFNN with $L$ layers performs cascaded computations of
\begin{equation*}
    \mathbf{x}^\ell = f(\mathbf{z}^\ell) = f(\mathbf{W}^{\ell}\mathbf{x}^{{\ell}-1}+\mathbf{b}^{\ell})\in\mathbb{R}^{N_{\ell}} \quad \text{for  all }\ell=1,\ldots,L,
\end{equation*}
where $\mathbf{x}^{\ell-1}\in\mathbb{R}^{N_{\ell-1}}$ is the input feature of $\ell$-th layer,
$\mathbf{W}^{\ell}\in\mathbb{R}^{N_{\ell}\times N_{\ell-1}}$ is the weight matrix,
$\mathbf{b}^{\ell}\in\mathbb{R}^{N_{\ell}}$ is the bias vector for each $\ell=1,\ldots,L$, 
and $f(\cdot)$ is an element-wise activation function.
To gain good estimation of $\mathbf{y}$ for any test sample $\mathbf{x}$, FFNNs optimization aims to find optimal solutions of network parameters $\Theta = \{ \mathbf{W}^{\ell}, \mathbf{b}^{\ell}\}_{\ell=1}^L$.
In other words, training is the process of solving the following equation:
\begin{equation*}
    \min_{\Theta} \mathcal{L}(\{(\mathbf{x}_i, \mathbf{y}_i)\}_{i=1}^K;\Theta),
\end{equation*}
where $\mathcal{L}$ is a training loss function.

The network parameters $\Theta = \{ \mathbf{W}^{\ell}, \mathbf{b}^{\ell}\}_{\ell=1}^L$ are usually optimized using gradient descent. The gradient descent updates the network parameter with an initialization as follows:
for each $t=0,1,2, \ldots$,
\begin{align*}
    \mathbf{W}^{\ell}(t+1)
    &=\mathbf{W}^{\ell}(t)-\eta \frac{\partial \mathcal{L}}{\partial \mathbf{W}^{\ell}(t)}\quad (\ell=1,\ldots,L),\\
    \mathbf{b}^{\ell}(t+1)
    &=\mathbf{b}^{\ell}(t)-\eta \frac{\partial \mathcal{L}}{\partial \mathbf{b}^{\ell}(t)} \quad (\ell=1,\ldots,L),
\end{align*}
where $\eta>0$ is the learning rate.
There exist variants of gradient descents such as stochastic gradient descents~(SGD), ADAM~\cite{adam}, AdaGrad~\cite{adagrad}, and so on. 

\medskip

\subsection{Prior Work}
Weight initialization plays a critical role in training neural networks, significantly influencing model convergence and learning performance~\cite{review}. Selecting an appropriate initialization method is vital for improving a model's efficiency and performance. These initialization methods affect the convergence rate and training stability of learning algorithms like gradient descent.
Among well-known approaches are the Xavier and He initialization methods. These research efforts involve scaling the initial weights to maintain the variance of input and output layers or control the variance of the output layer to a desired value. They also focus on preserving the variance of gradients during training, all of which contribute to more effective and stable neural network training.
However, choosing the right variance for weight initialization becomes increasingly complex, particularly with a growing number of layers. 
Addressing these challenges, Zhao et al.~\cite{zero} introduced ZerO, a fully deterministic initialization method. The method initializes network weights to either zeros or ones. This novel method is grounded in identity and Hadamard transforms, serving as a replacement for the traditional random weight initialization. ZerO offers numerous advantages, including the ability to train exceptionally deep networks without requiring batch normalization.
The orthogonal initialization method employs an orthogonal matrix for weight initialization~\cite{Orthogonal}. 
The method ensures that the singular values of the input-output Jacobian are approximately equal to $1$. This condition, known as dynamical isometry, allows for consistent learning times that are not dependent on the depth of the neural networks.
Although deep and wide networks are effective and popular, they incur high computational costs from their extensive parameters. Conversely, deep and narrow networks hold substantial theoretical and practical significance. They are essential in creating finite element basis functions, particularly in applications like solving partial differential equations, as shown in \cite{reluint3}. Moreover, a variety of theoretical investigations~\cite{reluint8,reluint4, reluint7, reluint5, reluint6} into the expressive power of ReLU networks rely heavily on deep and narrow networks to approximate polynomials efficiently through sparse concatenations.
Yet, the \textquotedblleft dying ReLU\textquotedblright problem remains a significant obstacle in training deep and narrow FFNNs.
Lu et al.~\cite{reluinit1} rigorously proved that as the depth of deep FFNNs with ReLU activation function tends toward infinity, it will eventually become inactive with a certain probability. They also introduced a randomized asymmetric initialization method~(RAI) to effectively address the dying ReLU problem. Burkholz et al.~\cite{reluinit2}, on the other hand, calculated the precise joint signal output distribution for FFNNs with Gaussian weights and biases. Without relying on mean-field assumptions, they analyzed deviations from the mean-field results and discussed the limitations of the standard weight initialization method. They proposed an alternative weight initialization approach known as Gaussian submatrix initialization~(GSM).
However, the methods proposed so far have shown limited effectiveness in extremely deep and narrow FFNNs. To address this issue, we propose a new weight initialization method.

\medskip

\subsection{Proposed Weight Initialization Method}
Our proposed weight initialization method can be characterized by key properties: orthogonality, positive entry predominance, and fully deterministic. Proposition~\ref{prop:orthogonal} establishes that the proposed initial weight matrix is orthogonal. Orthogonal weight initialization, extensively studied both theoretically and empirically, has been shown to accelerate convergence in deep linear networks through the attainment of dynamical isometry~\cite{Orthogonal3,Orthogonal2,Orthogonal}. Our method demonstrates in Proposition~\ref{prop:csum} that the initial weight matrix's entry sum of each column (resp. row) vector is almost the same. Building on this, Corollary~\ref{cor1} establishes that each $\mathbf{Wx}$ has more positive entries, thereby preventing the dying ReLU problem in deep networks. Finally, the proposed weight initialization is fully deterministic, thus it is not dependent on randomness. 

\smallskip

To construct a proper initial weight matrix, we find a matrix $\mathbf{W}\in\mathbb{R}^{m\times n}$ satisfying the following conditions:
\begin{itemize}
\item[$($i$)$] The set of all column vectors of $\mathbf{W}$ is orthonormal; 
\item[$($ii$)$] $\mathbf{W}\mathbf{x}$ has more positive entries for all $\mathbf{x}\in \mathbb{R}_{+}^{n}$;
\item[$($iii$)$] $\mathbf{W}$ is a fully deterministic matrix.     
\end{itemize}

\smallskip

To obtain such a matrix we first define $\mathbf{Q}_{m\times m}^{\epsilon}$ by the orthogonal matrix of a QR decomposition of 
\begin{equation*}
\mathbf{J}^{\epsilon}:=
\mathbf{J} + \epsilon \mathbf{I}=
    \begin{bmatrix}
        1+\epsilon & 1 & \cdots & 1\\
        1 & 1+\epsilon  & \cdots & 1 \\
        \vdots & \vdots & \vdots & \vdots  \\
        1 & 1 & \cdots &  1+\epsilon
    \end{bmatrix}_{m\times m},
\end{equation*}
where $\epsilon>0$ is a sufficiently small.

To initialize the weights of the neural networks we propose that
\begin{equation}\label{eq:proposedW1}
\mathbf{W}^{\epsilon}_{m\times n}=  \left( \mathbf{Q}_{m\times m}^{\epsilon}  \right) \mathbf{I}_{m\times n} \left( \mathbf{Q}_{n\times n}^{\epsilon} \right)^T.
\end{equation}
It is noteworthy that 
$\mathbf{W}^{\epsilon}_{m\times n}$ can be expressed as 
\begin{equation}\label{eq:outerprod1}
    \mathbf{W}_{m\times n}^{\epsilon} = \mathbf{q}_1 \hat{\mathbf{q}}_1^T + \mathbf{q}_2 \hat{\mathbf{q}}_2^T + \cdots + \mathbf{q}_{s} \hat{\mathbf{q}}_s^T,
\end{equation}
where $s=\min\{m,n\}$, and $\mathbf{q}_1,\ldots,\mathbf{q}_m$ are the column vectors of $\mathbf{Q}_{m\times m}^{\epsilon}$ and $\hat{\mathbf{q}}_1,\ldots,\hat{\mathbf{q}}_n$ are the column vectors of $\mathbf{Q}_{n\times n}^{\epsilon}$.
Note that $\mathbf{q}_1,\ldots,\mathbf{q}_m$ are orthonormal vectors in $\mathbb{R}^m$ and $\hat{\mathbf{q}}_1,\ldots,\hat{\mathbf{q}}_m$ are orthonormal vectors in $\mathbb{R}^n$.
That is, two sets of column vectors are constructed very similarly, but they are defined in different dimensional vector spaces for $m\neq n$. Moreover, $\mathbf{q}_i \hat{\mathbf{q}}_i^T$ is a rank-one matrix for all $i$.

\begin{remark}
    The matrix $\mathbf{J}^{\epsilon}=\mathbf{J}_m + \epsilon \mathbf{I}_m$ is positive definite, specifically, the eigenvalues consist of $\lambda_1=m+\epsilon$ and $\lambda_2=\epsilon$ (multiplicity is $m-1$).
The corresponding eigenvector of $\lambda_1$ is $\mathbf{1}$ and the corresponding eigenvectors of $\lambda_2 $ are the set of independent vectors $\{\mathbf{v}_2,\ldots,\mathbf{v}_{m} \}$ such that $\mathbf{1} \perp \mathbf{v}_i$ for all $i=2,\ldots, m$. 
For more details on the matrix $\mathbf{J}^{\epsilon}$, see the paper~\cite{mat1, mat2}. 
\end{remark}

We first give the proposed initial weight matrix $\mathbf{W}^{\epsilon}_{m\times n}$ for small values $m,n$.
\begin{example}
For $\epsilon=0.01$ 
the proposed initial weight matrix is computed approximately as follows. 
\begin{align*}
\mathbf{W}^{\epsilon}_{3\times 2}
&=
    \begin{bmatrix}
        -0.0829 & 0.9097\\
        0.9081 & -0.0993\\
        0.4106 & 0.4032
    \end{bmatrix},\\
\mathbf{W}^{\epsilon}_{4\times 3}
&=
    \begin{bmatrix}
 0.6241 &  -0.3762   &  0.6213\\
   -0.3754  &   0.6242   &  0.6217\\
    0.6213  &  0.6209 &  -0.3816\\
    0.2890 &    0.2887 &   0.2862 
\end{bmatrix}.
\end{align*}
\end{example}

\begin{example}
For $\epsilon_1=0.0001$ and $\epsilon_2=0.1$
the proposed initial weight matrix $\mathbf{W}^{\epsilon}_{m\times n}$ is computed approximately as follows. 
\begin{align*}
\mathbf{W}^{\epsilon_1}_{8\times 5}
&=
    \begin{bmatrix}
          \textcolor{red}{0.8581}  & \textcolor{cyan}{-0.1419} &  \textcolor{cyan}{-0.1419}   & \textcolor{cyan}{-0.1419}  &  \textcolor{brown}{0.3581}\\
   \textcolor{cyan}{-0.1419}   & \textcolor{red}{0.8581}  & \textcolor{cyan}{-0.1419}  & \textcolor{cyan}{-0.1419}  &  \textcolor{brown}{0.3581}\\
   \textcolor{cyan}{-0.1419} &  \textcolor{cyan}{-0.1419}  &  \textcolor{red}{0.8581} &  \textcolor{cyan}{-0.1419} &   \textcolor{brown}{0.3581}\\
   \textcolor{cyan}{-0.1419}  & \textcolor{cyan}{-0.1419}  & \textcolor{cyan}{-0.1419}  &  \textcolor{red}{0.8581}  &  \textcolor{brown}{0.3581}\\
   \textcolor{brown}{0.3581}  &  \textcolor{brown}{0.3581}  &  \textcolor{brown}{0.3581}  &  \textcolor{brown}{0.3581} &  \textcolor{blue}{-0.6419}\\
    \textcolor{gray}{0.1581}  &   \textcolor{gray}{0.1581}  &   \textcolor{gray}{0.1581}  &   \textcolor{gray}{0.1581}  &   \textcolor{gray}{0.1581}\\
  \textcolor{gray}{0.1581}  &   \textcolor{gray}{0.1581}  &   \textcolor{gray}{0.1581}  &   \textcolor{gray}{0.1581}  &   \textcolor{gray}{0.1581}\\
  \textcolor{gray}{0.1581}  &   \textcolor{gray}{0.1581}  &   \textcolor{gray}{0.1581}  &   \textcolor{gray}{0.1581}  &   \textcolor{gray}{0.1581}
    \end{bmatrix},\\
\mathbf{W}^{\epsilon_2}_{8\times 5}
&=
    \begin{bmatrix}
   \textcolor{red}{0.8618} &  \textcolor{cyan}{-0.1415} &  \textcolor{cyan}{-0.1413} &  \textcolor{cyan}{-0.1413} &    \textcolor{brown}{0.3524}\\
   \textcolor{cyan}{-0.1341} &   \textcolor{red}{0.8626}  & \textcolor{cyan}{-0.1374} &  \textcolor{cyan}{-0.1374}  &   \textcolor{brown}{0.3563}\\
   \textcolor{cyan}{-0.1342} &  \textcolor{cyan}{-0.1373}  &  \textcolor{red}{0.8626}  & \textcolor{cyan}{-0.1374}  &   \textcolor{brown}{0.3563}\\
   \textcolor{cyan}{-0.1342} &  \textcolor{cyan}{-0.1373}  & \textcolor{cyan}{-0.1373}  &  \textcolor{red}{0.8626} &    \textcolor{brown}{0.3563}\\
    \textcolor{brown}{0.3559} &    \textcolor{brown}{0.3528} &    \textcolor{brown}{0.3528}  &   \textcolor{brown}{0.3528} &  \textcolor{blue}{-0.6533}\\
    \textcolor{gray}{0.1598}  &  \textcolor{gray}{0.1567}  &  \textcolor{gray}{0.1567}  &  \textcolor{gray}{0.1567}  &  \textcolor{gray}{0.1506}\\
    \textcolor{gray}{0.1598}  &  \textcolor{gray}{0.1567}  &  \textcolor{gray}{0.1567}  &  \textcolor{gray}{0.1567}  &  \textcolor{gray}{0.1506}\\
    \textcolor{gray}{0.1598}  &  \textcolor{gray}{0.1567}  &  \textcolor{gray}{0.1567}  &  \textcolor{gray}{0.1567}  &  \textcolor{gray}{0.1506}
    \end{bmatrix}.
\end{align*}
\end{example}

\smallskip

\begin{figure}[t!]
\centering 
\includegraphics[width=0.47\textwidth ]{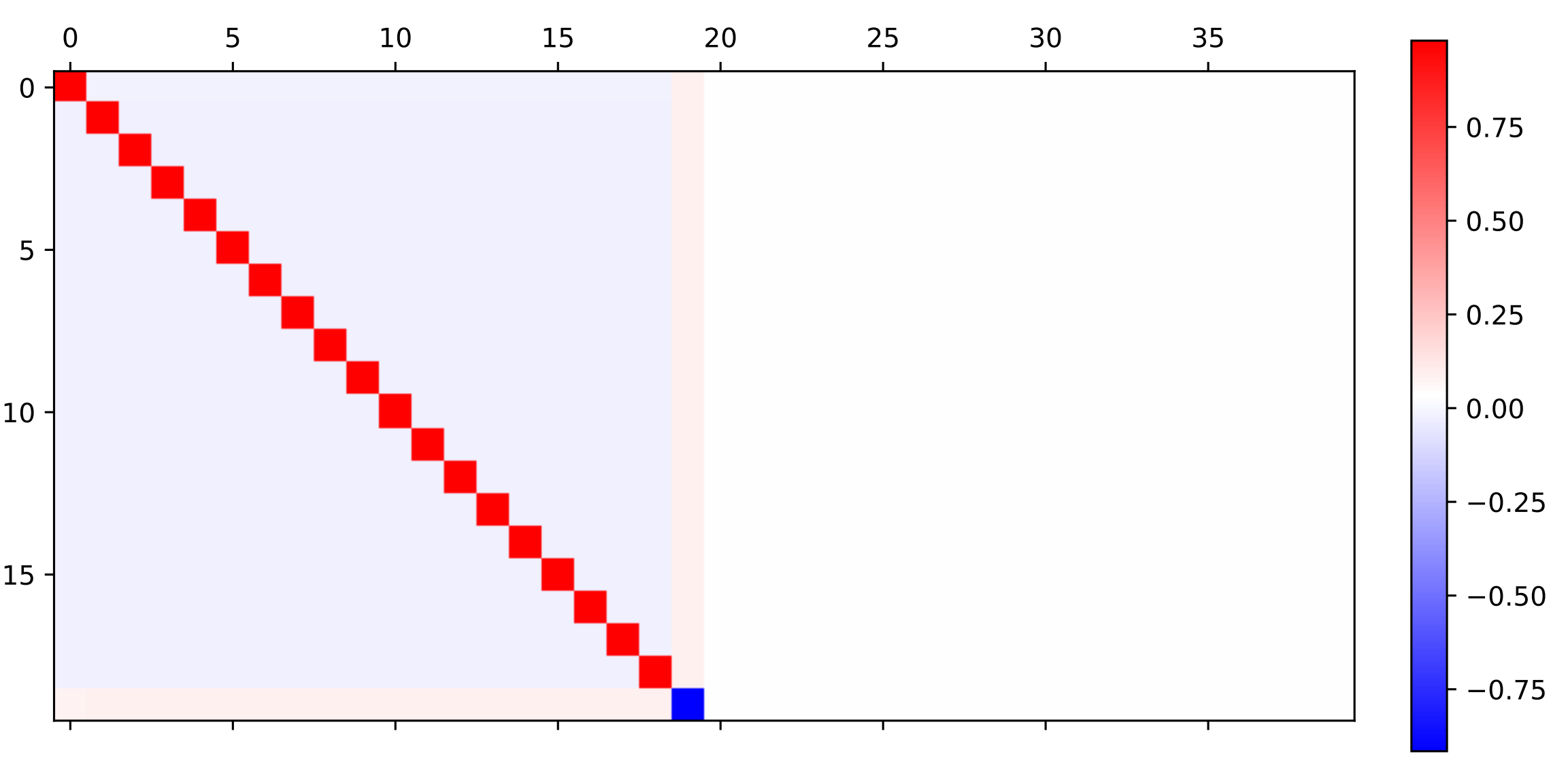}
\caption{A proposed initial weight matrix $\mathbf{W}^{\epsilon}_{20\times 40}$ is shown via heatmap ($\epsilon=0.01$). There exists a certain pattern of values for entries of $\mathbf{W}^{\epsilon}_{20\times 40}$. }
\label{fig:heatmap1}
\end{figure}

\section{Properties of the proposed initial weight matrix}\label{sec:property}
This section presents several key properties of the proposed initial weight matrix, accompanied by rigorous proofs. Initially, Proposition~\ref{prop:orthogonality} establishes the orthogonality of the proposed initial weight matrix $\mathbf{W}^{\epsilon}$. 
Furthermore, Theorem~\ref{thm1} introduces an algorithm designed to reduce the computational complexity of $\mathbf{W}^{\epsilon}$. Proposition~\ref{prop:csum} demonstrates that the column sums and row sums of $\mathbf{W}^{\epsilon}$ are nearly identical. Building on this, Corollary~\ref{cor1} shows that $\mathbf{W}^{\epsilon}\mathbf{x}$ has more positive entries for any vector $\mathbf{x}$ with positive entries.  

\begin{proposition}\label{prop:orthogonality}
Let $\mathbf{q}_1,\ldots,\mathbf{q}_m$ be the column vectors of $\mathbf{Q}_{m\times m}^{\epsilon}$ and $\hat{\mathbf{q}}_1,\ldots,\hat{\mathbf{q}}_n$ be the column vectors of $\mathbf{Q}_{n\times n}^{\epsilon}$. Then it holds that
\begin{itemize}
\item[$($i$)$] if $m=n$,
$$({\mathbf{W}^{\epsilon}_{m\times n }})^T \mathbf{W}^{\epsilon}_{m\times n } = \mathbf{W}^{\epsilon}_{m\times n }({\mathbf{W}^{\epsilon}_{m\times n }})^T 
=   \mathbf{I},$$   

\item[$($ii$)$] if $m>n$,  
\begin{align*}
    ({\mathbf{W}^{\epsilon}_{m\times n }})^T \mathbf{W}^{\epsilon}_{m\times n } 
    &= \mathbf{I}_{n\times n},\\
    \mathbf{W}^{\epsilon}_{m\times n } ({\mathbf{W}^{\epsilon}_{m\times n }})^T
    &= \mathbf{q}_1 \mathbf{q}_1^T + \mathbf{q}_2 \mathbf{q}_2^T + \cdots + \mathbf{q}_{n} \mathbf{q}_n^T,
\end{align*}

\item[$($iii$)$] if $m<n$,
\begin{align*}
    \mathbf{W}^{\epsilon}_{m\times n } ({\mathbf{W}^{\epsilon}_{m\times n }})^T 
    &= \mathbf{I}_{m\times m}\\
   ({\mathbf{W}^{\epsilon}_{m\times n }})^T \mathbf{W}^{\epsilon}_{m\times n } 
   &= \hat{\mathbf{q}}_1 \hat{\mathbf{q}}_1^T + \hat{\mathbf{q}}_2 \hat{\mathbf{q}}_2^T + \cdots + \hat{\mathbf{q}}_{m} \hat{\mathbf{q}}_m^T.
\end{align*}

\item[$($iv$)$] $(\mathbf{W}^{\epsilon}_{m\times n})^T=\mathbf{W}^{\epsilon}_{n\times m}$ for all $m,n$.
\end{itemize}
\label{prop:orthogonal}
\end{proposition}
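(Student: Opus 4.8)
The plan is to work directly from the defining factorization $\mathbf{W}^{\epsilon}_{m\times n}=\mathbf{Q}^{\epsilon}_{m\times m}\,\mathbf{I}_{m\times n}\,(\mathbf{Q}^{\epsilon}_{n\times n})^{T}$, combined with two elementary facts: (a) each $\mathbf{Q}^{\epsilon}_{k\times k}$ is a genuine orthogonal matrix, so $\mathbf{Q}^{\epsilon}_{k\times k}(\mathbf{Q}^{\epsilon}_{k\times k})^{T}=(\mathbf{Q}^{\epsilon}_{k\times k})^{T}\mathbf{Q}^{\epsilon}_{k\times k}=\mathbf{I}_{k}$; and (b) the rectangular identity satisfies $\mathbf{I}_{m\times n}^{T}=\mathbf{I}_{n\times m}$, while $\mathbf{I}_{m\times n}^{T}\mathbf{I}_{m\times n}=\operatorname{diag}(\mathbf{I}_{s},\mathbf{0})\in\mathbb{R}^{n\times n}$ and $\mathbf{I}_{m\times n}\mathbf{I}_{m\times n}^{T}=\operatorname{diag}(\mathbf{I}_{s},\mathbf{0})\in\mathbb{R}^{m\times m}$, where $s=\min\{m,n\}$. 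I will also record the routine identity $\mathbf{Q}\operatorname{diag}(d_{1},\ldots,d_{k})\mathbf{Q}^{T}=\sum_{i}d_{i}\mathbf{q}_{i}\mathbf{q}_{i}^{T}$ for a square orthogonal $\mathbf{Q}=[\mathbf{q}_{1}\,|\,\cdots\,|\,\mathbf{q}_{k}]$, which is exactly what converts the degenerate cases into sums of rank-one outer products.

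First I would compute $(\mathbf{W}^{\epsilon}_{m\times n})^{T}\mathbf{W}^{\epsilon}_{m\times n}$: substituting the factorization and cancelling $\mathbf{Q}^{\epsilon}_{m\times m}(\mathbf{Q}^{\epsilon}_{m\times m})^{T}=\mathbf{I}_{m}$ collapses it to $\mathbf{Q}^{\epsilon}_{n\times n}\bigl(\mathbf{I}_{m\times n}^{T}\mathbf{I}_{m\times n}\bigr)(\mathbf{Q}^{\epsilon}_{n\times n})^{T}$. If $m\ge n$ the inner factor is $\mathbf{I}_{n}$, and a second cancellation gives $\mathbf{I}_{n}$, which covers the relevant half of (i) and the first identity of (ii); if $m<n$ the inner factor is $\operatorname{diag}(\mathbf{I}_{m},\mathbf{0})$, and the outer-product expansion above yields $\sum_{i=1}^{m}\hat{\mathbf{q}}_{i}\hat{\mathbf{q}}_{i}^{T}$, the second identity of (iii). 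Symmetrically, $\mathbf{W}^{\epsilon}_{m\times n}(\mathbf{W}^{\epsilon}_{m\times n})^{T}$ collapses to $\mathbf{Q}^{\epsilon}_{m\times m}\bigl(\mathbf{I}_{m\times n}\mathbf{I}_{m\times n}^{T}\bigr)(\mathbf{Q}^{\epsilon}_{m\times m})^{T}$, equal to $\mathbf{I}_{m}$ when $m\le n$ and to $\sum_{i=1}^{n}\mathbf{q}_{i}\mathbf{q}_{i}^{T}$ when $m>n$. Collecting the four sign cases proves (i), (ii), and (iii). Part (iv) is then immediate: $(\mathbf{W}^{\epsilon}_{m\times n})^{T}=\mathbf{Q}^{\epsilon}_{n\times n}\mathbf{I}_{m\times n}^{T}(\mathbf{Q}^{\epsilon}_{m\times m})^{T}=\mathbf{Q}^{\epsilon}_{n\times n}\mathbf{I}_{n\times m}(\mathbf{Q}^{\epsilon}_{m\times m})^{T}=\mathbf{W}^{\epsilon}_{n\times m}$.

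I do not expect a genuine obstacle; the proof is essentially bookkeeping. The two points that require care are (1) keeping straight which rectangular-identity Gram matrix appears where — the $n\times n$ matrix $\mathbf{I}_{m\times n}^{T}\mathbf{I}_{m\times n}$ in the first product versus the $m\times m$ matrix $\mathbf{I}_{m\times n}\mathbf{I}_{m\times n}^{T}$ in the second — and (2) invoking the diagonalization identity $\mathbf{Q}D\mathbf{Q}^{T}=\sum_{i}d_{i}\mathbf{q}_{i}\mathbf{q}_{i}^{T}$ so that the rank-deficient cases produce exactly the stated truncated sums $\sum_{i=1}^{s}\mathbf{q}_{i}\mathbf{q}_{i}^{T}$ and $\sum_{i=1}^{s}\hat{\mathbf{q}}_{i}\hat{\mathbf{q}}_{i}^{T}$. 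For part (iv) it is also worth stating explicitly that the orthogonal factor from the QR step is a fixed function of the matrix size (and of $\epsilon$), so that the matrix $\mathbf{Q}^{\epsilon}_{n\times n}$ occurring in $\mathbf{W}^{\epsilon}_{m\times n}$ is literally the same as the one occurring in $\mathbf{W}^{\epsilon}_{n\times m}$; otherwise the transpose identity would not be meaningful.
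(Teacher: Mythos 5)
Your proof is correct: the paper itself gives no argument for this proposition (it states ``It is easy to verify it. The proof is left to the reader.''), and your direct computation from the factorization $\mathbf{W}^{\epsilon}_{m\times n}=\mathbf{Q}^{\epsilon}_{m\times m}\mathbf{I}_{m\times n}(\mathbf{Q}^{\epsilon}_{n\times n})^{T}$, using $\mathbf{I}_{m\times n}^{T}\mathbf{I}_{m\times n}$ and $\mathbf{I}_{m\times n}\mathbf{I}_{m\times n}^{T}$ together with the spectral-type expansion $\mathbf{Q}\,\mathrm{diag}(\mathbf{I}_{s},\mathbf{0})\,\mathbf{Q}^{T}=\sum_{i=1}^{s}\mathbf{q}_{i}\mathbf{q}_{i}^{T}$, is exactly the intended verification. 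Your added remark for part (iv) — that $\mathbf{Q}^{\epsilon}_{n\times n}$ is a fixed, deterministic function of $n$ and $\epsilon$, so the same factor appears in $\mathbf{W}^{\epsilon}_{m\times n}$ and $\mathbf{W}^{\epsilon}_{n\times m}$ — is a worthwhile point the paper leaves implicit.
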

It is easy to verify it. The proof is left to the reader.

\bigskip
Now, we introduce an algorithm that can reduce the computational complexity of calculating $\mathbf{W}^{\epsilon}$. Recall that for given vectors $\mathbf{u}$ and $\mathbf{v}$, the vector projection of $\mathbf{v}$ onto $\mathbf{u}$ is defined as 
\begin{equation*}
\text{proj}_{\mathbf{u}}{\mathbf{v}} := \frac{\left \langle \mathbf{u}, \mathbf{v} \right \rangle}{\left \langle \mathbf{u}, \mathbf{u} \right \rangle} \mathbf{u}.
\end{equation*}

QR decomposition is performed as follows.
For a $n\times n$ matrix $\mathbf{A}=[\mathbf{a}_1 \cdots \mathbf{a}_n]$, the QR decomposition is defined as 
\begin{equation*}
    \mathbf{A}=\mathbf{Q}\mathbf{R} \quad \text{($\mathbf{Q}$: orthogonal matrix,  $\mathbf{R}$: upper  triangular matrix)},
\end{equation*}
where
\begin{align*}
\mathbf{Q}&=
\underbrace{
\begin{bmatrix}
      |       &    |      &          &    |   \\ 
      \mathbf{q}_{1}   &   \mathbf{q}_{2}   &  \cdots  & \mathbf{q}_{n}   \\
      |       &    |      &          &    |  
\end{bmatrix} }_{\text{orthogonal matrix}},\\
\mathbf{R}&=
\underbrace{
\begin{bmatrix}
\mathbf{q}_{1}^{T} \cdot \mathbf{a}_{1}  &    \mathbf{q}_{1}^T \cdot \mathbf{a}_{2}   & \cdots  &   \mathbf{q}_{1}^T \cdot \mathbf{a}_{n}   \\ 
0                      &    \mathbf{q}_{2}^T \cdot \mathbf{a}_{2}   & \cdots  &   \mathbf{q}_{2}^T \cdot \mathbf{a}_{n}    \\
\vdots                 &      \vdots              & \ddots  &   \vdots                \\
0                      &    0                     & \cdots  &   \mathbf{q}_{n}^T \cdot \mathbf{a}_{n} 
\end{bmatrix} }_{\text{upper triangular matrix}}.
\end{align*}
Here, the matrices $\mathbf{Q}$ and $\mathbf{R}$ are generated by the Gram-Schmidt process for the full column rank matrix $\mathbf{A}=[\mathbf{a}_1 \cdots \mathbf{a}_n]$.

\begin{align*}
& \mathbf{u}_{1} = \mathbf{a}_{1}, \quad  &\mathbf{q}_{1}= \dfrac{\mathbf{u}_{1}}{\lVert \mathbf{u}_{1} \rVert},\\
& \mathbf{u}_{2} = \mathbf{a}_{2} - \text{proj}_{\mathbf{u}_1}{\mathbf{a}_2}, 
\quad  &\mathbf{q}_{2}= \dfrac{\mathbf{u}_{2}}{\lVert \mathbf{u}_{2} \rVert},\\
& \mathbf{u}_{3} = \mathbf{a}_{3} - \text{proj}_{\mathbf{u}_1}{\mathbf{a}_3} - \text{proj}_{\mathbf{u}_2}{\mathbf{a}_3}, 
\quad  &\mathbf{q}_{3}= \dfrac{\mathbf{u}_{3}}{\lVert \mathbf{u}_{3} \rVert},\\
& \qquad \qquad  \qquad \qquad \vdots\\
& \mathbf{u}_{n} = \mathbf{a}_{n} - \sum_{j=1}^{n-1}\text{proj}_{\mathbf{u}_j}{\mathbf{a}_n}, \quad  &\mathbf{q}_{n}= \dfrac{\mathbf{u}_{n}}{\lVert \mathbf{u}_{n} \rVert}.
\end{align*}
\medskip

From the Gram-Schmidt
process, we have the following iteration to construct $\mathbf{Q}_{m\times m}^{\epsilon}$. 
\begin{theorem}\label{thm1}
Let $\{\mathbf{u}_j\}_{1\leq j\leq m}$ be defined by
\begin{align*}
\mathbf{u}_1
&= \mathbf{1}+\epsilon \mathbf{e}_{1} \in \mathbb{R}^m, \\
    \mathbf{u}_j
    &= \left(  1- \frac{\left \langle \mathbf{u}_{j-1}, \mathbf{1}+\epsilon \mathbf{e}_{j} \right \rangle}{\left \langle \mathbf{u}_{j-1}, \mathbf{u}_{j-1} \right \rangle} \right) \mathbf{u}_{j-1}  +\epsilon (\mathbf{e}_{j}-\mathbf{e}_{j-1}) \in \mathbb{R}^m 
\end{align*}
for each $j=2,\ldots,m$.
Then $j$-th column vector of $\mathbf{Q}^{\epsilon}_{m\times m}$ is expressed as $\dfrac{1}{\|\mathbf{u}_j\|}\mathbf{u}_j$.
\end{theorem}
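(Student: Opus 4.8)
The plan is to run the Gram--Schmidt procedure displayed above on the columns of $\mathbf{J}^{\epsilon}$ and to show that the un-normalized vectors it produces are precisely the $\mathbf{u}_j$ of the statement, so that the $j$-th column of $\mathbf{Q}^{\epsilon}_{m\times m}$ is $\mathbf{u}_j/\|\mathbf{u}_j\|$. Write $\mathbf{J}^{\epsilon}=[\mathbf{a}_1\ \cdots\ \mathbf{a}_m]$ with $\mathbf{a}_j=\mathbf{1}+\epsilon\mathbf{e}_j$. Since $\mathbf{J}^{\epsilon}$ is positive definite (Remark), it has full column rank and the process is well defined; moreover $\langle\mathbf{u}_i,\mathbf{a}_i\rangle=\|\mathbf{u}_i\|^2>0$, so the orthogonal factor produced this way is indeed the $\mathbf{Q}^{\epsilon}_{m\times m}$ of a QR decomposition. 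The base case $\mathbf{u}_1=\mathbf{a}_1=\mathbf{1}+\epsilon\mathbf{e}_1$ is immediate, and for $j\ge 2$ the one structural identity I would exploit is the telescoping relation $\mathbf{a}_j=\mathbf{a}_{j-1}+\epsilon(\mathbf{e}_j-\mathbf{e}_{j-1})$.

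I would proceed by induction on $j$, carrying along the standard Gram--Schmidt span identity $\operatorname{span}\{\mathbf{u}_1,\dots,\mathbf{u}_k\}=\operatorname{span}\{\mathbf{a}_1,\dots,\mathbf{a}_k\}=:V_k$. Two observations then collapse the Gram--Schmidt sum $\mathbf{u}_j=\mathbf{a}_j-\sum_{k=1}^{j-1}\operatorname{proj}_{\mathbf{u}_k}\mathbf{a}_j$. First, $\mathbf{a}_{j-1}\in V_{j-1}$ and $\{\mathbf{u}_1,\dots,\mathbf{u}_{j-1}\}$ is an orthogonal basis of $V_{j-1}$, so $\sum_{k=1}^{j-1}\operatorname{proj}_{\mathbf{u}_k}\mathbf{a}_{j-1}=\mathbf{a}_{j-1}$; hence the $\mathbf{a}_{j-1}$-part contributed by the telescoping relation cancels entirely. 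Second, for $k\le j-2$ a one-line computation gives $\langle\mathbf{e}_j-\mathbf{e}_{j-1},\mathbf{a}_k\rangle=\langle\mathbf{e}_j-\mathbf{e}_{j-1},\mathbf{1}+\epsilon\mathbf{e}_k\rangle=0$ (the $\mathbf{1}$-parts cancel and $k<j-1<j$), so $\mathbf{e}_j-\mathbf{e}_{j-1}\perp V_{j-2}$ and therefore $\operatorname{proj}_{\mathbf{u}_k}(\mathbf{e}_j-\mathbf{e}_{j-1})=\mathbf{0}$ for every $k\le j-2$. Substituting the telescoping relation and applying these two facts leaves
\begin{align*}
\mathbf{u}_j
&=\epsilon(\mathbf{e}_j-\mathbf{e}_{j-1})-\epsilon\operatorname{proj}_{\mathbf{u}_{j-1}}(\mathbf{e}_j-\mathbf{e}_{j-1})\\
&=\epsilon(\mathbf{e}_j-\mathbf{e}_{j-1})-\frac{\epsilon\langle\mathbf{u}_{j-1},\mathbf{e}_j-\mathbf{e}_{j-1}\rangle}{\langle\mathbf{u}_{j-1},\mathbf{u}_{j-1}\rangle}\,\mathbf{u}_{j-1}.
\end{align*}

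It then remains to recast the coefficient of $\mathbf{u}_{j-1}$. Because $\mathbf{u}_{j-1}\perp V_{j-2}$ while $\mathbf{a}_{j-1}-\mathbf{u}_{j-1}\in V_{j-2}$, the standard identity $\langle\mathbf{u}_{j-1},\mathbf{u}_{j-1}\rangle=\langle\mathbf{u}_{j-1},\mathbf{a}_{j-1}\rangle=\langle\mathbf{u}_{j-1},\mathbf{1}+\epsilon\mathbf{e}_{j-1}\rangle$ holds; subtracting $\langle\mathbf{u}_{j-1},\mathbf{1}+\epsilon\mathbf{e}_j\rangle$ from both sides yields $-\epsilon\langle\mathbf{u}_{j-1},\mathbf{e}_j-\mathbf{e}_{j-1}\rangle=\langle\mathbf{u}_{j-1},\mathbf{u}_{j-1}\rangle-\langle\mathbf{u}_{j-1},\mathbf{1}+\epsilon\mathbf{e}_j\rangle$, so the coefficient of $\mathbf{u}_{j-1}$ equals $1-\langle\mathbf{u}_{j-1},\mathbf{1}+\epsilon\mathbf{e}_j\rangle/\langle\mathbf{u}_{j-1},\mathbf{u}_{j-1}\rangle$, which is exactly the asserted recursion; normalizing, $\mathbf{q}_j=\mathbf{u}_j/\|\mathbf{u}_j\|$. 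The induction step closes since $V_j=\operatorname{span}\{\mathbf{u}_1,\dots,\mathbf{u}_j\}$ is just span-preservation of Gram--Schmidt. I expect the only genuinely non-mechanical step to be the bookkeeping of which projection terms vanish, that is, proving $\mathbf{e}_j-\mathbf{e}_{j-1}\perp\operatorname{span}\{\mathbf{u}_1,\dots,\mathbf{u}_{j-2}\}$; once the span description $V_{j-2}=\operatorname{span}\{\mathbf{a}_1,\dots,\mathbf{a}_{j-2}\}$ is in hand this reduces to the short inner-product computation above, and the rest is routine algebra.
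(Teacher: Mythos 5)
Your proposal is correct and takes essentially the same route as the paper: run Gram--Schmidt on the columns $\mathbf{a}_j=\mathbf{1}+\epsilon\mathbf{e}_j$ of $\mathbf{J}^{\epsilon}$, use the telescoping relation $\mathbf{a}_j=\mathbf{a}_{j-1}+\epsilon(\mathbf{e}_j-\mathbf{e}_{j-1})$, and exploit that $\mathbf{e}_j-\mathbf{e}_{j-1}$ is orthogonal to all earlier $\mathbf{u}_k$. The only difference is bookkeeping: you cancel $\mathbf{a}_{j-1}$ entirely via its expansion in $\operatorname{span}\{\mathbf{u}_1,\dots,\mathbf{u}_{j-1}\}$ and then recast the coefficient using $\|\mathbf{u}_{j-1}\|^2=\langle\mathbf{u}_{j-1},\mathbf{a}_{j-1}\rangle$, whereas the paper keeps the partial projection sum and recognizes it as $\mathbf{u}_{j-1}$, justifying the vanishing cross terms by the identical trailing entries of the $\mathbf{u}_j$ rather than by your (slightly more explicit) span argument.
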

\begin{proof}
Let $\mathbf{a}_j$ be the $j$-th column vector of $ \mathbf{J}^{\epsilon}$, i.e.,
\begin{equation*}
    \mathbf{J}^{\epsilon}=
\begin{bmatrix}
    \mathbf{a}_1 & \ldots &  \mathbf{a}_n
\end{bmatrix}
=
\left[
\begin{bmatrix}
1+\epsilon \\
1\\
\vdots\\
1
\end{bmatrix},
\begin{bmatrix}
1\\
1+\epsilon \\
\vdots\\
1
\end{bmatrix},
\cdots,
\begin{bmatrix}
1\\
1\\
\vdots\\
1+\epsilon
\end{bmatrix}
\right].
\end{equation*}
Now we apply the Gram-Schmidt process to matrix $\mathbf{J}^{\epsilon}$.
Then the 1st orthogonal vector is given as
\begin{align*}
\mathbf{u}_1 = \mathbf{a}_1 = 
\begin{bmatrix}
1+\epsilon \\
1\\
\vdots\\
1
\end{bmatrix}.
\end{align*}
Next, the 2nd orthogonal vector is constructed as
\begin{align*}
\mathbf{u}_{2} 
&= \mathbf{a}_{2} - \text{proj}_{\mathbf{u}_1}{\mathbf{a}_2}\\ 
&= \mathbf{a}_{2} - \frac{\left \langle \mathbf{u}_1, \mathbf{a}_2 \right \rangle}{\left \langle \mathbf{u}_1, \mathbf{u}_1 \right \rangle}\mathbf{u}_1 \\
&= \mathbf{a}_1 + \epsilon (\mathbf{e}_2-\mathbf{e}_1)
- \frac{\left \langle \mathbf{u}_1, \mathbf{a}_2 \right \rangle}{\left \langle \mathbf{u}_1, \mathbf{u}_1 \right \rangle}\mathbf{u}_1\\
&=\left( 1-\frac{\left \langle \mathbf{u}_1, \mathbf{a}_2 \right \rangle}{\left \langle \mathbf{u}_1, \mathbf{u}_1 \right \rangle} \right) \mathbf{u}_1
+\epsilon (\mathbf{e}_2-\mathbf{e}_1).
\end{align*}

\begin{align*}
\mathbf{u}_{k} 
&= \mathbf{a}_{k} - \sum_{j=1}^{k-1}\text{proj}_{\mathbf{u}_j}{\mathbf{a}_n} \\ 
&= \mathbf{a}_{k} - \sum_{j=1}^{k-1} \frac{\left \langle \mathbf{u}_j, \mathbf{a}_k \right \rangle}{\left \langle \mathbf{u}_j, \mathbf{u}_j \right \rangle}\mathbf{u}_j\\
&=\mathbf{a}_{k-1}+\epsilon (\mathbf{e}_{k}-\mathbf{e}_{k-1}) 
-\sum_{j=1}^{k-1} \frac{\left \langle \mathbf{u}_j, \mathbf{a}_k \right \rangle}{\left \langle \mathbf{u}_j, \mathbf{u}_j \right \rangle}\mathbf{u}_j\\
&=\mathbf{a}_{k-1}
-\sum_{j=1}^{k-2} \frac{\left \langle \mathbf{u}_j, \mathbf{a}_k \right \rangle}{\left \langle \mathbf{u}_j, \mathbf{u}_j \right \rangle}\mathbf{u}_j
- \frac{\left \langle \mathbf{u}_{k-1}, \mathbf{a}_{k} \right \rangle}{\left \langle \mathbf{u}_{k-1}, \mathbf{u}_{k-1} \right \rangle}\mathbf{u}_{k-1} +\epsilon (\mathbf{e}_{k}-\mathbf{e}_{k-1})\\ 
&=\mathbf{a}_{k-1}
-\sum_{j=1}^{k-2} \frac{\left \langle \mathbf{u}_j, \mathbf{a}_{k-1} \right \rangle}{\left \langle \mathbf{u}_j, \mathbf{u}_j \right \rangle}\mathbf{u}_j
- \frac{\left \langle \mathbf{u}_{k-1}, \mathbf{a}_{k} \right \rangle}{\left \langle \mathbf{u}_{k-1}, \mathbf{u}_{k-1} \right \rangle}\mathbf{u}_{k-1} +\epsilon (\mathbf{e}_{k}-\mathbf{e}_{k-1})\\ 
&=\mathbf{u}_{k-1}
- \frac{\left \langle \mathbf{u}_{k-1}, \mathbf{a}_{k} \right \rangle}{\left \langle \mathbf{u}_{k-1}, \mathbf{u}_{k-1} \right \rangle}\mathbf{u}_{k-1} +\epsilon (\mathbf{e}_{k}-\mathbf{e}_{k-1}). 
\end{align*}
The last second equality holds from the fact that all $i$-th ($i\geq j$) entries of $\mathbf{u}_{j-1}$ are identical.   
\end{proof}

The iteration in Theorem~\ref{thm1} can reduce the computational complexity of \(\mathbf{W}^{\epsilon}\). Through this theorem, the computational complexity of QR decomposition is reduced from \(\mathcal{O}(n^3)\) to \(\mathcal{O}(n^2)\). Furthermore, since the proposed initialization method is fully deterministic, it is sufficient to compute the matrix only once for a given epsilon $\epsilon$ and dimension of the matrix $m,n$, allowing it reused. 
Before demonstrating that the proposed initial weights have nearly equal column sums and row sums, we first establish the properties of $\mathbf{Q}^{\epsilon}$. 

\bigskip

Next, we prove the bound on the column sum of $\mathbf{Q}^{\epsilon}$ in Lemma~\ref{lemma:sum_Q1}, and from Lemma~\ref{lemma:sum_Q1}, we demonstrate Proposition~\ref{prop:csum}: The entry sum of each column (resp. row) vector of $\mathbf{W}^{\epsilon}$ is almost same.
\begin{lemma}
Let $\mathbf{a}_j$ be the $j$th column vector of $\mathbf{J}^{\epsilon}$ for $j=1,\ldots, m$.
Then
\begin{equation*}
\left \langle \frac{\mathbf{a}_j}{\| \mathbf{a}_j\|}, \frac{\mathbf{1}}{\|\mathbf{1} \|} \right\rangle = \frac{m+\epsilon}{\sqrt{m}\sqrt{m+2\epsilon+\epsilon^2}} = 1-\frac{m-1}{2m^2}\epsilon^2 +\mathcal{O}(\epsilon^3).
\end{equation*}
Furthermore, if $\epsilon\to 0$, then 
$$\left \langle \frac{\mathbf{a}_j}{\| \mathbf{a}_j\|}, \frac{\mathbf{1}}{\|\mathbf{1} \|} \right\rangle \to 1,$$
provided that $m$ is fixed.
\end{lemma}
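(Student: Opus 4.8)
The plan is to unwind the definitions and then Taylor-expand. The $j$-th column of $\mathbf{J}^{\epsilon} = \mathbf{J}_m + \epsilon\mathbf{I}_m$ is $\mathbf{a}_j = \mathbf{1} + \epsilon\mathbf{e}_j$, so everything reduces to three elementary computations: $\langle \mathbf{a}_j, \mathbf{1}\rangle = \langle \mathbf{1},\mathbf{1}\rangle + \epsilon\langle \mathbf{e}_j,\mathbf{1}\rangle = m + \epsilon$, $\|\mathbf{1}\|^2 = m$, and $\|\mathbf{a}_j\|^2 = \langle \mathbf{1}, \mathbf{1}\rangle + 2\epsilon\langle \mathbf{e}_j, \mathbf{1}\rangle + \epsilon^2\|\mathbf{e}_j\|^2 = m + 2\epsilon + \epsilon^2$. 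Substituting these into the normalized inner product yields the first equality $\dfrac{m+\epsilon}{\sqrt m\,\sqrt{m+2\epsilon+\epsilon^2}}$; note this value is independent of $j$, as expected from the symmetry of $\mathbf{J}^{\epsilon}$.

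For the asymptotics I would avoid fractional-power expansions by squaring first. Set $g(\epsilon) := \dfrac{(m+\epsilon)^2}{m(m+2\epsilon+\epsilon^2)}$, so that the quantity of interest equals $\sqrt{g(\epsilon)}$ (the numerator $m+\epsilon$ being positive). A direct subtraction gives $g(\epsilon) - 1 = \dfrac{(m+\epsilon)^2 - m(m+2\epsilon+\epsilon^2)}{m(m+2\epsilon+\epsilon^2)} = \dfrac{-(m-1)\epsilon^2}{m(m+2\epsilon+\epsilon^2)}$, which is $-\dfrac{m-1}{m^2}\epsilon^2 + \mathcal{O}(\epsilon^3)$ after expanding the geometric factor $\dfrac{1}{1+2\epsilon/m+\epsilon^2/m}$. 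Then $\sqrt{g(\epsilon)} = 1 + \tfrac12\big(g(\epsilon)-1\big) + \mathcal{O}\!\big((g(\epsilon)-1)^2\big) = 1 - \dfrac{m-1}{2m^2}\epsilon^2 + \mathcal{O}(\epsilon^3)$, since $(g(\epsilon)-1)^2 = \mathcal{O}(\epsilon^4)$. (Equivalently, one can pull a factor of $m$ out of numerator and denominator and apply $(1+x)^{-1/2} = 1 - \tfrac12 x + \tfrac38 x^2 - \cdots$ with $x = 2\epsilon/m + \epsilon^2/m$, checking that the $\mathcal{O}(\epsilon)$ contribution of $x$ cancels against the $1+\epsilon/m$ prefactor and that the $\tfrac38 x^2$ term is retained.)

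The final limit is then immediate: with $m$ fixed, letting $\epsilon \to 0$ sends $\dfrac{m+\epsilon}{\sqrt m\,\sqrt{m+2\epsilon+\epsilon^2}} \to \dfrac{m}{\sqrt m\cdot\sqrt m} = 1$, which can also just be read off the expansion. There is no genuine obstacle here — the only thing requiring care is the order-$\epsilon^2$ bookkeeping, and the ``square first'' trick reduces even that to a one-line computation.
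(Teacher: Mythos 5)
Your proof is correct: the identification $\mathbf{a}_j=\mathbf{1}+\epsilon\mathbf{e}_j$, the three inner-product computations, and the order-$\epsilon^2$ expansion (via squaring first) all check out, and they constitute exactly the elementary calculation the paper has in mind, since it states this lemma without giving any proof. Nothing further is needed.
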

\smallskip

\begin{lemma}\label{lemma:sum_Q1}
Let $\mathbf{q}_1,\ldots,\mathbf{q}_m$ be the column vectors of $\mathbf{Q}_{m\times m}^{\epsilon}$. Then it holds that
\begin{align*}
\left\langle \mathbf{q}_1, \mathbf{1} \right\rangle
        &= \frac{m+\epsilon}{\sqrt{\epsilon^2+2\epsilon+m}},\\
       \left| \left\langle \mathbf{q}_j, \mathbf{1} \right\rangle  \right|
       &\leq \epsilon \quad \text{for all }j=2,\ldots, m.
\end{align*}
\end{lemma}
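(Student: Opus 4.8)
The plan is to exploit the explicit recursion for the Gram--Schmidt vectors $\mathbf{u}_j$ from Theorem~\ref{thm1} and track the quantity $\langle \mathbf{u}_j, \mathbf{1}\rangle$ along the iteration. The first identity is immediate: by the Remark, $\mathbf{1}$ is the eigenvector of $\mathbf{J}^{\epsilon}=\mathbf{J}_m+\epsilon\mathbf{I}_m$ for the eigenvalue $m+\epsilon$, so $\mathbf{u}_1=\mathbf{a}_1=\mathbf{1}+\epsilon\mathbf{e}_1$ gives $\langle\mathbf{u}_1,\mathbf{1}\rangle=m+\epsilon$ and $\|\mathbf{u}_1\|^2=\langle\mathbf{1}+\epsilon\mathbf{e}_1,\mathbf{1}+\epsilon\mathbf{e}_1\rangle=m+2\epsilon+\epsilon^2$, hence $\langle\mathbf{q}_1,\mathbf{1}\rangle=\langle\mathbf{u}_1,\mathbf{1}\rangle/\|\mathbf{u}_1\|=(m+\epsilon)/\sqrt{\epsilon^2+2\epsilon+m}$, which is the claimed value. (This is also just the previous Lemma applied to $\mathbf{a}_1$, since $\mathbf{a}_1/\|\mathbf a_1\|=\mathbf q_1$.)

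For the bound on $j\ge 2$, the key observation is that $\mathbf{q}_1,\ldots,\mathbf{q}_m$ are orthonormal, so $\{\mathbf{u}_j\}$ is an orthogonal set. Since $\langle \mathbf{u}_1,\mathbf{1}\rangle \neq 0$, the vector $\mathbf{u}_1$ is not orthogonal to $\mathbf{1}$, but the crucial point is to control how much of $\mathbf{1}$ leaks into the later $\mathbf{u}_j$. Write the recursion from Theorem~\ref{thm1} as $\mathbf{u}_j=c_j\,\mathbf{u}_{j-1}+\epsilon(\mathbf{e}_j-\mathbf{e}_{j-1})$ with $c_j=1-\langle\mathbf{u}_{j-1},\mathbf{1}+\epsilon\mathbf{e}_j\rangle/\langle\mathbf{u}_{j-1},\mathbf{u}_{j-1}\rangle$. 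Taking inner product with $\mathbf{1}$ and using $\langle\mathbf{e}_j-\mathbf{e}_{j-1},\mathbf{1}\rangle=0$ gives the clean recursion
\begin{equation*}
\langle\mathbf{u}_j,\mathbf{1}\rangle = c_j\,\langle\mathbf{u}_{j-1},\mathbf{1}\rangle
= \langle\mathbf{u}_{j-1},\mathbf{1}\rangle - \frac{\langle\mathbf{u}_{j-1},\mathbf{1}\rangle\,\langle\mathbf{u}_{j-1},\mathbf{1}+\epsilon\mathbf{e}_j\rangle}{\langle\mathbf{u}_{j-1},\mathbf{u}_{j-1}\rangle}.
\end{equation*}
Then I would prove by induction the structural fact (already used at the end of the proof of Theorem~\ref{thm1}) that for each $j$ the entries of $\mathbf{u}_j$ in coordinates $j, j+1,\ldots,m$ are all equal to a common value, call it $t_j$, and in particular $\langle \mathbf{u}_{j-1},\mathbf{e}_j\rangle = t_{j-1}$. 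Combining this with the identity $\langle\mathbf{u}_{j-1},\mathbf{1}\rangle$ (which decomposes into the first $j-2$ coordinates plus $(m-j+2)\,t_{j-1}$) should let me express $\langle\mathbf{u}_j,\mathbf{1}\rangle$ in closed form; I expect it to telescope to something like $\langle\mathbf{u}_j,\mathbf{1}\rangle=\epsilon\cdot(\text{something bounded by }1)$, and correspondingly $\|\mathbf{u}_j\|\ge$ the contribution of its equal tail coordinates, giving $|\langle\mathbf{q}_j,\mathbf{1}\rangle|=|\langle\mathbf{u}_j,\mathbf{1}\rangle|/\|\mathbf{u}_j\|\le\epsilon$.

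The main obstacle is getting a clean, rigorous closed form (or a clean monotone bound) for the scalars $\langle\mathbf{u}_j,\mathbf{1}\rangle$, $\|\mathbf{u}_j\|^2$, and the tail value $t_j$ simultaneously; the recursion couples all three, and a naive bound on $c_j$ is not enough because $c_j$ need not be small. The cleanest route is probably to set up a joint induction on the triple $(\langle\mathbf{u}_j,\mathbf{1}\rangle,\ \|\mathbf{u}_j\|^2,\ t_j)$ — guessing the exact formulas from the $m=2,3$ cases in the Examples — and then verify the inductive step is a purely algebraic identity. An alternative, slicker argument: since $\mathbf{q}_1=\mathbf{1}/\sqrt{m}+O(\epsilon)$ and $\mathbf{q}_2,\ldots,\mathbf{q}_m\perp\mathbf{q}_1$, each $\mathbf{q}_j$ ($j\ge2$) is within $O(\epsilon)$ of the hyperplane $\mathbf{1}^\perp$, so $\langle\mathbf{q}_j,\mathbf{1}\rangle=\langle\mathbf{q}_j,\mathbf{1}-\sqrt{m}\,\mathbf{q}_1\rangle$ has norm at most $\sqrt{m}\,\|\mathbf{1}/\sqrt m-\mathbf{q}_1\|$, and one bounds $\|\mathbf{1}/\sqrt m-\mathbf{q}_1\|$ using the previous Lemma; making the constant come out to exactly $\epsilon$ (not just $O(\epsilon)$) will again require the exact expansion, so the explicit-recursion approach is the safer one to write out in full.
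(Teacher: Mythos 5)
Your first identity is fine: $\mathbf{q}_1=\mathbf{a}_1/\|\mathbf{a}_1\|$ with $\langle\mathbf{a}_1,\mathbf{1}\rangle=m+\epsilon$ and $\|\mathbf{a}_1\|^2=m+2\epsilon+\epsilon^2$ gives exactly the claimed value, matching the paper. The problem is the second claim, $|\langle\mathbf{q}_j,\mathbf{1}\rangle|\leq\epsilon$ for $j\geq 2$, which is the entire substance of the lemma: you do not actually prove it. The recursion $\langle\mathbf{u}_j,\mathbf{1}\rangle=c_j\langle\mathbf{u}_{j-1},\mathbf{1}\rangle$ is correct, but to conclude you would need simultaneous control of $c_j$, $\langle\mathbf{u}_{j-1},\mathbf{1}\rangle$ and $\|\mathbf{u}_j\|$, and you explicitly defer this to a joint induction whose closed forms you propose to ``guess'' from small cases and have not exhibited. (Already for $j=2$ one gets $\langle\mathbf{q}_2,\mathbf{1}\rangle=\dfrac{(m+\epsilon)\,\epsilon}{\sqrt{(m+2\epsilon+\epsilon^2)(2m+4\epsilon+\epsilon^2)}}$, so the exact constant $\epsilon$ does hold, but the general-$j$ formulas grow in complexity and nothing in your sketch establishes them.) Your alternative argument via $\mathbf{q}_j\perp\mathbf{q}_1$ only yields $\mathcal{O}(\epsilon)$, as you concede. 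So as written this is a plan with the key step missing, not a proof.

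For comparison, the paper avoids the Gram--Schmidt recursion entirely. It starts from $\mathbf{J}^{\epsilon}=\mathbf{Q}^{\epsilon}\mathbf{R}^{\epsilon}$ and uses the eigenvector identity $\mathbf{J}^{\epsilon}\mathbf{1}=(m+\epsilon)\mathbf{1}$ to write $(m+\epsilon)(\mathbf{Q}^{\epsilon})^T\mathbf{1}=\mathbf{R}^{\epsilon}\mathbf{1}$; the $j$-th entry of the right-hand side is the $j$-th row sum of $\mathbf{R}^{\epsilon}$, which equals $\langle\mathbf{q}_j,\sum_{k\geq j}\mathbf{a}_k\rangle$, a vector whose entries are $m-j+1$ or $m-j+1+\epsilon$. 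This produces a single linear equation in the unknown column sum $S_j=\langle\mathbf{q}_j,\mathbf{1}\rangle$, namely $S_j=\frac{m-j+1+\epsilon}{m+\epsilon}S_j-\frac{\epsilon}{m+\epsilon}(q_{1j}+\cdots+q_{j-1,j})$, which solves to $|S_j|=\frac{\epsilon}{j-1}|q_{1j}+\cdots+q_{j-1,j}|\leq\frac{\epsilon}{\sqrt{j-1}}\leq\epsilon$ by Cauchy--Schwarz, since each column of $\mathbf{Q}^{\epsilon}$ is a unit vector. If you want to salvage your route, you would have to carry out the coupled induction you describe; otherwise the algebraic identity exploited by the paper is the decisive idea your proposal is missing.
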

\begin{proof}
By QR decomposition we have 
\begin{equation}
\mathbf{J}^{\epsilon}=\mathbf{J} + \epsilon \mathbf{I} = \mathbf{Q}^{\epsilon} \mathbf{R}^{\epsilon},
\end{equation}
where $\mathbf{Q}^{\epsilon}$ is the orthogonal matrix and $\mathbf{R}^{\epsilon}$ is the upper triangular matrix.
By multiplying $(\mathbf{Q}^{\epsilon})^T$ and $\mathbf{1}$ on both sides, it follows that  
\begin{equation*}
    (\mathbf{Q}^{\epsilon})^T(\mathbf{J} + \epsilon \mathbf{I}) \mathbf{1}
    = \mathbf{R}^{\epsilon} \mathbf{1}.
\end{equation*}
Let $q_{ij}$ be the entry in $i$-th row and $j$-th column of $\mathbf{Q}_{m\times m}^{\epsilon}$ and 
let $\mathbf{q}_1,\ldots,\mathbf{q}_m$ are the column vectors of $\mathbf{Q}_{m\times m}^{\epsilon}$.
Thus we have
\begin{equation}
    (m+\epsilon)(\mathbf{Q}^{\epsilon})^T  \mathbf{1}  = 
    \begin{bmatrix}
        \langle \mathbf{q}_1, \mathbf{v}_1 \rangle \\
         \langle \mathbf{q}_2, \mathbf{v}_2 \rangle \\
         \vdots\\
          \langle \mathbf{q}_m, \mathbf{v}_m \rangle
    \end{bmatrix},
\end{equation}
where for each $k=1,\ldots,m$  
\begin{equation*}
    \mathbf{v}_k=
    \begin{bmatrix}
    m-k+1\\
    m-k+1\\
    \vdots\\
    m-k+1\\
    m-k+1+\epsilon\\
    \vdots\\
    m-k+1+\epsilon
    \end{bmatrix}.
\end{equation*}
Let $S_j$ be the sum of all entries of $\mathbf{q}_j$. 
Then we have that for all $j=2,3,\ldots,m$
\begin{align*}
S_j &= \frac{1}{m+\epsilon}  \left\langle \mathbf{q}_j, \mathbf{v}_j \right\rangle \\
&= \frac{1}{m+\epsilon} 
((m-j+1)q_{1j} + \cdots +(m-j+1 +\epsilon) q_{jj}  \\
& \quad + \cdots +(m-j+1 +\epsilon) q_{mj} )\\
& = \frac{m-j+1+\epsilon}{m+\epsilon} S_j -\frac{\epsilon}{m +\epsilon}
\left(q_{1j} + \cdots+ q_{j-1, j}
\right),
\end{align*}
implying that
\begin{equation*}
\left(1- \frac{m-j+1+\epsilon}{m+\epsilon} 
\right) S_j 
= -\frac{\epsilon}{m +\epsilon}
\left(q_{1j} + \cdots +q_{j-1, j}\right).
\end{equation*}
Thus it follows that for all $j=2,3,\ldots,m$
\begin{align}
\left | S_j \right|  
&= \frac{\epsilon}{ j-1}\left |q_{1j} + \cdots +q_{j-1, j}\right| \nonumber\\
&\leq \frac{\epsilon}{ j-1} \left( | q_{1j} | + \cdots +| q_{j-1, j}| \right)   \nonumber \\
& \leq \frac{\epsilon}{ \sqrt{j-1}}\leq \epsilon. \label{eq:sum_ineq1}
\end{align}
The first inequality and the second inequality hold from 
the triangle inequality and the Cauchy-Schwarz inequality, respectively.
\end{proof}

\begin{proposition}\label{prop:csum}
The entry sum of each column (resp. row) vector of  $\mathbf{W}^{\epsilon}_{m\times n}$ 
is almost the same.
\end{proposition}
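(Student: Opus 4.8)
The plan is to compute the vector of column sums of $\mathbf{W}^{\epsilon}_{m\times n}$ in closed form, show it is $\mathcal{O}(\epsilon)$-close to a constant multiple of $\mathbf{1}_{n}$, and then deduce the statement for row sums by transposition via part (iv) of Proposition~\ref{prop:orthogonal}. Concretely, the column-sum vector is $\mathbf{s}^{T}:=\mathbf{1}_{m}^{T}\mathbf{W}^{\epsilon}_{m\times n}$, and by \eqref{eq:proposedW1} it factors as $\mathbf{s}^{T}=\bigl(\mathbf{1}_{m}^{T}\mathbf{Q}^{\epsilon}_{m\times m}\bigr)\,\mathbf{I}_{m\times n}\,(\mathbf{Q}^{\epsilon}_{n\times n})^{T}$. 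I would start from Lemma~\ref{lemma:sum_Q1}, which gives $\mathbf{1}_{m}^{T}\mathbf{Q}^{\epsilon}_{m\times m}=[\,S_{1},S_{2},\dots,S_{m}\,]$ with $S_{1}=\tfrac{m+\epsilon}{\sqrt{\epsilon^{2}+2\epsilon+m}}$ and $|S_{j}|\le\epsilon$ for $j\ge 2$; writing this row vector as $S_{1}\mathbf{e}_{1}^{T}+\mathbf{r}^{T}$, the bound \eqref{eq:sum_ineq1} yields $\|\mathbf{r}\|\le\epsilon\sqrt{m-1}$.

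Next I would push this decomposition through the two remaining factors. The key observation is that, whether $m\ge n$ or $m<n$, one has $\mathbf{e}_{1}^{T}\mathbf{I}_{m\times n}=\mathbf{e}_{1}^{T}\in\mathbb{R}^{n}$, so the leading part contributes $S_{1}\mathbf{e}_{1}^{T}(\mathbf{Q}^{\epsilon}_{n\times n})^{T}=S_{1}\hat{\mathbf{q}}_{1}^{T}$; and since $\mathbf{I}_{m\times n}$ and the orthogonal matrix $\mathbf{Q}^{\epsilon}_{n\times n}$ are non-expansive, the remainder $\mathbf{r}^{T}\mathbf{I}_{m\times n}(\mathbf{Q}^{\epsilon}_{n\times n})^{T}$ has Euclidean norm at most $\|\mathbf{r}\|=\mathcal{O}(\epsilon)$. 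Hence $\mathbf{s}^{T}=S_{1}\hat{\mathbf{q}}_{1}^{T}+\mathbf{g}^{T}$ with $\|\mathbf{g}\|=\mathcal{O}(\epsilon)$. Finally, $\hat{\mathbf{q}}_{1}=\bigl(\mathbf{1}_{n}+\epsilon\mathbf{e}_{1}\bigr)\big/\sqrt{n+2\epsilon+\epsilon^{2}}$ by construction, so $\bigl\|\hat{\mathbf{q}}_{1}-\tfrac{1}{\sqrt{n}}\mathbf{1}_{n}\bigr\|=\mathcal{O}(\epsilon)$ for fixed $n$ (this is a direct computation, identical to the first Lemma of this section read in dimension $n$), while $S_{1}=\sqrt{m}+\mathcal{O}(\epsilon)$. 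Combining these estimates gives $\mathbf{s}^{T}=\sqrt{m/n}\,\mathbf{1}_{n}^{T}+\mathcal{O}(\epsilon)$ entrywise, so any two column sums differ by $\mathcal{O}(\epsilon)$ and each converges to $\sqrt{m/n}$ as $\epsilon\to 0$ with $m,n$ fixed.

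For the row sums I would invoke part (iv) of Proposition~\ref{prop:orthogonal}: since $(\mathbf{W}^{\epsilon}_{m\times n})^{T}=\mathbf{W}^{\epsilon}_{n\times m}$, the row-sum vector $\mathbf{W}^{\epsilon}_{m\times n}\mathbf{1}_{n}$ equals the column-sum vector of $\mathbf{W}^{\epsilon}_{n\times m}$, which by the computation above (with the roles of $m$ and $n$ interchanged) equals $\sqrt{n/m}\,\mathbf{1}_{m}+\mathcal{O}(\epsilon)$. Thus all row sums are almost the same as well.

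The computation is essentially routine; the hard part is not a single deep step but the careful bookkeeping of the $\mathcal{O}(\epsilon)$ error terms — which, exactly as in the two lemmas, forces $m$ and $n$ to be held fixed while $\epsilon\to 0$ — together with checking that the rectangular identity $\mathbf{I}_{m\times n}$ preserves the first coordinate direction in both shape cases $m\ge n$ and $m<n$, since this is what makes the leading term collapse cleanly to $S_{1}\hat{\mathbf{q}}_{1}^{T}$. Beyond this I do not anticipate a genuine obstacle.
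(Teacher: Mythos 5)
Your proposal is correct and takes essentially the same route as the paper: both isolate the rank-one contribution of $\mathbf{q}_1$, $\hat{\mathbf{q}}_1$ and use Lemma~\ref{lemma:sum_Q1} to bound everything else by $\mathcal{O}(\epsilon)$ — you through the factored form $\mathbf{Q}^{\epsilon}_{m\times m}\mathbf{I}_{m\times n}(\mathbf{Q}^{\epsilon}_{n\times n})^T$ with operator-norm bookkeeping (constant $\epsilon\sqrt{m-1}$), the paper through the expansion \eqref{eq:outerprod1} with triangle inequality and Cauchy--Schwarz (sharper constant $\epsilon\sqrt{H_{s-1}}$), and rows by transposition just as the paper's ``in a similar way.'' Your explicit identification of $\hat{\mathbf{q}}_1=(\mathbf{1}_n+\epsilon\mathbf{e}_1)/\sqrt{n+2\epsilon+\epsilon^2}$ and of the common value $\sqrt{m/n}$ is a welcome addition that makes explicit what the paper leaves implicit, namely that the leading terms $\mathbf{e}_j^T\hat{\mathbf{q}}_1\,\mathbf{q}_1^T\mathbf{1}$ are themselves nearly equal across $j$.
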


\begin{proof}
Let $\mathbf{c}_j$ be $j$-th column vector of $\mathbf{W}^{\epsilon}=\mathbf{W}^{\epsilon}_{m\times n}$ for each $j=1,\ldots,n$.
Then by \eqref{eq:outerprod1} the sum of all entries of $j$-th column vector can be expressed as
\begin{equation*}
    \mathbf{c}_j^T\mathbf{1} = \mathbf{e}_j^T (\mathbf{W}^{\epsilon})^T \mathbf{1} 
    = \mathbf{e}_j^T \left( \hat{\mathbf{q}}_1 \mathbf{q}_1^T + \cdots + \hat{\mathbf{q}_{s}} \mathbf{q}_s^T    \right)\mathbf{1},  
\end{equation*}
where $s=\min\{m,n\}$. So, it follows that for each $j=1,\ldots,n$
\begin{align*}
    \left| \mathbf{c}_j^T\mathbf{1} - \mathbf{e}_j^T \hat{\mathbf{q}}_1 \mathbf{q}_1^T \mathbf{1} \right|
    &=\left| \mathbf{e}_j^T (\mathbf{W}^{\epsilon})^T \mathbf{1} - \mathbf{e}_j^T \hat{\mathbf{q}}_1 \mathbf{q}_1^T \mathbf{1} \right|\\
    &=\left| \mathbf{e}_j^T \left( \hat{\mathbf{q}}_2 \mathbf{q}_2^T + \cdots + \hat{\mathbf{q}_{s}} \mathbf{q}_s^T   \right)\mathbf{1}   \right|\\
    & \leq \left| \mathbf{e}_j^T \hat{\mathbf{q}}_2  \right|
    \left| \mathbf{q}_2^T \mathbf{1}   \right|
    + \cdots + 
  \left| \mathbf{e}_j^T \hat{\mathbf{q}}_s  \right|
    \left| \mathbf{q}_s^T \mathbf{1}   \right|\\
    &= \left| \hat{q}_{j2} \right|
    \left| \langle \mathbf{q}_2, \mathbf{1} \rangle   \right|
    + \cdots + 
\left| \hat{q}_{js} \right|
    \left| \langle \mathbf{q}_s, \mathbf{1} \rangle   \right|\\
    &\leq \left| \hat{q}_{j2} \right|
    \frac{\epsilon}{\sqrt{1}}
    + \cdots + 
 \left| \hat{q}_{js} \right|
    \frac{\epsilon}{\sqrt{s-1}}\\
    &\leq \epsilon\sqrt{\frac{1}{1}+\cdots+\frac{1}{s-1}}
    \sqrt{\left| \hat{q}_{j2} \right|^2+\cdots +\left| \hat{q}_{js} \right|^2 }\\
   &= \epsilon \sqrt{H_{s-1}} \approx  \epsilon \sqrt{\log(s-1)} ,
\end{align*}

where $\hat{q}_{ij}$ is the $(i,j)$-entry of $\mathbf{Q}_{n\times n}^{\epsilon}$ and $H_{k}$ is the $k$-th harmonic number. 
The last equality holds from the fact that 
$\hat{\mathbf{q}}_1,\ldots,\hat{\mathbf{q}}_n$ are the column vectors of the orthogonal matrix $\mathbf{Q}_{n\times n}^{\epsilon}$, and the last inequality holds from the Cauchy-Schwarz inequality and by \eqref{eq:sum_ineq1} the second last inequality holds.

In a similar way, one can check that for each $i=1,\ldots,m$ it holds that

\begin{equation*}
      \left| \mathbf{r}_i^T\mathbf{1} - \mathbf{e}_i^T \mathbf{q}_1 \hat{\mathbf{q}}_1^T  \mathbf{1} \right| \leq \epsilon \sqrt{H_{s-1}},
\end{equation*}
where $\mathbf{r}_i$ be the $i$-th row vector of $\mathbf{W}^{\epsilon}$ for each $i=1,\ldots,m$.
\end{proof}

Now, we have established that the entry sum of each column vector of $\mathbf{W}^{\epsilon}_{m\times n}$ is almost the same. Note that the entry sum of a given column vector can be expressed as the inner product of the column vector and $\mathbf{1}$. 
It means that the column vectors of  $\mathbf{W}^{\epsilon}_{m\times n}$ are located in the vicinity of a plane that forms a specific angle with the vector $\mathbf{1}$.

\bigskip

The challenges associated with training deep neural networks using ReLU arise from the limitation that negative signals cannot propagate through the network. Our proposed weight matrix can make positive signals propagate through the network. Firstly, let us give experimental results for positive signal propagation with the proposed weight matrix and a Gaussian random matrix, orthogonal matrix. We set $\mathbf{W}\in\mathbb{R}^{200\times 100}$ as our proposed initial weight matrix with $\epsilon = 0.1$, a Gaussian random matrix with a mean of $0$ a standard deviation of $0.1$, and an orthogonal matrix. The positive signals $\mathbf{x}\in\mathbb{R}^{100}$ are generated from (i) normal distribution $\mathcal{N}(0.5, 0.25^2)$ and (ii) uniform distribution $\mathcal{U}_{[0,1]}$. For a random vector $\mathbf{x} \in \mathbb{R}^{100}$, the entry values of $\mathbf{Wx}$ are computed over 25 times for three different weight matrices. The computational results confirm that our proposed weight initialization method consistently yields a higher number of positive entries in $\mathbf{Wx}$ than other matrices, demonstrating its effectiveness in facilitating signal propagation in networks with ReLU activation (see Figure~\ref{fig:relu}).

\begin{figure}[t!]
\centering 
\begin{tabular}{cc}
\subfloat[$\mathbf{W}:$ Proposed, $\mathbf{x}:$ Gaussian]{{\includegraphics[width=0.235\textwidth ]{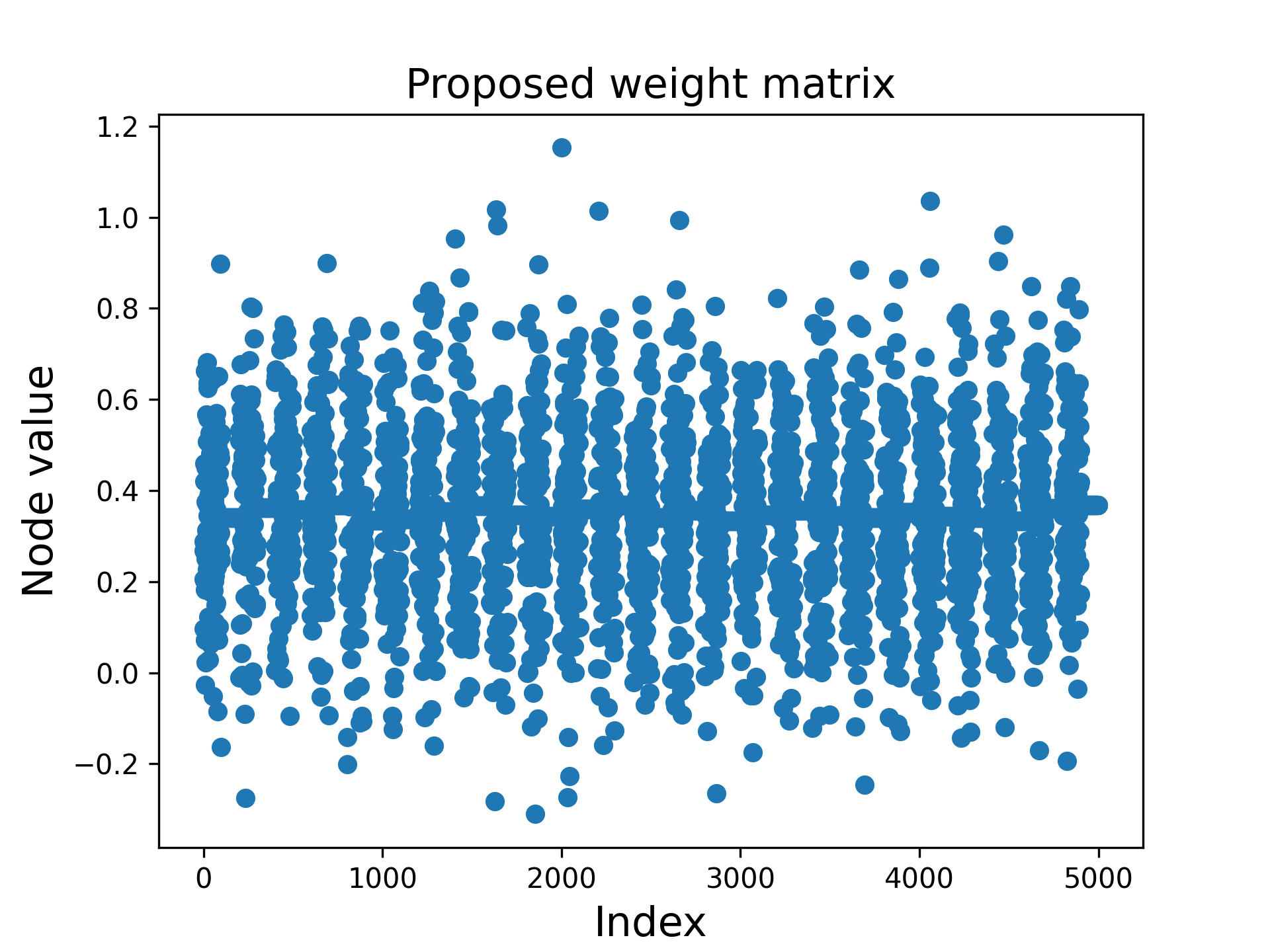} }}
\subfloat[$\mathbf{W}:$ Prosoed, $\mathbf{x}:$ Uniform]{{\includegraphics[width=0.235\textwidth ]{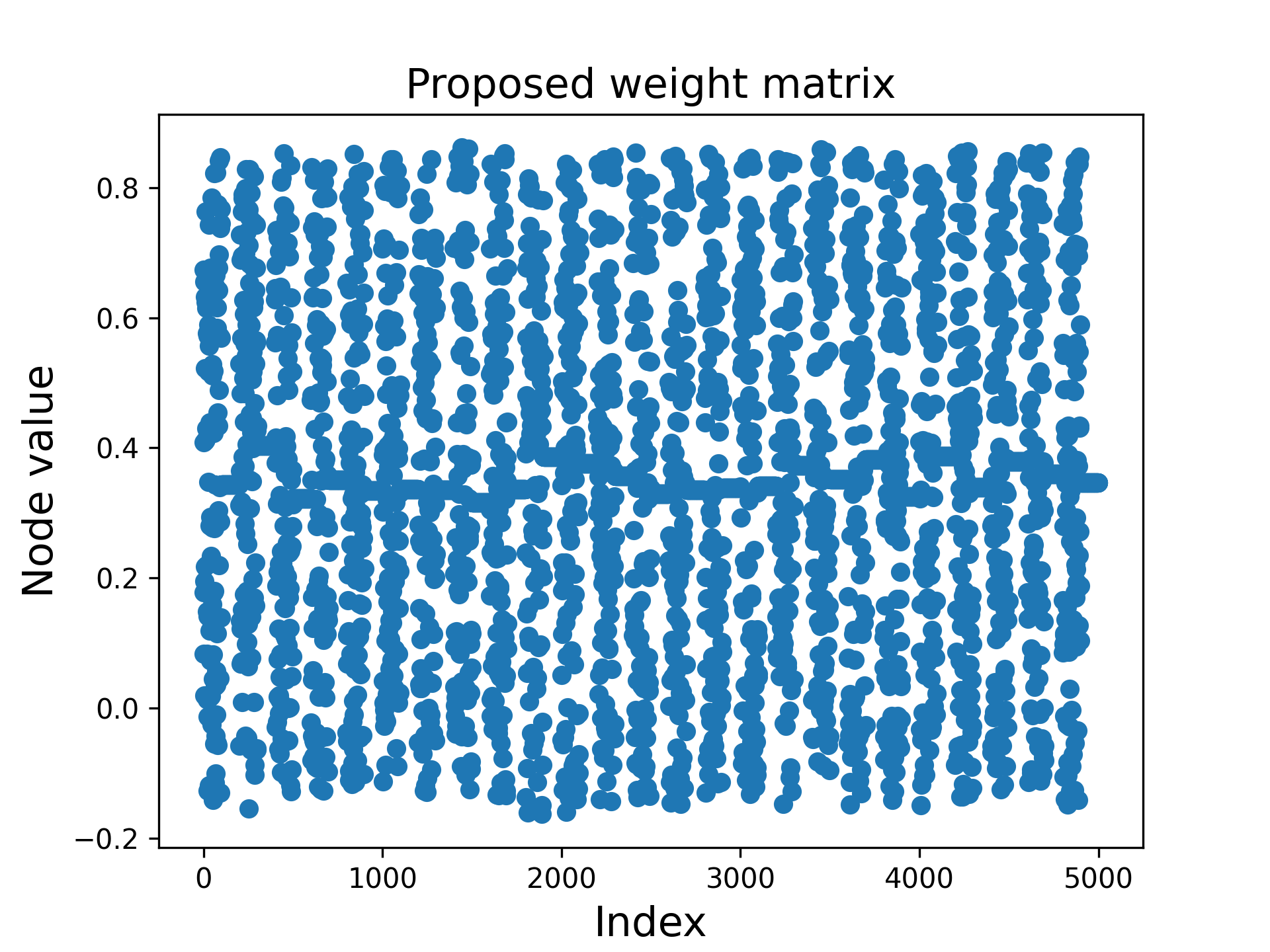} }}\\
\subfloat[$\mathbf{W}:$ Gaussian, $\mathbf{x}:$ Gaussian]{{\includegraphics[width=0.235\textwidth ]{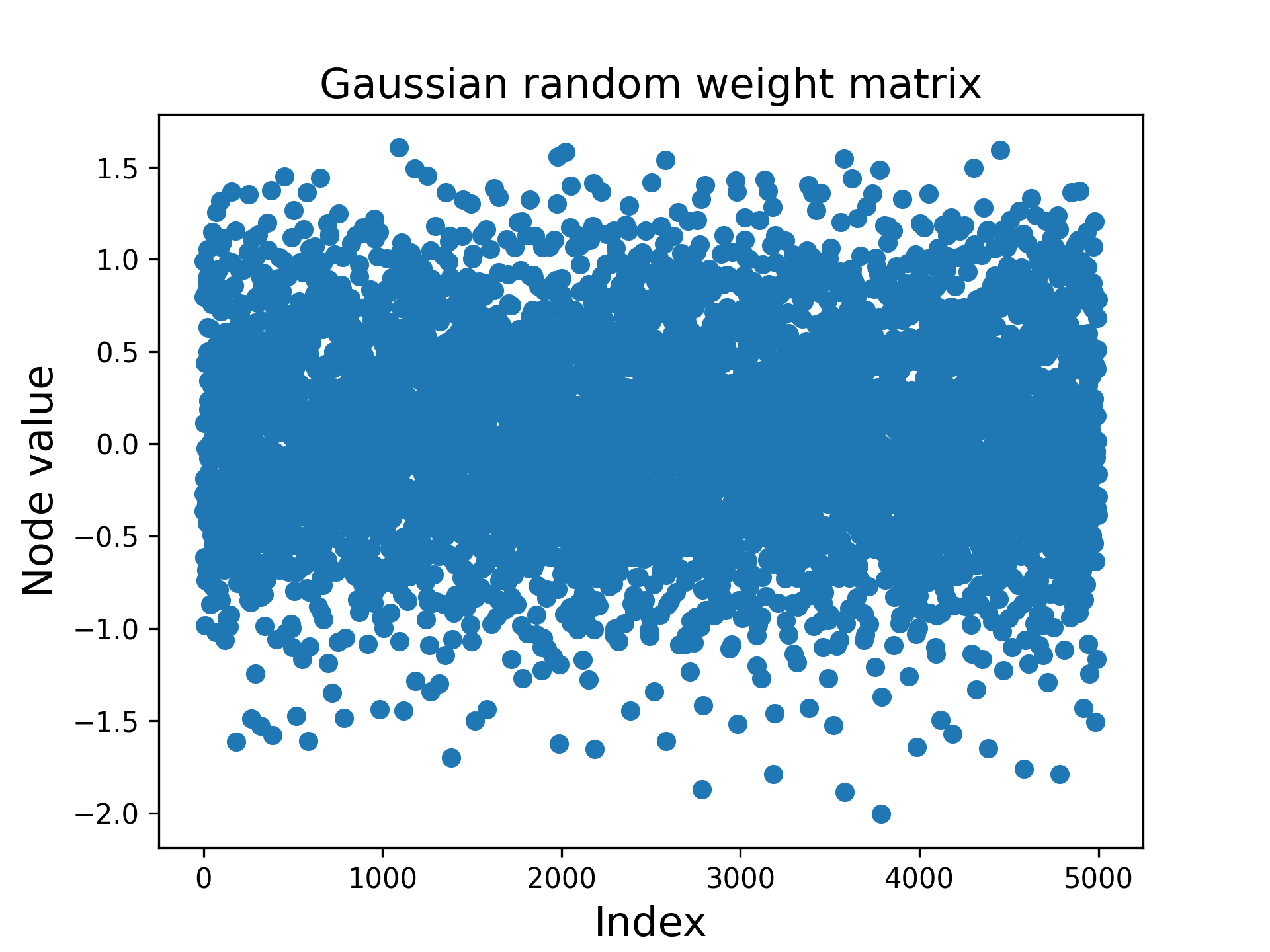} }}
\subfloat[$\mathbf{W}:$ Gaussian, $\mathbf{x}:$ Uniform]{{\includegraphics[width=0.235\textwidth ]{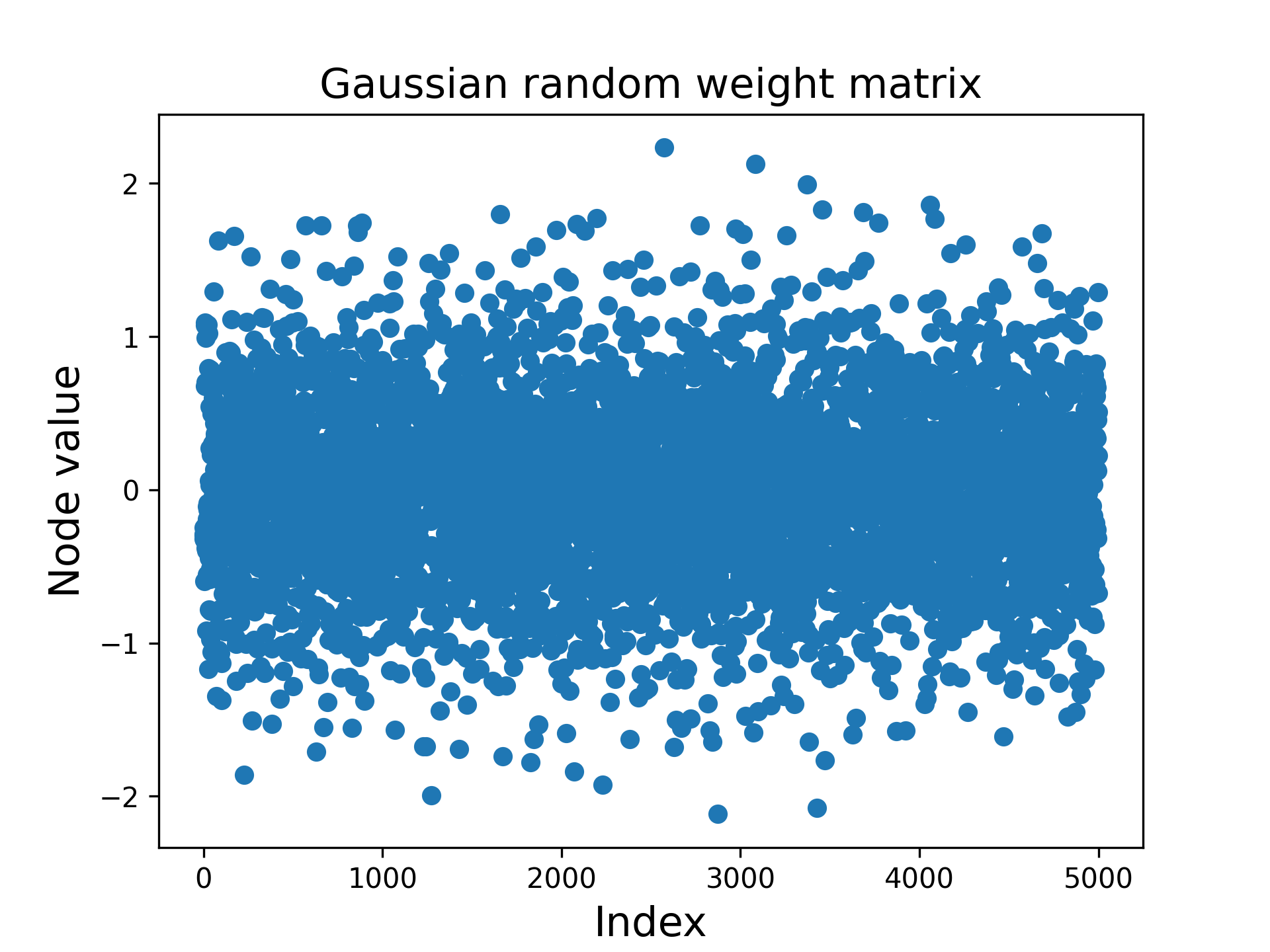} }}\\
\subfloat[$\mathbf{W}:$ Orthogonal, $\mathbf{x}:$ Gaussian]{{\includegraphics[width=0.235\textwidth ]{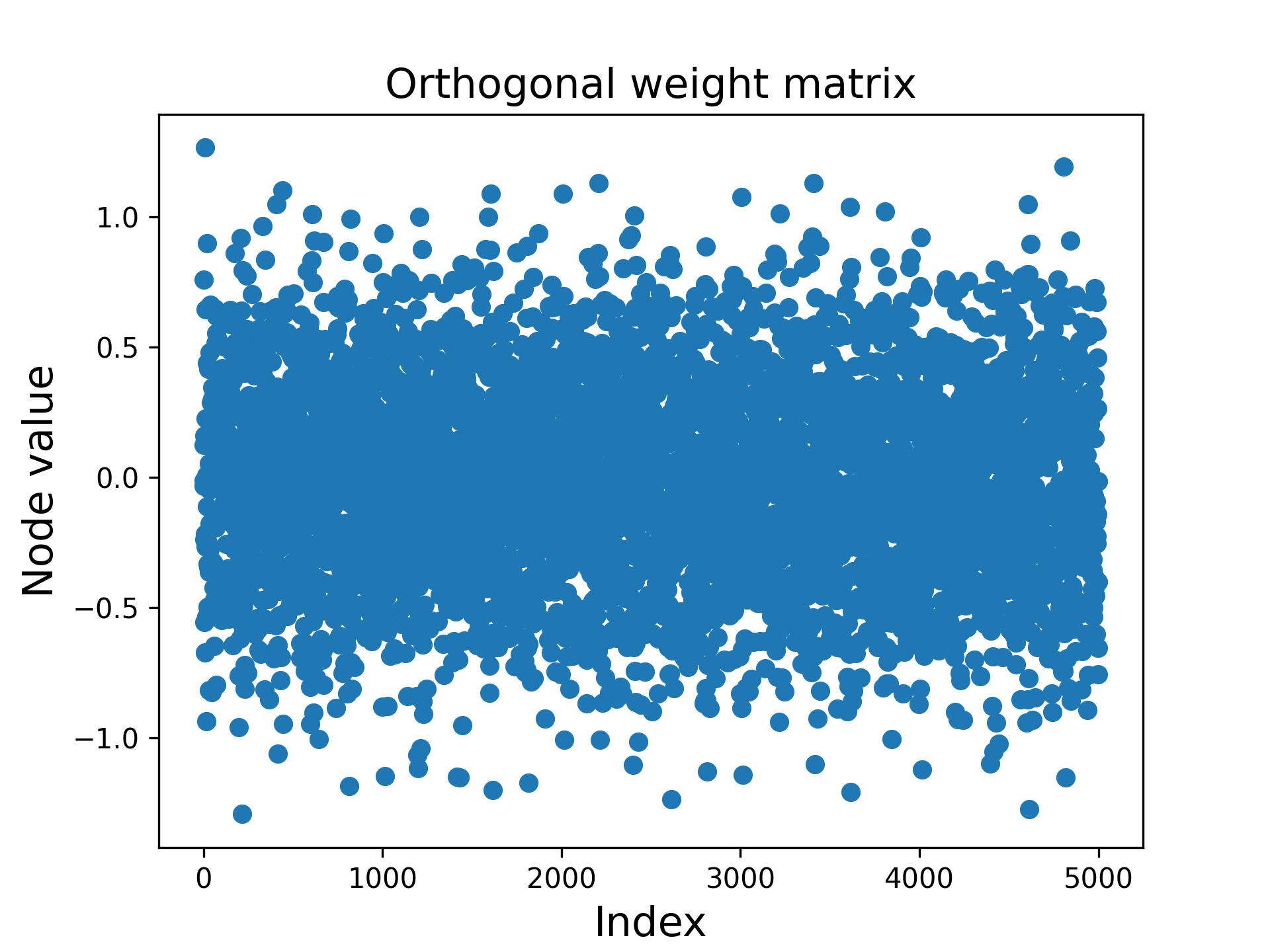} }}
\subfloat[$\mathbf{W}:$ Orthogonal, $\mathbf{x}:$ Uniform]{{\includegraphics[width=0.235\textwidth ]{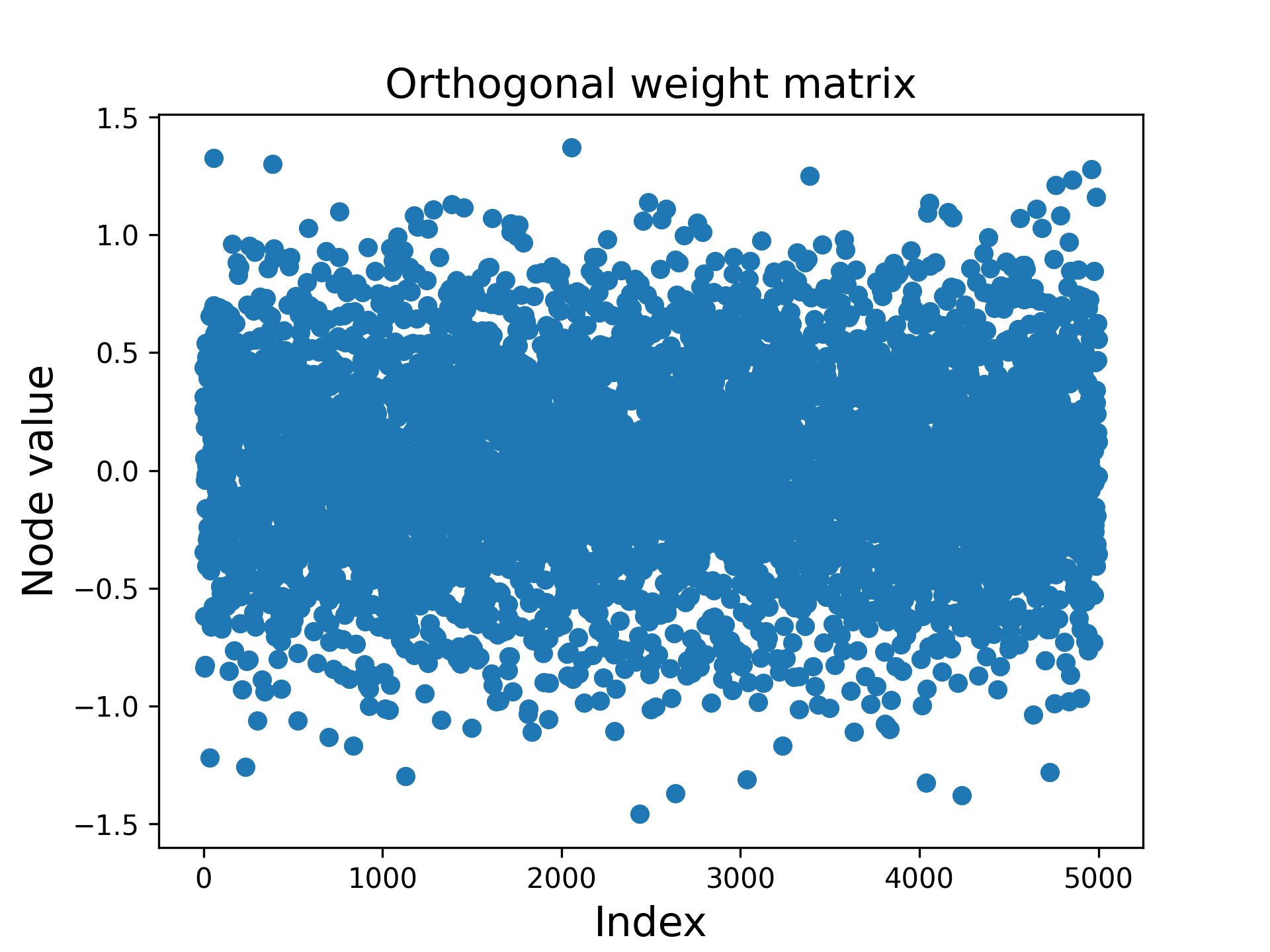} }}
\end{tabular}
\caption{
This shows its effectiveness of positive signal propagation for each weight matrices $\mathbf{W} \in \mathbb{R}^{200 \times 100}$. For 25 random vectors $\mathbf{x} \in \mathbb{R}^{100}$, the entry values of $\mathbf{Wx}$ are plotted.
Here, the $x$-axis represents the indices of all entries.}
\label{fig:relu}
\end{figure}

Now, we provide a theoretical analysis demonstrating that $\mathbf{W}^{\epsilon}\mathbf{x}$ contains a higher number of positive entries.

\begin{table*}[th]
\centering
\resizebox{15cm}{!}{%
\begin{tabular}{l cc cc cc cc cc cc cc cc}
\toprule
Entire dataset  \\
\midrule
 & \multicolumn{2}{c}{Proposed} & \multicolumn{2}{c}{Orthogonal}
 & \multicolumn{2}{c}{Xavier}& \multicolumn{2}{c}{He}
 & \multicolumn{2}{c}{Zero}& \multicolumn{2}{c}{Identity}
 & \multicolumn{2}{c}{RAI}& \multicolumn{2}{c}{GSM}\\
\cmidrule(lr){2-17} 
Datset & 10&100& 10&100& 10&100&10&100&10&100&10&100&10&100&10&100   \\
\midrule
MNIST (0)      & \textbf{88.2}&88.5&88.1&88.7& 86.6&87.7&87.9&\textbf{89.1}&{88.1}&{88.8}&88&89.1&88&88.2&87.1&89.4  \\
FMNIST (0)     & \textbf{80.4}&\textbf{80.6}& 79.6&79& 79.5&78.1&79.5&80.1&{78.1}&{80.4}&78.4&80.3&79.1&80.4&75.8&80 \\
MNIST (512)     & \textbf{96.5}&\textbf{97.6}& 95.8&96.3& 95.8&96.5&96.4&96.6&95.1&96.5&96.2&96.5 &95.9&97.5&88&89.4 \\
FMNIST (512)      & 84.5&85.1& 84.4&85.4& 84.5&84.5&84.2&85.1&84.5&84.6&84.8&84.9&\textbf{84.9}&\textbf{85.2}&78.3&80.3 \\
MNIST   (16)    & \textbf{92.2}&\textbf{94}& 88&90& 83.5&86&77.2&85.5&60.1&84.2&38.7&40.5&91&92.1&29&77.1  \\
FMNIST  (16)    & \textbf{82.3}&\textbf{84.2}& 61.1&67.3& 56.4&69.2&53.3&60.7&60.1&83.1&78.1&81.2&60&78.4&34.9&37.3 \\
\midrule
$4$ samples per class \\
\midrule
 & \multicolumn{2}{c}{Proposed} & \multicolumn{2}{c}{Orthogonal}
 & \multicolumn{2}{c}{Xavier}& \multicolumn{2}{c}{He}
 & \multicolumn{2}{c}{Zero}& \multicolumn{2}{c}{Identity}
 & \multicolumn{2}{c}{RAI}& \multicolumn{2}{c}{GSM}\\
\cmidrule(lr){2-17} 
Datset & 10&100& 10&100& 10&100&10&100&10&100&10&100&10&100&10&100   \\
\midrule
MNIST  (0)     & \textbf{55.5}&\textbf{54.1}& 29.1&39.7& 26.1&40.5&23.1&38.7&51.1&50.8&54.2&53.5&27.6&43.8 &26.1&45.6 \\
FMNIST   (0)   &\textbf{54.2}&\textbf{57.1}& 42.7&51.4& 29.5&48.1&34.2&51.1&51&50.1&52.8&56.1&36.5&53.9 &33.7&51.4\\
MNIST (512)      & \textbf{56.5}&\textbf{51.0}& 49.7&50.1& 44.3&45.2&46.5&48.9&22&46.3&51.9&50.8&29.9&38.8 &23.3&37.2 \\
FMNIST (512)     & 46.7&55.6& 51&56& \textbf{54}&54.6&51&\textbf{56.8}&37.1&50.4&45.2&53.4 &48.7&56.2&45.1&53.6\\
MNIST  (16)     & \textbf{51.2}&\textbf{52.9}& 22.5&31.7& 18.7&26.3&20&25&9.1&10.3&9.6&10.3 &13.7&25.1 &11.9&18.8\\
FMNIST  (16)    & \textbf{43.3}&\textbf{56.3}& 23.4&24.7& 18.8&17.8&20&20.8&10.8&10.7&33.3&41.5&14.9&21&10.6&26.4 \\
\midrule
$2$ samples per class   \\
\midrule
 & \multicolumn{2}{c}{Proposed} & \multicolumn{2}{c}{Orthogonal}
 & \multicolumn{2}{c}{Xavier}& \multicolumn{2}{c}{He}
 & \multicolumn{2}{c}{Zero}& \multicolumn{2}{c}{Identity}
 & \multicolumn{2}{c}{RAI}& \multicolumn{2}{c}{GSM}\\
\cmidrule(lr){2-17} 
Datset & 10&100& 10&100& 10&100&10&100&10&100&10&100&10&100&10&100   \\
\midrule
MNIST  (0)     & \textbf{46.4}&\textbf{46.5}& 23.7&31.6& 19.6&30.1&20.7&28.7&42.6&43.3&44.1&43.6 &26.3&37.8&21.2&34.5 \\
FMNIST  (0)    & \textbf{49.1}&\textbf{50.3}& 38.3&43.3& 31.4&41.7&27.5&38.6&43&46.1&45.5&42.7 &36&40&38.2&44.7\\
MNIST (512)      & 39.7&37.1& 33.8&36.3& 32.7&33.1&39.3&39.1&27.2&33.4&\textbf{45.9}&\textbf{45.4} &38.5&42.4&37.7&41.1 \\
FMNIST (512)     & \textbf{46.8}&46.2& 45&\textbf{48.4}& 43.4&44.7&42.4&51.2&34.7&43.8&44,2&47.8&38.8&39.9&40.1&42.6 \\
MNIST  (16)     & \textbf{44.3}&\textbf{41.5}& 19.7&23.6& 16.6&21.6&19.3&22.2&10.1&11&9.6&9.5&11.2&22.8 &12.5&22.6 \\
FMNIST (16)     & \textbf{43.8}&\textbf{47.9}& 22.1&26.1& 18.6&20&19.4&22.7&9.9&10.5&29.1&39.6&24&26.6 &13.4&21.9\\
\midrule
$1$ samples per class  \\
\midrule
 & \multicolumn{2}{c}{Proposed} & \multicolumn{2}{c}{Orthogonal}
 & \multicolumn{2}{c}{Xavier}& \multicolumn{2}{c}{He}
 & \multicolumn{2}{c}{Zero}& \multicolumn{2}{c}{Identity}
 & \multicolumn{2}{c}{RAI}& \multicolumn{2}{c}{GSM}\\
\cmidrule(lr){2-17} 
Datset & 10&100& 10&100& 10&100&10&100&10&100&10&100&10&100&10&100   \\
\midrule
MNIST (0)    & \textbf{37.1}&\textbf{38.2}& 20.6&23.9 & 29.1&36.9&25.4&33.8&9.7&33&12.6&12.4&19.1&23.6&23.1&27.4  \\
FMNIST (0)  & \textbf{43.5}&39.4&  30.3&33.7& 27.4&30.8&24.6&35.8&9.7&25.8&40.7&\textbf{40.6}&18&33.7&34.5&40.6 \\
MNIST (512)   & 36.1&34.9& 28.2&27.7 & 31.2&32.3&27&27.4&22.2&29.8&\textbf{39.2}&\textbf{40.3}&32.5&29.3 &31.6&36.6 \\
FMNIST (512)  & \textbf{39.2}&37.4&  36.7&34.7& 38.5&\textbf{37.6}&36.1&35&31.7&37&0.3&3.4 &39&37.2&35.2&36\\
MNIST (16)   & \textbf{33.5}&\textbf{34.2}& 16.5&19.4 & 14.3&16.8&14.3&19.9&10.6&11.6&10&9.8&18.1&22.6&18.4&19.8  \\
FMNIST (16)  & \textbf{35}&\textbf{34.2}&  18.7&22.9& 16.1&16.8&19.8&22.8&10.3&10.5&7&7.2&13.7&19.7&15.9&21.9 \\

\bottomrule 
\end{tabular}}
\caption{This is a comparison of the validation accuracy for feedforward neural networks~(FFNNs) with various weight initialization methods. 
Here, $(\cdot)$ represents the number of nodes in a single hidden layer. The simulations are performed with datasets MNIST and FMNIST over $10$ epochs and $100$ epochs. Best results are marked in bold.}
\label{table:nohidden}
\end{table*}

\begin{theorem}\label{thm2}
Let $\mathbf{W}^{\epsilon}\in \mathbb{R}^{N_1\times N_x}$ with sufficiently small $\epsilon$ be a given. Then it holds that for all  $ \mathbf{x}\in \mathbb{R}^{N_x}$
\begin{equation*}
    \displaystyle\frac{1}{N_x}\left\langle \mathbf{x}, \mathbf{1}_{N_x} \right\rangle\simeq 
\sqrt{\frac{N_1}{N_x}}\frac{1}{N_1}\left\langle \mathbf{W}^{\epsilon}\mathbf{x}, \mathbf{1}_{N_1}\right\rangle.
\end{equation*}
\end{theorem}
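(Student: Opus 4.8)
The plan is to expand $\mathbf{W}^{\epsilon}\mathbf{x}$ through the rank-one decomposition~\eqref{eq:outerprod1} and show that only the contribution of the first columns of the two orthogonal factors survives as $\epsilon\to 0$. Writing $m=N_1$, $n=N_x$, $s=\min\{N_1,N_x\}$, and letting $\mathbf{q}_i$ (resp.\ $\hat{\mathbf{q}}_i$) be the columns of $\mathbf{Q}^{\epsilon}_{N_1\times N_1}$ (resp.\ $\mathbf{Q}^{\epsilon}_{N_x\times N_x}$), formula~\eqref{eq:outerprod1} gives
\begin{equation*}
\left\langle \mathbf{W}^{\epsilon}\mathbf{x}, \mathbf{1}_{N_1}\right\rangle
= \mathbf{1}_{N_1}^T\Big(\sum_{i=1}^{s}\mathbf{q}_i\hat{\mathbf{q}}_i^T\Big)\mathbf{x}
= \sum_{i=1}^{s}\left\langle \mathbf{q}_i,\mathbf{1}_{N_1}\right\rangle\left\langle \hat{\mathbf{q}}_i,\mathbf{x}\right\rangle .
\end{equation*}

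The first step is to discard the tail $i\ge 2$. By Lemma~\ref{lemma:sum_Q1} we have $|\langle \mathbf{q}_i,\mathbf{1}_{N_1}\rangle|\le\epsilon$ for $i\ge 2$, while $\langle \mathbf{q}_1,\mathbf{1}_{N_1}\rangle = (N_1+\epsilon)/\sqrt{\epsilon^2+2\epsilon+N_1}\to\sqrt{N_1}$ as $\epsilon\to0$. Since $\hat{\mathbf{q}}_1,\ldots,\hat{\mathbf{q}}_{N_x}$ are orthonormal in $\mathbb{R}^{N_x}$, Bessel's inequality plus Cauchy--Schwarz give $\sum_{i=2}^{s}|\langle\hat{\mathbf{q}}_i,\mathbf{x}\rangle|\le\sqrt{s-1}\,\|\mathbf{x}\|$, so the tail is bounded by $\epsilon\sqrt{s-1}\,\|\mathbf{x}\|$, which vanishes as $\epsilon\to 0$ with $N_1,N_x,\mathbf{x}$ fixed. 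Hence $\langle \mathbf{W}^{\epsilon}\mathbf{x},\mathbf{1}_{N_1}\rangle\simeq\langle\mathbf{q}_1,\mathbf{1}_{N_1}\rangle\langle\hat{\mathbf{q}}_1,\mathbf{x}\rangle$.

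The second step is to evaluate $\hat{\mathbf{q}}_1$ explicitly. By Theorem~\ref{thm1}, $\mathbf{u}_1=\mathbf{1}_{N_x}+\epsilon\mathbf{e}_1$, so $\hat{\mathbf{q}}_1=(\mathbf{1}_{N_x}+\epsilon\mathbf{e}_1)/\sqrt{N_x+2\epsilon+\epsilon^2}$ and therefore $\langle\hat{\mathbf{q}}_1,\mathbf{x}\rangle=(\langle\mathbf{x},\mathbf{1}_{N_x}\rangle+\epsilon x_1)/\sqrt{N_x+2\epsilon+\epsilon^2}\simeq\langle\mathbf{x},\mathbf{1}_{N_x}\rangle/\sqrt{N_x}$. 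Combining with $\langle\mathbf{q}_1,\mathbf{1}_{N_1}\rangle\simeq\sqrt{N_1}$ yields $\langle \mathbf{W}^{\epsilon}\mathbf{x},\mathbf{1}_{N_1}\rangle\simeq\sqrt{N_1/N_x}\,\langle\mathbf{x},\mathbf{1}_{N_x}\rangle$, and multiplying both sides by $\tfrac{1}{N_1}\sqrt{N_1/N_x}$ produces the asserted identity.

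The main obstacle is not any deep difficulty but rather making the symbol $\simeq$ meaningful: each step drops a term of size $\mathcal{O}(\epsilon)$ whose constant depends on $N_1$, $N_x$, and $\|\mathbf{x}\|$ (most visibly the $\epsilon\sqrt{s-1}\,\|\mathbf{x}\|$ tail bound and the mismatch between $\sqrt{N+2\epsilon+\epsilon^2}$ and $\sqrt{N}$). Thus the statement must be read as an asymptotic equality valid when $\epsilon$ is sufficiently small for a fixed architecture and input, and the write-up mostly consists of collecting these error estimates cleanly.
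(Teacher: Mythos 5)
Your argument is correct, and it reaches the conclusion by a recognizably related but leaner route than the paper. Both proofs rest on the same backbone: the rank-one expansion \eqref{eq:outerprod1}, the bound $\lvert\langle \mathbf{q}_i,\mathbf{1}\rangle\rvert\le\epsilon$ for $i\ge 2$ from Lemma~\ref{lemma:sum_Q1}, and the explicit first Gram--Schmidt vector $\mathbf{u}_1=\mathbf{1}+\epsilon\mathbf{e}_1$ from Theorem~\ref{thm1}. The difference is in the organization: the paper starts from $\tfrac{1}{N_x}\langle\mathbf{x},\mathbf{1}_{N_x}\rangle$, rewrites it as $\tfrac{1}{N_x}\langle\mathbf{W}^{\epsilon}\mathbf{1}_{N_x},\mathbf{W}^{\epsilon}\mathbf{x}\rangle$ using orthogonality (Proposition~\ref{prop:orthogonal}), expands only $\mathbf{W}^{\epsilon}\mathbf{1}_{N_x}$, and controls the tail via the Proposition~\ref{prop:csum}-type estimate $\epsilon\sqrt{H_{s-1}}$, keeping $\mathbf{W}^{\epsilon}\mathbf{x}$ intact throughout; you instead expand the bilinear form $\langle\mathbf{W}^{\epsilon}\mathbf{x},\mathbf{1}_{N_1}\rangle=\sum_i\langle\mathbf{q}_i,\mathbf{1}_{N_1}\rangle\langle\hat{\mathbf{q}}_i,\mathbf{x}\rangle$ directly and kill the tail with Lemma~\ref{lemma:sum_Q1} plus Bessel/Cauchy--Schwarz. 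Your version buys two things: it never needs $(\mathbf{W}^{\epsilon})^T\mathbf{W}^{\epsilon}=\mathbf{I}$, an identity that by Proposition~\ref{prop:orthogonal} is exact only when $N_1\ge N_x$, so your argument treats $N_1\ge N_x$ and $N_1<N_x$ uniformly where the paper's opening equality is silently restricted to the tall case; and it makes the dominant term transparent, since $\langle\mathbf{q}_1,\mathbf{1}_{N_1}\rangle\langle\hat{\mathbf{q}}_1,\mathbf{x}\rangle\simeq\sqrt{N_1/N_x}\,\langle\mathbf{x},\mathbf{1}_{N_x}\rangle$ drops out immediately, whereas the paper carries the constant $C$ and an $\epsilon$-correction term. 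Your tail bound $\epsilon\sqrt{s-1}\,\lVert\mathbf{x}\rVert$ is slightly cruder than the paper's $\epsilon\sqrt{H_{s-1}}\,\lVert\mathbf{x}\rVert$, but for fixed architecture and $\epsilon\to 0$ this is immaterial, and your closing caveat about the meaning of $\simeq$ (errors of order $\epsilon$ with constants depending on $N_1$, $N_x$, $\lVert\mathbf{x}\rVert$) matches the level of rigor the paper itself employs.
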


\begin{proof}
    Since the proposed weight matrix $\mathbf{W}^{\epsilon}\in \mathbb{R}^{N_1\times N_x}$ holds orthogonality, it holds that
\begin{align*}
    \frac{1}{N_x} \left\langle \mathbf{x}, \mathbf{1}_{N_x} \right\rangle
    &=\frac{1}{N_x}\left\langle \mathbf{W}^{\epsilon}\mathbf{1}_{N_x}, \mathbf{W}^{\epsilon}\mathbf{x}  \right\rangle\\
    &= \frac{1}{N_x} \left\langle  (\mathbf{q}_1 \hat{\mathbf{q}}_1^T + \mathbf{q}_2 \hat{\mathbf{q}}_2^T + \cdots + \mathbf{q}_{s} \hat{\mathbf{q}}_s^T)\mathbf{1}_{N_x}, \mathbf{W}^{\epsilon}\mathbf{x}  \right\rangle \\ 
    &= \frac{1}{N_x} \left\langle\mathbf{q}_1 \hat{\mathbf{q}}_1^T\mathbf{1}_{N_x}, \mathbf{W}^{\epsilon}\mathbf{x} \right\rangle 
    + \frac{1}{N_x}\sum_{i=2}^{s} \left\langle \mathbf{q}_i \hat{\mathbf{q}}_i^T\mathbf{1}_{N_x}, \mathbf{W}^{\epsilon}\mathbf{x} \right\rangle\\
    &\simeq\frac{1}{N_x} \left\langle\mathbf{q}_1 \hat{\mathbf{q}}_1^T\mathbf{1}_{N_x}, \mathbf{W}^{\epsilon}\mathbf{x} \right\rangle,
\end{align*}
where $s=\min\{N_x,N_1\}$.
The last approximate equality holds from that
\begin{align*}
    \left\vert \frac{1}{N_x}\sum_{i=2}^{s} \left\langle \mathbf{q}_i \hat{\mathbf{q}}_i^T\mathbf{1}_{N_x}, \mathbf{W}^{\epsilon}\mathbf{x} \right\rangle \right\rvert
    &\leq \frac{\epsilon \sqrt{H_{s-1}}}{N_x} \left\langle \mathbf{1}_{N_1}, \mathbf{W}^{\epsilon}\mathbf{x} \right\rangle\\
    &=\frac{\epsilon \sqrt{H_{s-1}}}{N_x}\sum_{j=1}^{N_1}\left\langle \mathbf{r}_j, \mathbf{x} \right\rangle\\
    &\leq \frac{\epsilon \sqrt{H_{s-1}}}{N_x} \sum_{j=1}^{N_1} \left\lVert \mathbf{x} \right\rVert,
\end{align*}
where $\mathbf{r}_j$ be the $j$-th row vector of $\mathbf{W}^{\epsilon}$ for each $j=1,\ldots,N_1$.
The first inequality holds from Proposition~\ref{prop:csum}, and the last inequality holds from orthogonality and Cauchy-Schwarz inequality.
Then by Proposition~\ref{prop:csum} it follows that 
\begin{align*}
    \frac{1}{N_x} \left\langle\mathbf{q}_1 \hat{\mathbf{q}}_1^T\mathbf{1}_{N_x}, \mathbf{W}^{\epsilon}\mathbf{x} \right\rangle
    &= C\left( N_1\frac{1}{N_1}\left\langle \mathbf{1}_{N_1}, \mathbf{W}^{\epsilon}\mathbf{x} \right\rangle
    + (\epsilon^2 + \epsilon N_x) \left\langle \mathbf{e}_1, \mathbf{W}^{\epsilon}\mathbf{x}\right\rangle \right)\\
    &\simeq \sqrt{\frac{N_1}{N_x}}\frac{1}{N_1}\left\langle \mathbf{1}_{N_1}, \mathbf{W}^{\epsilon}\mathbf{x} \right\rangle,
\end{align*}
where $C = \displaystyle\frac{N_x+\epsilon}{N_x\sqrt{\epsilon^2 + 2\epsilon +N_1}\sqrt{\epsilon^2 + 2\epsilon +N_x}}$.
\end{proof}

\begin{corollary}\label{cor1}
Given that $\epsilon$ is sufficiently small. Then the angle $\theta_1$ between the one vector $\mathbf{1}$ and $\mathbf{x}$ in $\mathbb{R}^{N_x}$ is nearly identical to the angle $\theta_2$ between the one vector $\mathbf{1}$ and $\mathbf{W}^{\epsilon}\mathbf{x}$ in $\mathbb{R}^{N_1}$.
\end{corollary}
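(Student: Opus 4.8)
The plan is to reduce the statement to an approximate equality of cosines. Writing the two angles via the standard formula, $\cos\theta_1 = \langle \mathbf{1}_{N_x}, \mathbf{x}\rangle/(\|\mathbf{1}_{N_x}\|\,\|\mathbf{x}\|)$ and $\cos\theta_2 = \langle \mathbf{1}_{N_1}, \mathbf{W}^{\epsilon}\mathbf{x}\rangle/(\|\mathbf{1}_{N_1}\|\,\|\mathbf{W}^{\epsilon}\mathbf{x}\|)$, it suffices to show $\cos\theta_1 \simeq \cos\theta_2$ for sufficiently small $\epsilon$; since $\theta_1,\theta_2\in[0,\pi]$ and $\arccos$ is continuous, this yields $\theta_1\simeq\theta_2$. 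The two ingredients are Theorem~\ref{thm2}, which controls the numerators, and Proposition~\ref{prop:orthogonality}, which controls the denominators.

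First I would rewrite Theorem~\ref{thm2} in the equivalent form $\langle \mathbf{x}, \mathbf{1}_{N_x}\rangle \simeq \sqrt{N_x/N_1}\,\langle \mathbf{W}^{\epsilon}\mathbf{x}, \mathbf{1}_{N_1}\rangle$, obtained by multiplying both sides by $N_x$ and using $(N_x/N_1)\sqrt{N_1/N_x}=\sqrt{N_x/N_1}$. Substituting this into $\cos\theta_1$ and using $\|\mathbf{1}_{N_x}\|=\sqrt{N_x}$ gives
\begin{equation*}
\cos\theta_1 \;\simeq\; \frac{\sqrt{N_x/N_1}\,\langle \mathbf{W}^{\epsilon}\mathbf{x}, \mathbf{1}_{N_1}\rangle}{\sqrt{N_x}\,\|\mathbf{x}\|} \;=\; \frac{\langle \mathbf{W}^{\epsilon}\mathbf{x}, \mathbf{1}_{N_1}\rangle}{\sqrt{N_1}\,\|\mathbf{x}\|}.
\end{equation*}
Then I would invoke Proposition~\ref{prop:orthogonality}: for $N_1\ge N_x$ one has $(\mathbf{W}^{\epsilon})^T\mathbf{W}^{\epsilon}=\mathbf{I}_{N_x}$, hence $\|\mathbf{W}^{\epsilon}\mathbf{x}\|=\|\mathbf{x}\|$, so the right-hand side equals $\langle \mathbf{W}^{\epsilon}\mathbf{x}, \mathbf{1}_{N_1}\rangle/(\sqrt{N_1}\,\|\mathbf{W}^{\epsilon}\mathbf{x}\|)=\cos\theta_2$. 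Combining, $\cos\theta_1\simeq\cos\theta_2$, and applying $\arccos$ completes the proof.

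The main obstacle is making the successive uses of ``$\simeq$'' compatible and, more subtly, converting closeness of cosines into closeness of angles: the additive error inherited from Theorem~\ref{thm2} and Proposition~\ref{prop:csum} is of order $\epsilon\sqrt{H_{s-1}}\approx\epsilon\sqrt{\log s}$ (up to dimension-dependent factors and norms of $\mathbf{x}$), which is uniformly small, but $\arccos$ has unbounded derivative near $\pm1$. Since $\mathbf{x}$ has positive entries, $\cos\theta_1$ is bounded away from $-1$; it can approach $1$ only when $\mathbf{x}$ is nearly proportional to $\mathbf{1}_{N_x}$, and in that regime $\mathbf{W}^{\epsilon}\mathbf{x}$ is correspondingly close to the $\mathbf{1}_{N_1}$ direction (because $\mathbf{W}^{\epsilon}\mathbf{1}_{N_x}$ lies near $\mathbf{1}_{N_1}$ by Proposition~\ref{prop:csum}), so $\theta_1$ and $\theta_2$ are both simultaneously near $0$ and the conclusion still holds. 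For $N_1<N_x$ one additionally needs $\|\mathbf{W}^{\epsilon}\mathbf{x}\|\simeq\|\mathbf{x}\|$, which fails in general since $(\mathbf{W}^{\epsilon})^T\mathbf{W}^{\epsilon}$ is only a projection onto $\mathrm{span}\{\hat{\mathbf{q}}_1,\dots,\hat{\mathbf{q}}_{N_1}\}$; I would therefore state the result in the practically relevant regime $N_1\ge N_x$ (in particular $N_1=N_x$), which is exactly the deep-and-narrow setting with essentially constant layer widths that the paper targets.
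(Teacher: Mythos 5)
Your proposal is correct and follows essentially the same route as the paper: express both angles through their cosines, use Theorem~\ref{thm2} to relate the numerators $\langle \mathbf{x},\mathbf{1}_{N_x}\rangle$ and $\langle \mathbf{W}^{\epsilon}\mathbf{x},\mathbf{1}_{N_1}\rangle$, and use the orthogonality of $\mathbf{W}^{\epsilon}$ to get $\lVert\mathbf{W}^{\epsilon}\mathbf{x}\rVert=\lVert\mathbf{x}\rVert$ in the denominator, concluding $\cos\theta_1\simeq\cos\theta_2$. Your added caveats --- that norm preservation genuinely requires $N_1\ge N_x$ (since for $N_1<N_x$ the matrix $(\mathbf{W}^{\epsilon})^T\mathbf{W}^{\epsilon}$ is only a projection) and that passing from cosines to angles needs care near $\cos\theta=1$ --- are refinements the paper's two-line proof silently skips, and they are sound.
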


\begin{proof}
Note that the orthogonality of $\mathbf{W}^{\epsilon}$ implies that 
$\left\lVert\mathbf{W}^{\epsilon}\mathbf{x}\right\rVert = \left\lVert\mathbf{x}\right\rVert$.
Therefore, by Theorem~\ref{thm2} one can see that 
 \begin{align*}
\cos{\theta}_2= \frac{\left\langle \mathbf{W}^{\epsilon}\mathbf{x}, \mathbf{1}_{N_1} \right\rangle}{\left\lVert\mathbf{W}^{\epsilon}\mathbf{x}\right\rVert\left\lVert\mathbf{1}_{N_1}\right\rVert} 
\simeq\sqrt{\frac{N_x}{N_1}}\frac{\left\langle \mathbf{x}, \mathbf{1}_{N_x} \right\rangle}{\left\lVert\mathbf{x}\right\rVert\left\lVert\mathbf{1}_{N_1}\right\rVert}
= \cos{\theta}_1.
\end{align*}
\end{proof}

Theorem~\ref{thm2} states that the average of $\mathbf{x}$ is linearly preserved in $\mathbf{W}^{\epsilon}\mathbf{x}$, where the proportion is almost $\sqrt{\frac{N_1}{N_x}}$. As demonstrated in Figure~\ref{fig:relu}, the average of $\mathbf{W}^{\epsilon}\mathbf{x}$ is approximately $0.35$, which is almost equal to the product of $\mathbf{x}$'s average of $0.5$ and $\sqrt{\frac{N_1}{N_x}}=\sqrt{\frac{100}{200}}\approx 0.7$. Corollary~\ref{cor1} states that the angle between $\mathbf{x}$ and the one vector is preserved in the angle between $\mathbf{W}^{\epsilon}\mathbf{x}$ and the one vector. 
If the entries of $\mathbf{x}$ are all positive, then the angle between the one vector and $\mathbf{x}$ is acute. Corollary~\ref{cor1} implies that the orthant in which $\mathbf{W}^{\epsilon}\mathbf{x}$ resides will predominantly be composed of positive values.

Now we consider a deep network of depth $\ell$ with linear activation function and zero bias. According to the definition of the proposed initial weight matrix, the following equation is satisfied.
\begin{align*}
\mathbf{y} = \mathbf{W}^{\epsilon}_{N_{\ell}\times N_{\ell -1}}\cdots \mathbf{W}^{\epsilon}_{N_{1}\times N_{x}}\mathbf{x}
=\mathbf{W}^{\epsilon}_{N_{\ell}\times N_{x}}\mathbf{x}.
\end{align*}
It means that regardless of the network's depth, $\mathbf{y}$ satisfies Theorem~\ref{thm2} and Corollary~\ref{cor1}, provided that $\epsilon$ is sufficiently small.

\begin{figure*}[t!]
\centering
\begin{subfigure}{0.33\linewidth}
\includegraphics[width=\linewidth]{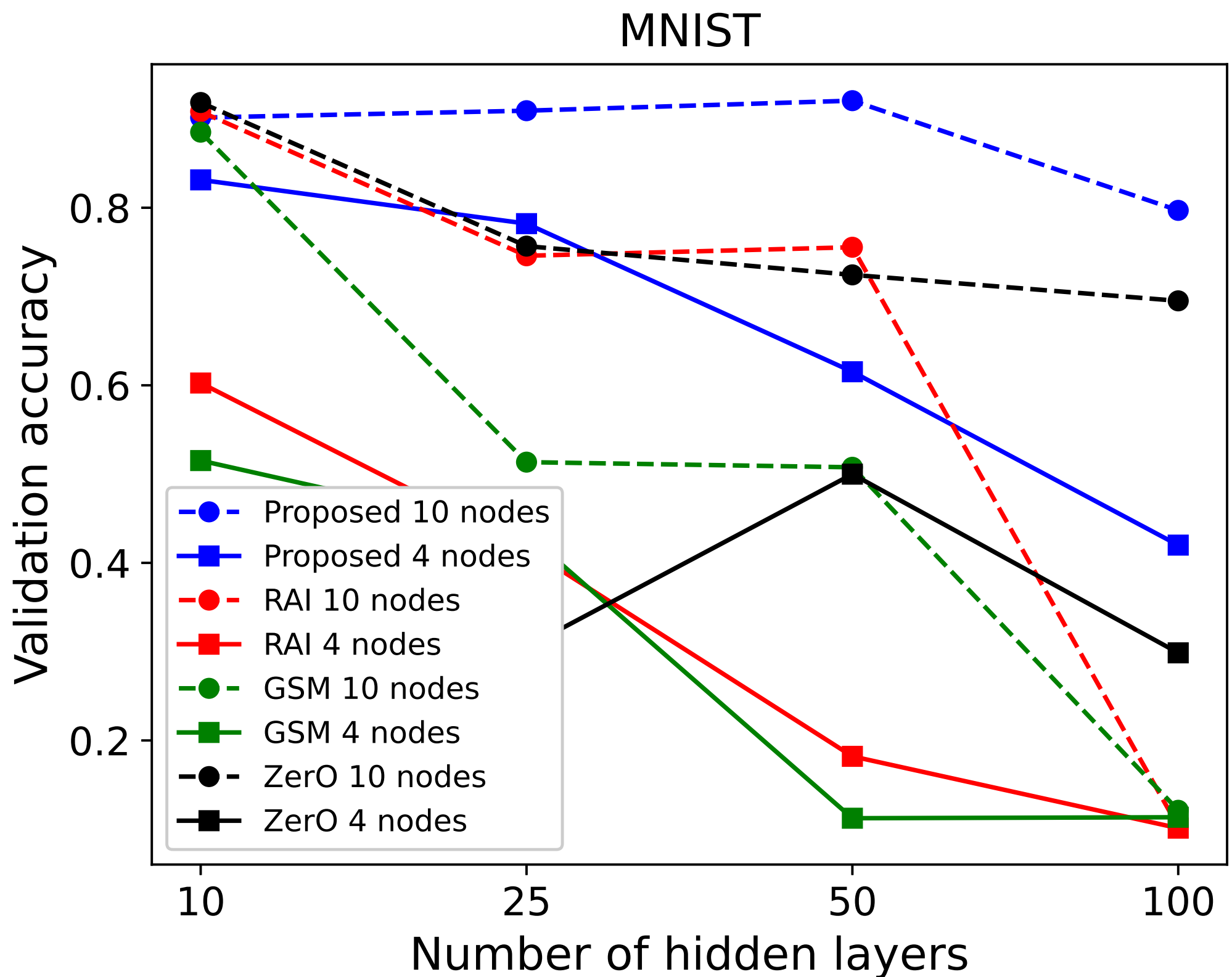}
\caption{MNIST}
\end{subfigure}
\hspace{1em}%
\begin{subfigure}{0.33\linewidth}
\includegraphics[width=\linewidth]{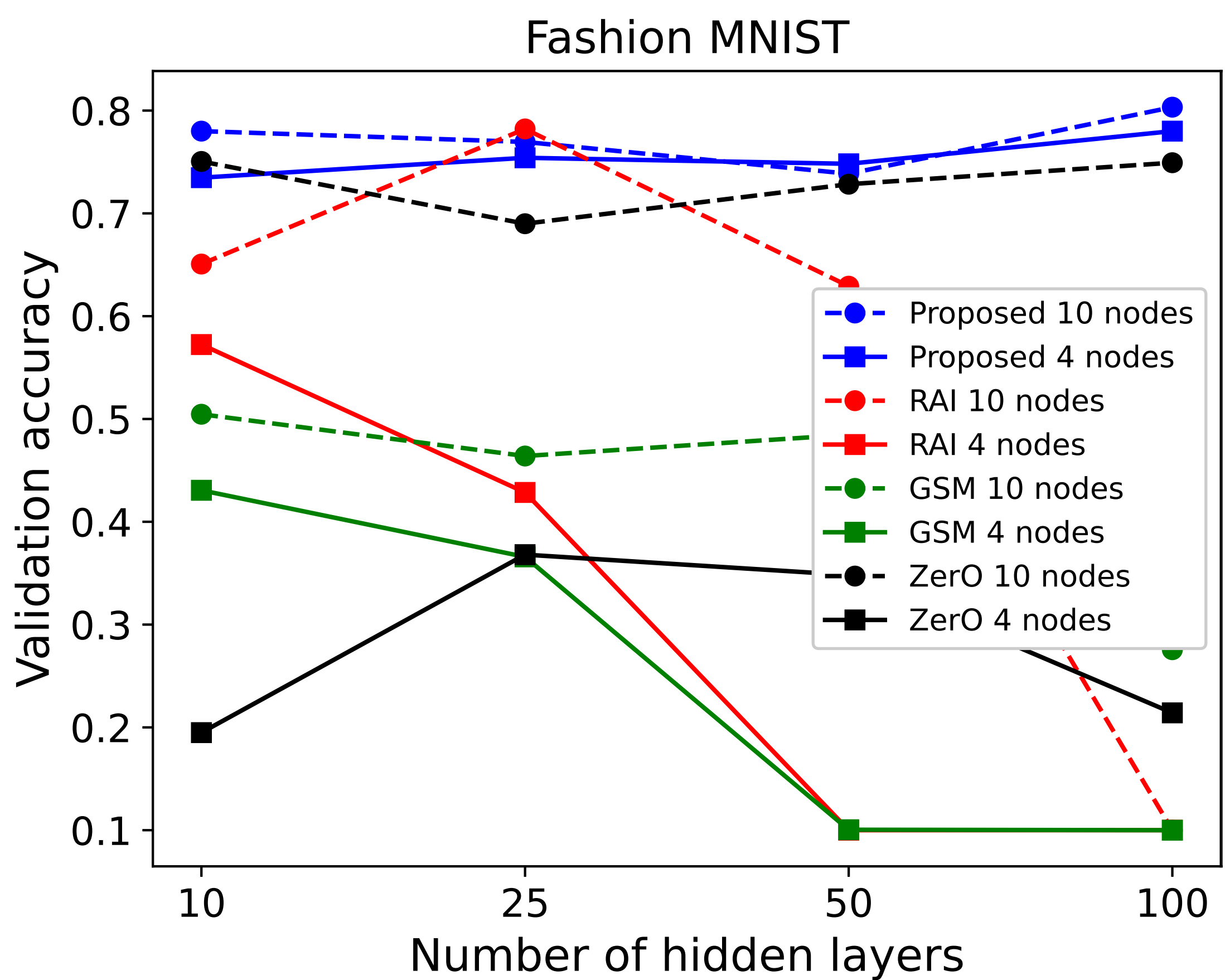}
\caption{Fashion MNIST}
\end{subfigure}
\\

\begin{subfigure}{0.33\linewidth}
\includegraphics[width=\linewidth]{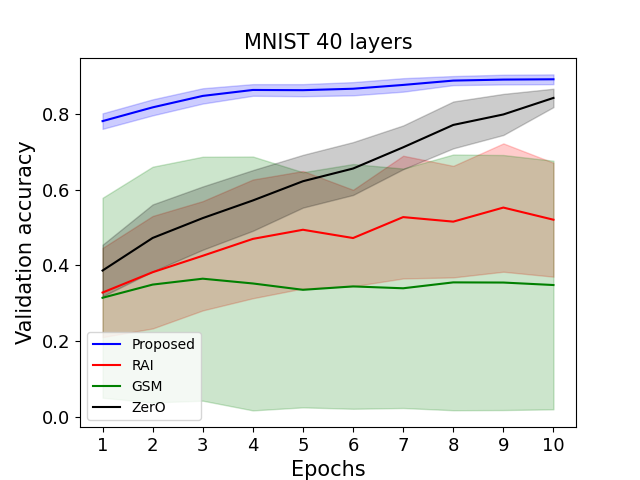}
\caption{MNIST 40 hidden layers}\label{fig:binarySkeleton}
\end{subfigure}
\hfill
\begin{subfigure}{0.33\linewidth}
\includegraphics[width=\linewidth]{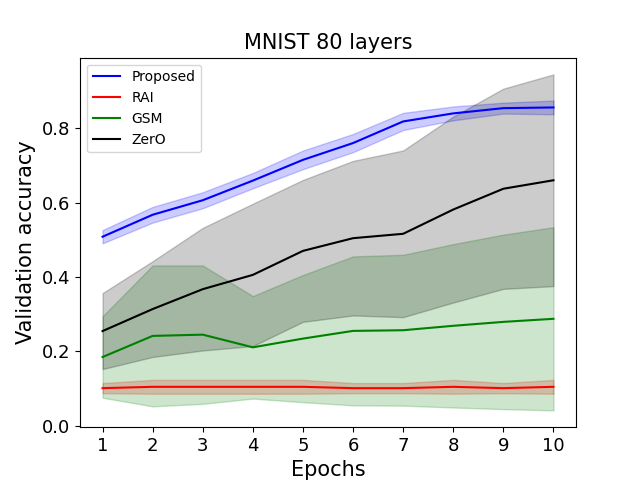}
\caption{MNIST 80 hidden layers}
\end{subfigure}
\hfill
\begin{subfigure}{0.33\linewidth}
\includegraphics[width=\linewidth]{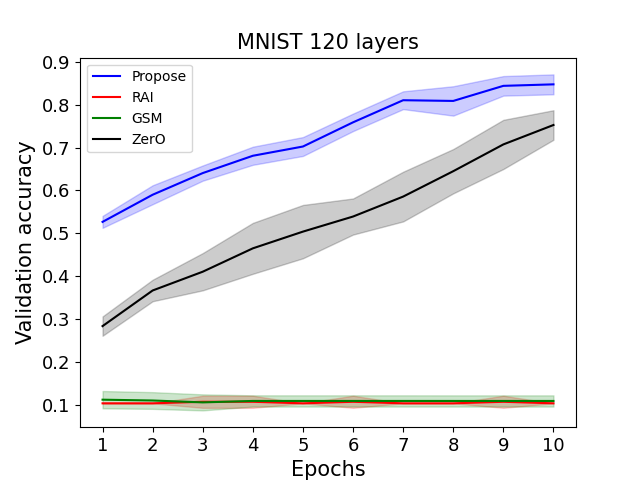}
\caption{MNIST 120 hidden layers }
\end{subfigure}
\caption{Validation accuracy for FFNNs with ReLU activation is presented across varying depths to explore layer independence. (a) and (b) investigate networks where all hidden layers maintain the same dimension. (c), (d), and (e) investigate networks consisting of a layer with 10 nodes and a layer with 6 nodes, repeated throughout the structure.}
\label{fig:notsame}
\end{figure*}

\smallskip

\section{Experimental results}\label{sec:experiments}
In Section~\ref{subsec:Benchmark}, we describe the experimental environment, benchmark datasets, and the methods 
compared in this paper.
In Section~\ref{subsec:prior}, we introduce existing methods that effectively prevent the dying ReLU problem in deep 
and narrow FFNNs with ReLU activation function, and we present the settings used for each method in 
our experiments.
Section~\ref{subsec:various_settings} presents the results from experiments across diverse dataset sizes and network architectures. 
In Section~\ref{subsec:lay_indep}, we experiment to investigate the trainability of networks at various depths.
In Section~\ref{subsec:node_indep}, we conduct experiments to determine the feasibility of training on extremely narrow networks. Finally, Section~\ref{subsec:activation_indep} investigates the trainability of networks at various activation functions.

\subsection{Experimental Settings}\label{subsec:Benchmark}
Section~4 analyzed a range of weight initialization methods via a comprehensive experimental approach.
We referred to our method as \textquotedblleft Proposed".
As other methods, we used Xavier
initialization, He initialization, ZerO initialization, Identity initialization, Orthogonal initialization, RAI, and GSM.
To assess the effectiveness of the proposed weight initialization method, 
we conducted experiments on MNIST, Fashion MNIST, Wine Quality dataset, and Iris. 
And $15$\% of the total dataset served as the validation dataset.
We trained the network using cross-entropy loss implemented the neural network in Python with Tensorflow, and trained the neural network on a computer with GPU RTX TITAN. And we used Adam optimizer with a learning rate of $0.001$ and a batch size of $100$.
Empirically setting $\epsilon$ to 0.1, we configured all activation function parameters to their TensorFlow default values. For each dataset, we maintained consistent hyperparameter settings for all experiments. Each experiment was repeated ten times with random seeds.

\subsection{Prior Weight Initialization Method for FFNNs}\label{subsec:prior}
We briefly review Randomized Asymmetric Initialization (RAI)~\cite{reluinit1} and Gaussian Submatrix Initialization (GSM)~\cite{reluinit2}, both of which are designed to enhance the training of deep and narrow feedforward neural networks. 
And we also present the settings for each of them.
The RAI weight initialization method is a technique used to address the \textquotedblleft dying ReLU" problem in 
deep FFNNs. It involves the creation of an initialization matrix with random values drawn from an asymmetric probability distribution. 
In this paper, we employed Beta(2,1) probability distribution. 
And standard deviation of the weight matrix was 
$-\frac{2\sqrt{2}}{3\sqrt{\pi}} + \sqrt{1+\frac{8}{9\pi}}\approx 0.6007$ adopting a setting similar to that in \cite{reluinit1}. The GSM is a weight initialization method for ReLU layers that ensures perfect dynamical isometry. In this paper, we constructed the submatrix using the He initialization method.

\subsection{Experiments in Various Settings}\label{subsec:various_settings}
In this section, we conducted experiments using the MNIST and Fashion MNIST
datasets. We compared various initialization methods while varying the dataset size with FFNNs with ReLU activation function.
As shown in Table~\ref{table:nohidden}, we measured the validation accuracy in various settings at $10$ and $100$ epochs. The term \textquoteleft $k$ samples per class' indicates that each class consists of $k$ number of samples. We conducted experiments for $1,2,4$ samples per class and the entire dataset. 
\begin{figure*}[h!]
\begin{subfigure}{0.26\linewidth}
 \includegraphics[width=\linewidth]{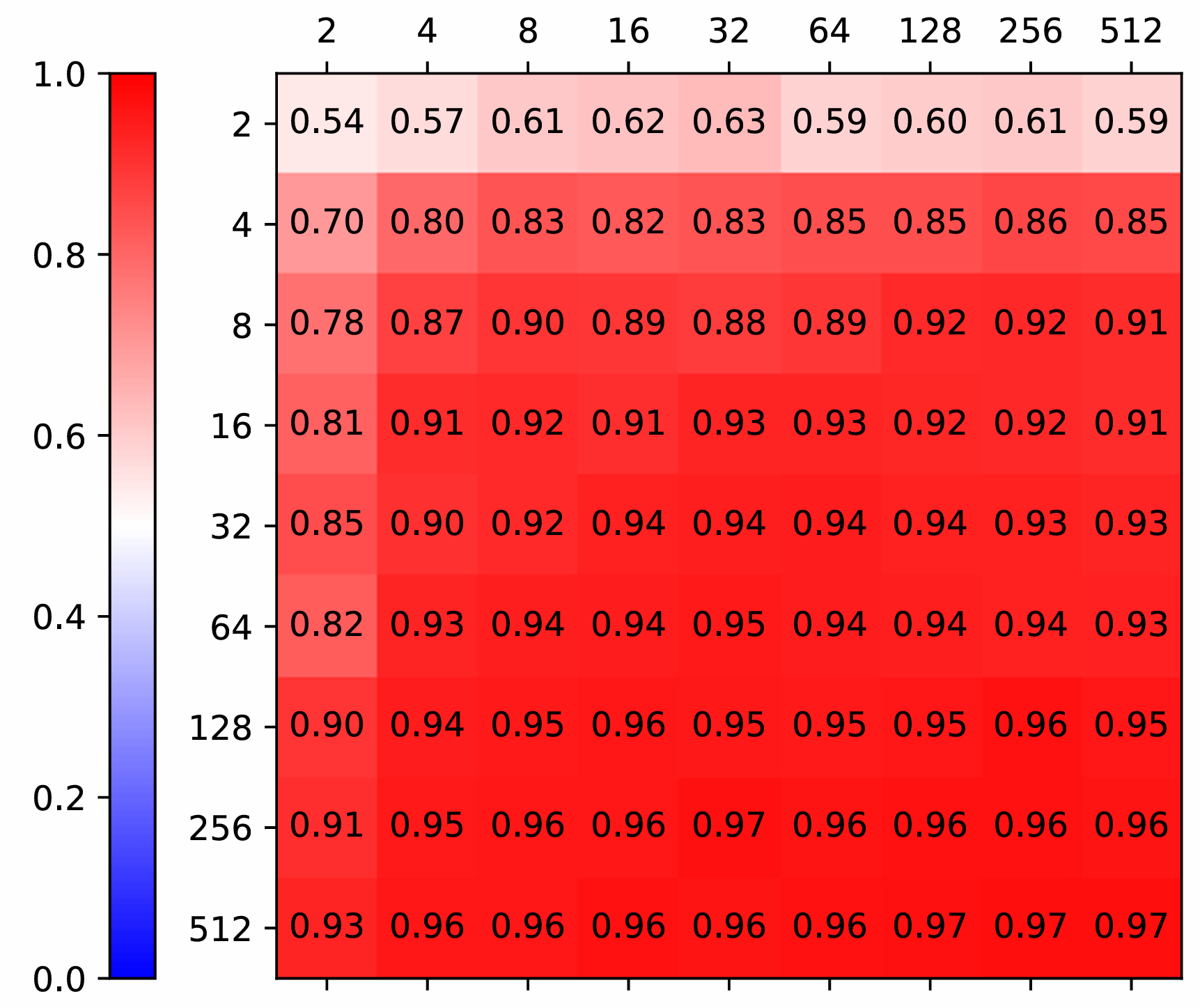}
\caption{Proposed}
\end{subfigure}
\hfill
\begin{subfigure}{0.23\linewidth}
\includegraphics[width=\linewidth]{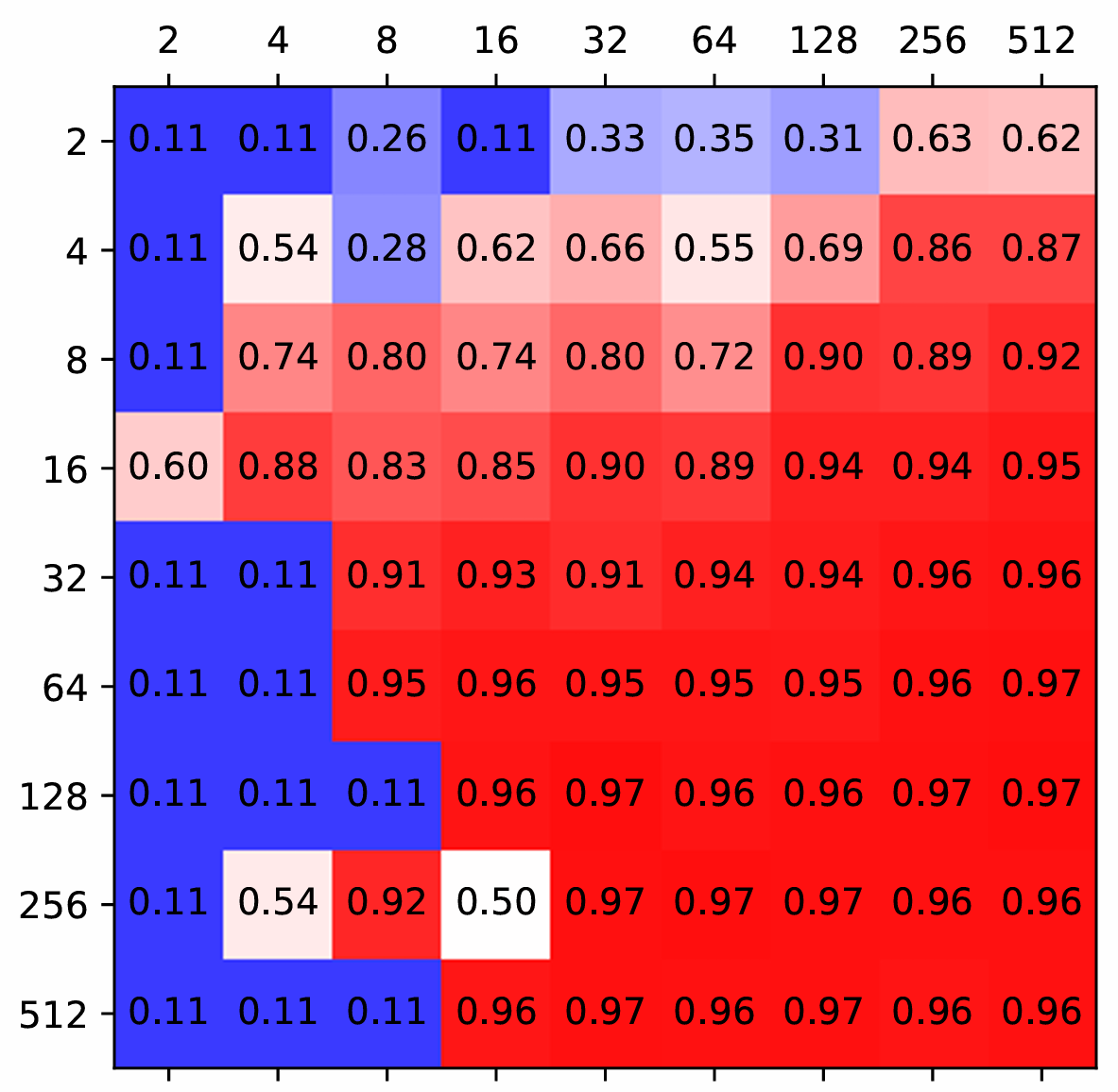}
\caption{Orthogonal}
\end{subfigure}
\hfill
\begin{subfigure}{0.23\linewidth}
\includegraphics[width=\linewidth]{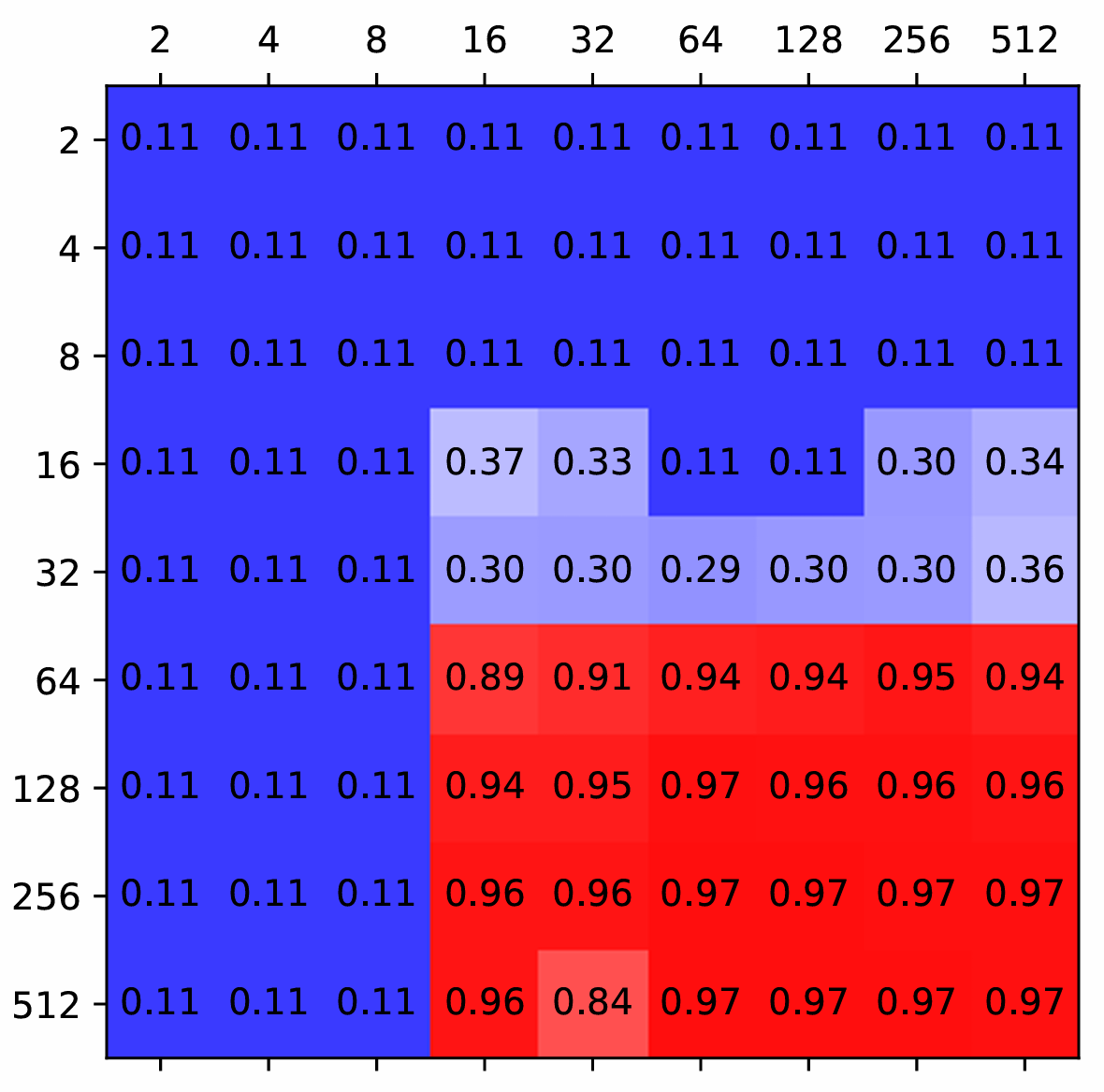}
\caption{Identity}
\end{subfigure}
\hfill
\begin{subfigure}{0.23\linewidth}
\includegraphics[width=\linewidth]{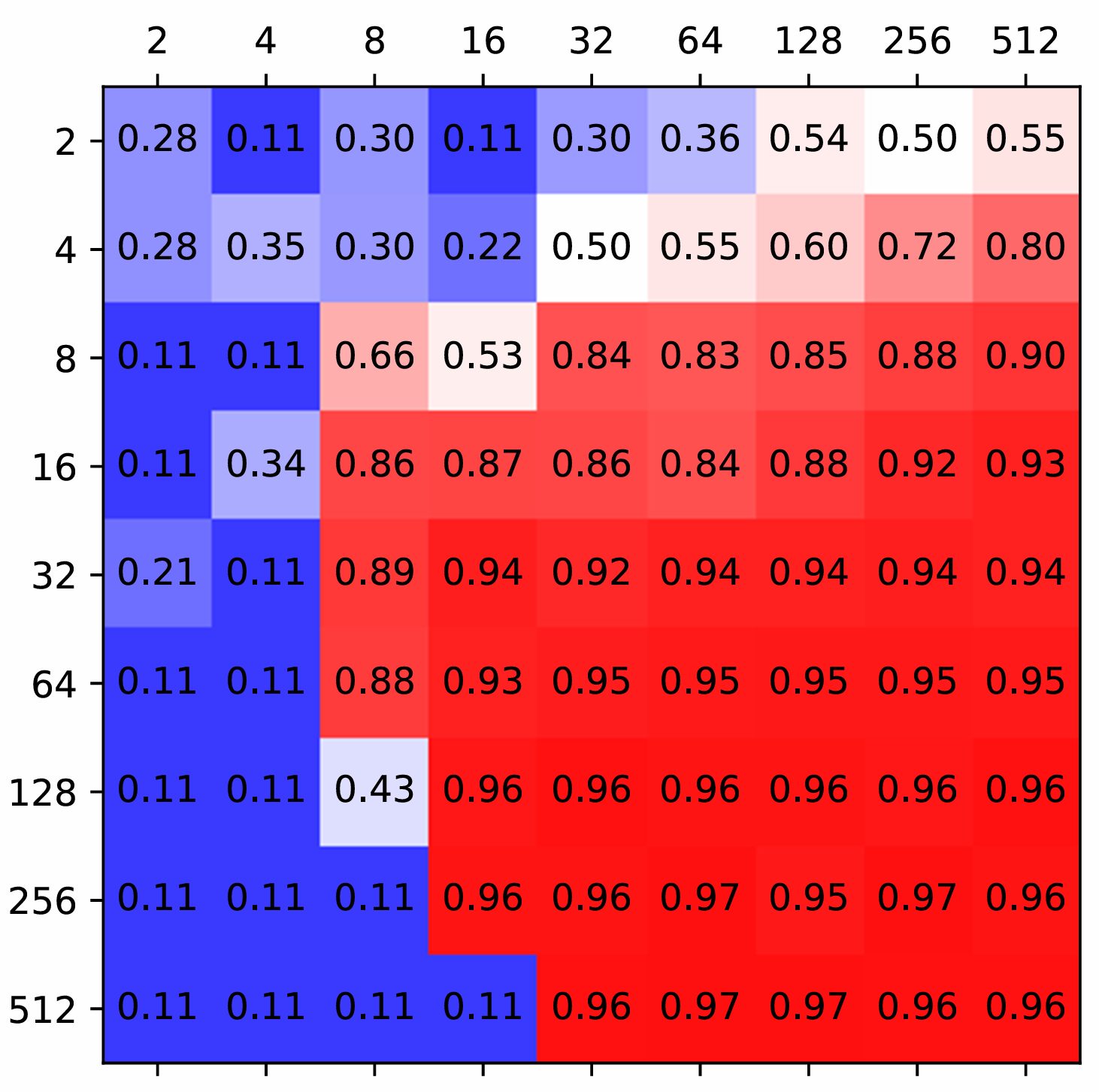}
\caption{Xavier}
\end{subfigure}
\\

\begin{subfigure}{0.23\linewidth}
 \includegraphics[width=\linewidth]{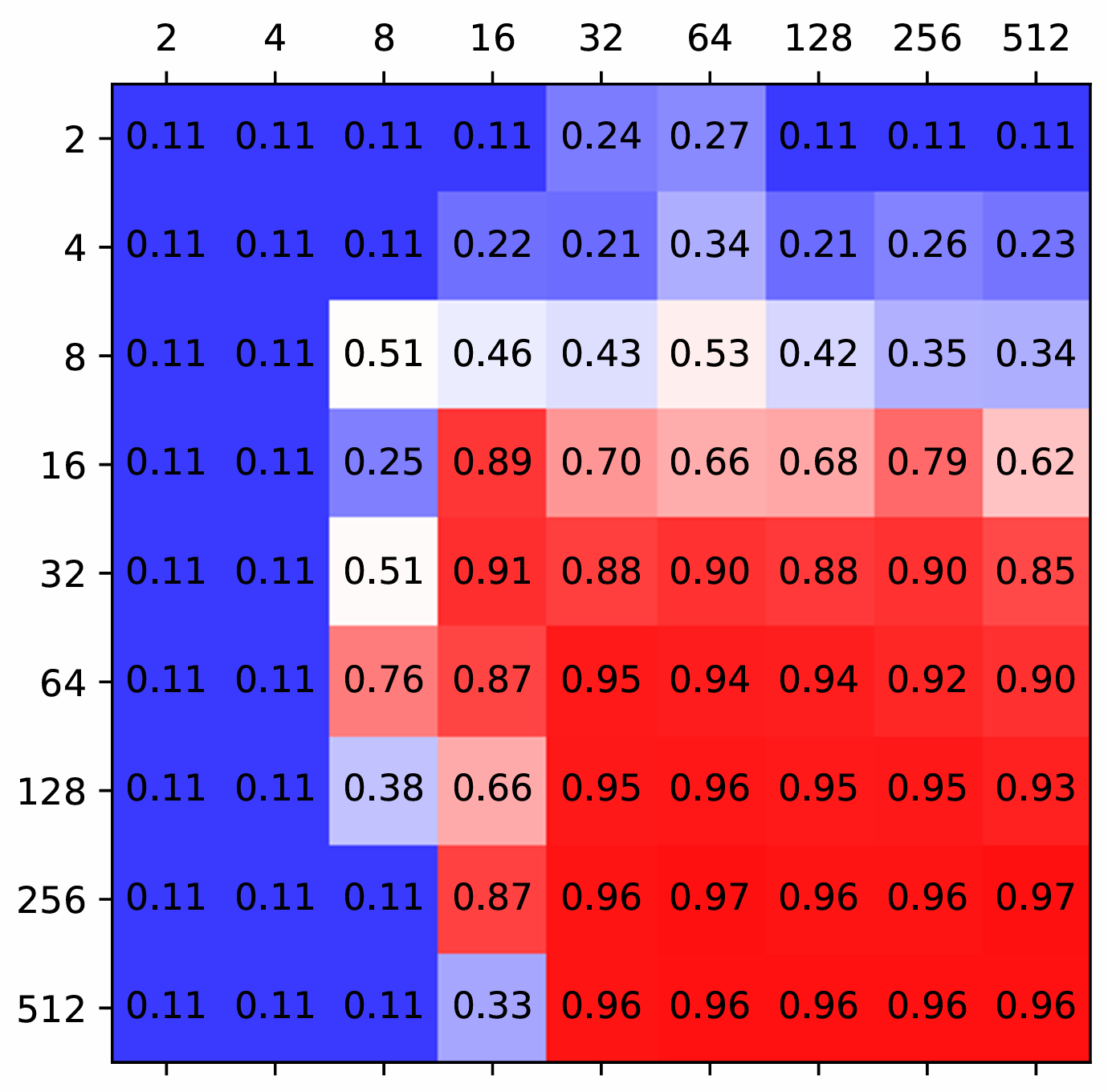}
\caption{He}
\end{subfigure}
\hfill
\begin{subfigure}{0.23\linewidth}
\includegraphics[width=\linewidth]{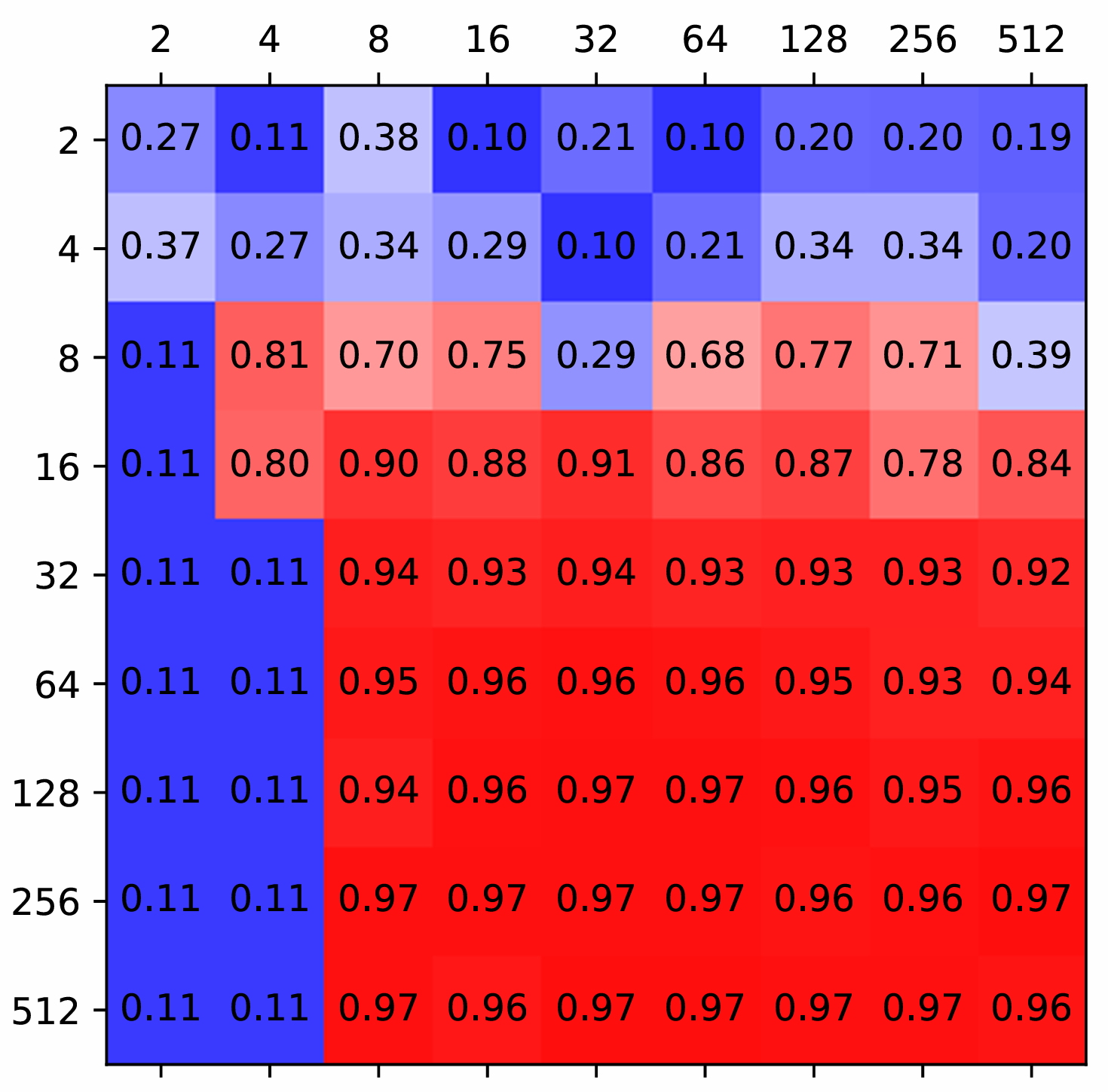}
\caption{RAI}
\end{subfigure}
\hfill
\begin{subfigure}{0.23\linewidth}
\includegraphics[width=\linewidth]{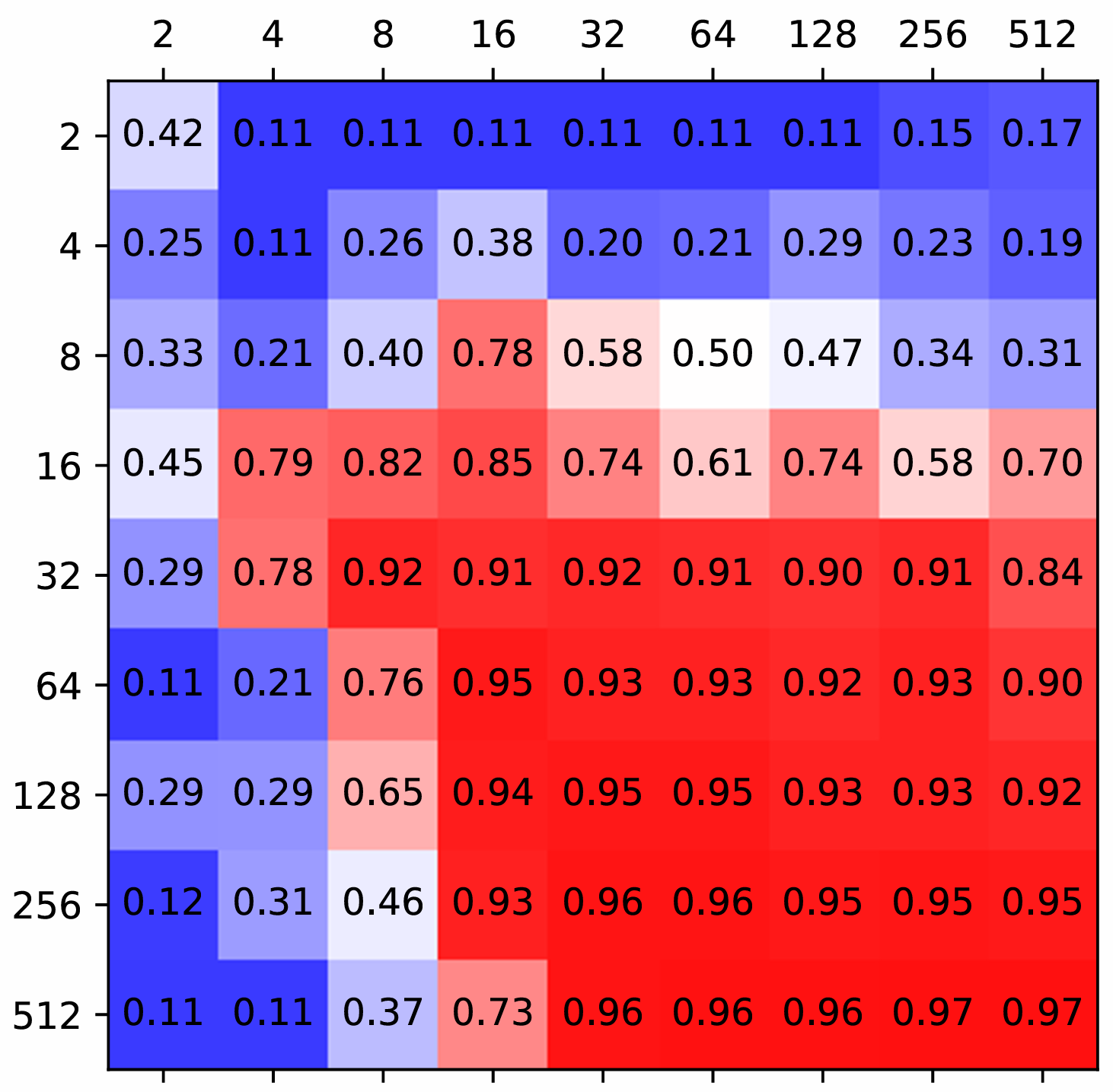}
\caption{GSM}
\end{subfigure}
\hfill
\begin{subfigure}{0.23\linewidth}
\includegraphics[width=\linewidth]{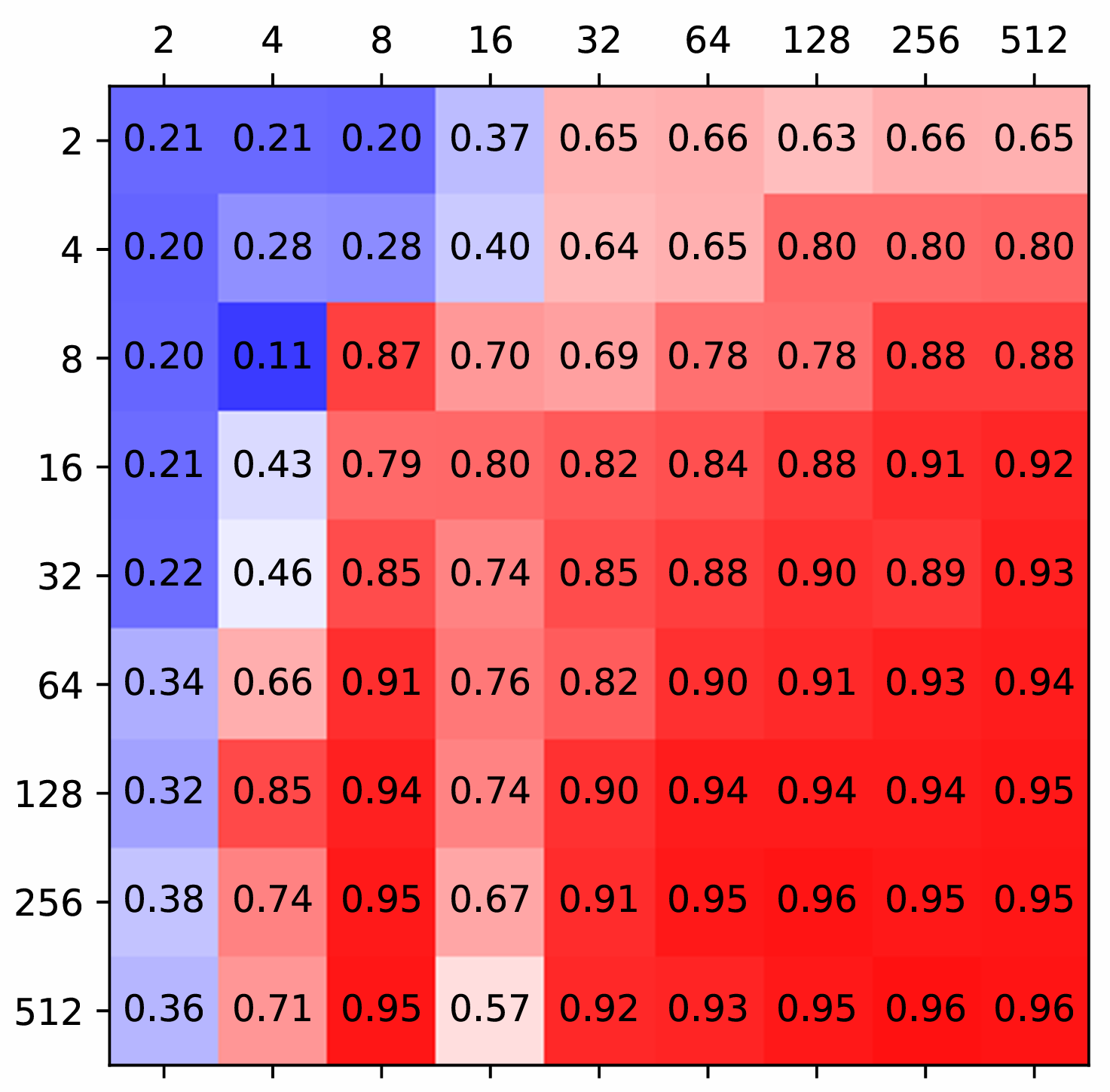}
\caption{ZerO}
\end{subfigure}
\caption{A validation accuracy is presented for FFNNs with two hidden layers and ReLU activation function.
The $y$-axis (resp. $x$-axis) presents the number of nodes in the first (resp. second) hidden layer.
Each is trained on MNIST dataset for 10 epochs.}
 \label{fig:nodesheat1}
\end{figure*}
In the table, (0), (512), and (16) denote the number of nodes in a single hidden layer of FFNNs with ReLU activation function.
In detail, $(0)$ signifies an FFNN without hidden layers; $(512)$ corresponds to one with a single hidden layer of 512 nodes; and $(16)$, one with a single hidden layer of 16 nodes. 
With no hidden layers, the proposed method consistently achieved higher validation accuracy at 10 epochs, irrespective of the dataset size. 
Both identity initialization and Zero initialization also demonstrated high validation accuracy. Zero initialization is an identity matrix when the number of rows in the weight matrix is less than the number of columns. These three weight initialization methods outperformed random weight initialization in networks without hidden layers. Furthermore, even for small values of $k$, all three methods exhibited good performance. When $k$ equals $1$, the decline in accuracy observed for the proposed method on the FMNIST dataset at $100$ epochs was attributed to overfitting, possibly due to excessively rapid convergence.
An FFNN with $512$ nodes can be considered a wide FFNN, whereas an FFNN with $16$ nodes is relatively narrow.
We conducted comparative experiments on these two FFNNs to assess how independent our proposed method is regarding the number of nodes.
Generally, in narrow networks with fewer nodes, learning is less effective compared to wider networks with a larger number of nodes.
However, the proposed weight initialization exhibited significantly higher validation accuracy even with 16 nodes, surpassing other weight initialization methods.
Contrastingly, in networks with $512$ nodes, it exhibited validation accuracy similar to other weight initialization methods. 
The reason is that in the narrow network, the dying ReLU problem is particularly detrimental to network training. To demonstrate that the proposed method is independent of both network depth and width, more diverse experiments are needed.

\begin{figure*}[h]
\begin{subfigure}{0.26\linewidth}
 \includegraphics[width=\linewidth]{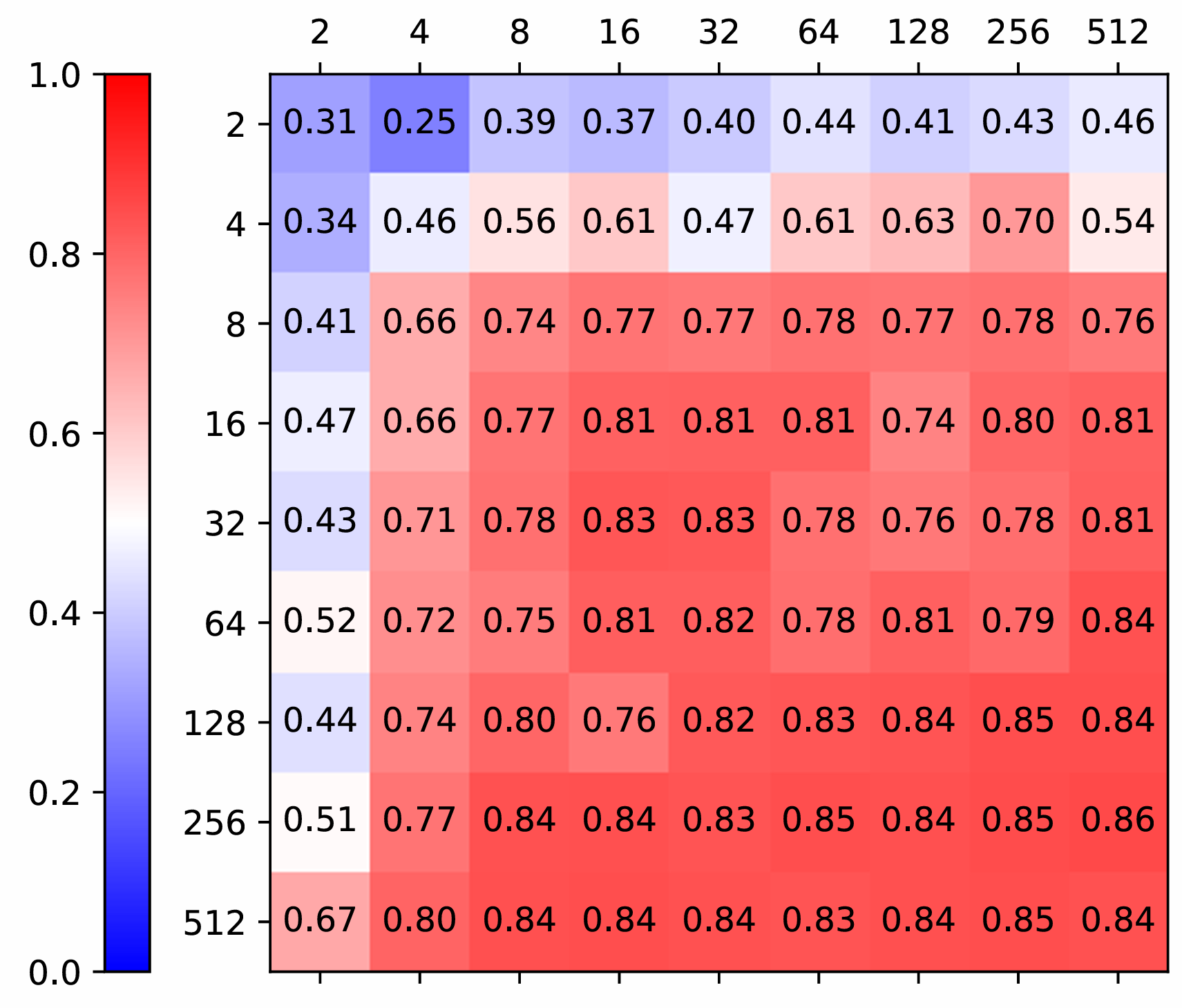}
\caption{Proposed}
\end{subfigure}
\hfill
\begin{subfigure}{0.23\linewidth}
\includegraphics[width=\linewidth]{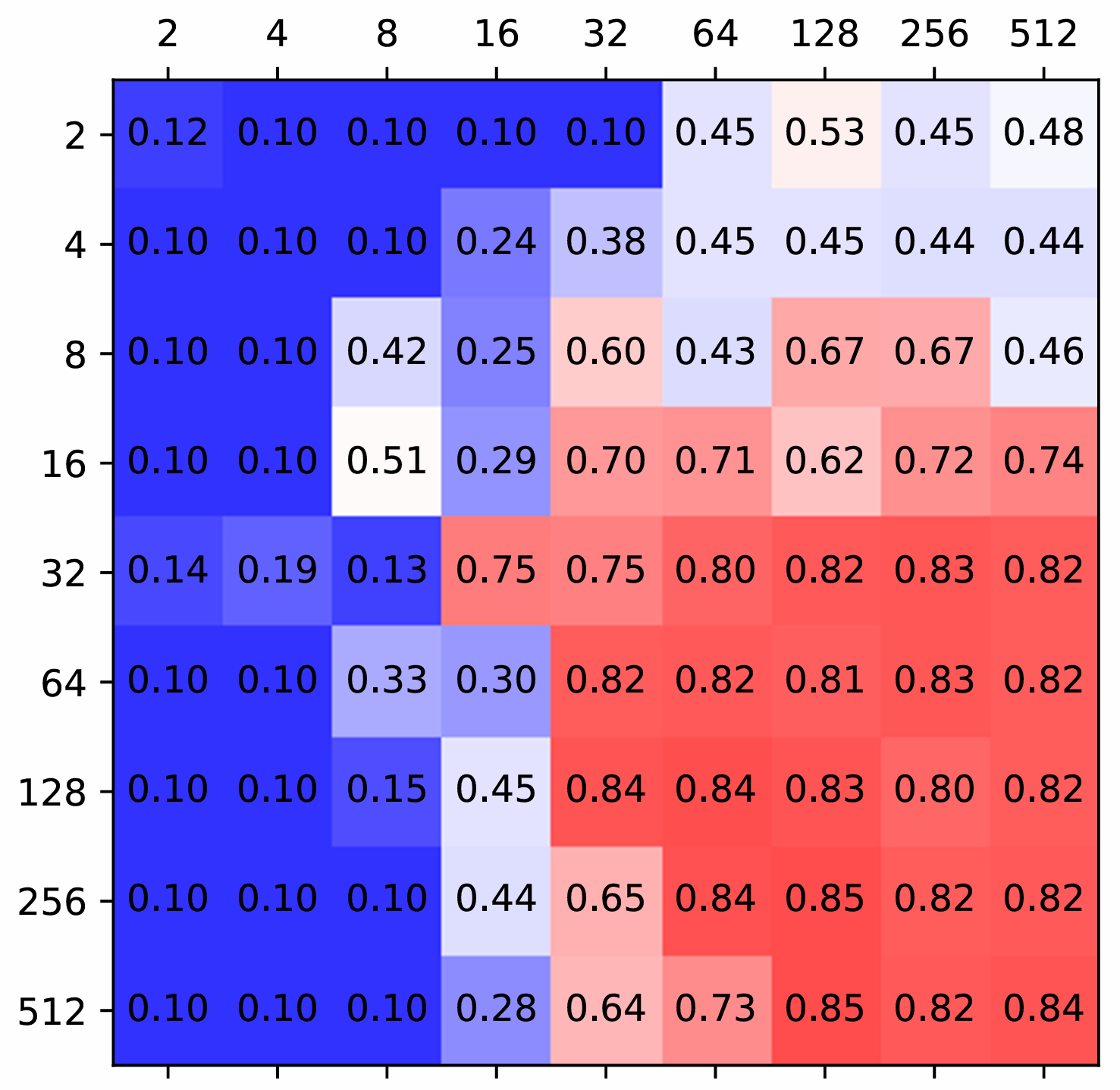}
\caption{Orthogonal}
\end{subfigure}
\hfill
\begin{subfigure}{0.23\linewidth}
\includegraphics[width=\linewidth]{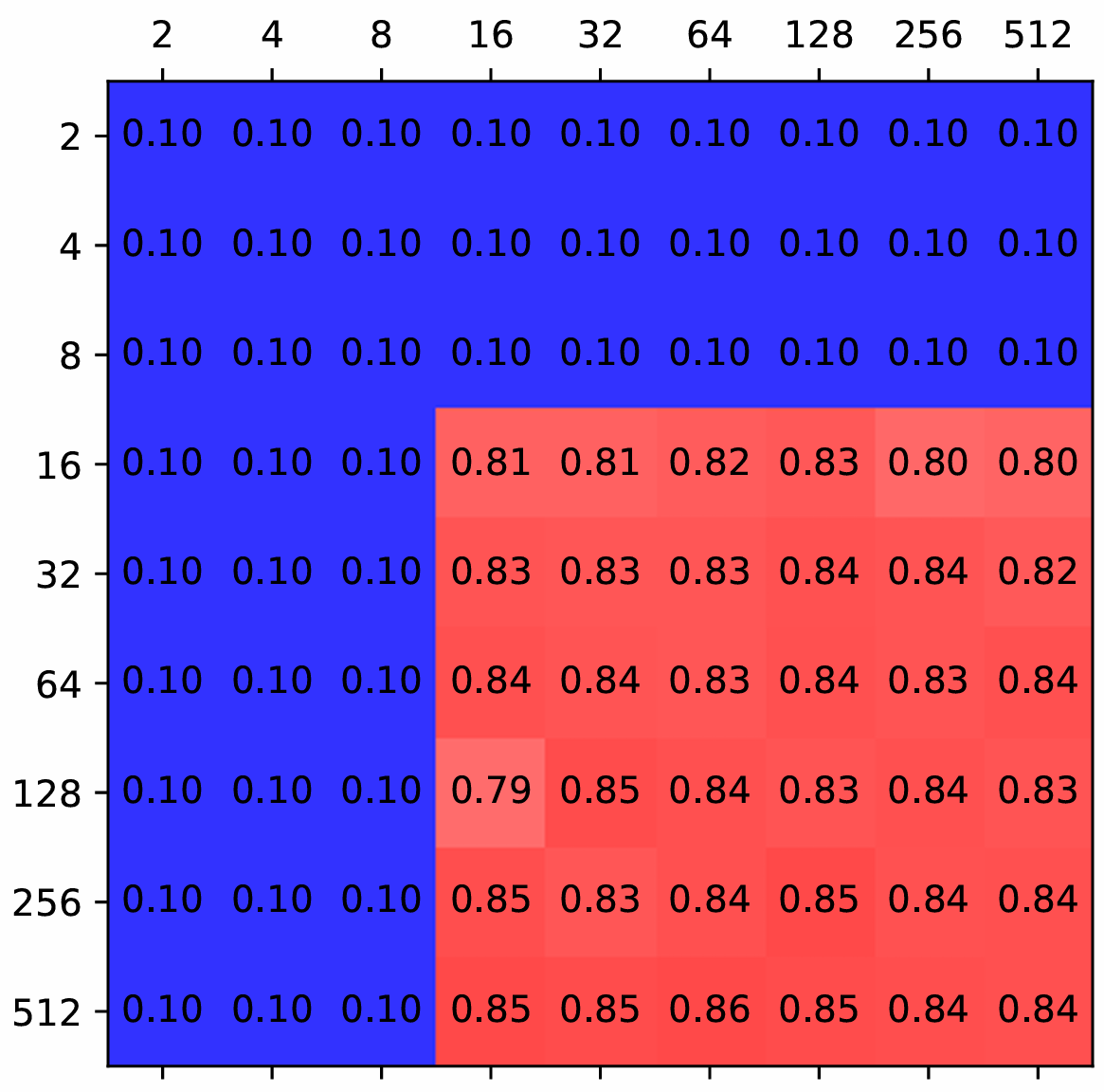}
\caption{Identity}
\end{subfigure}
\hfill
\begin{subfigure}{0.23\linewidth}
\includegraphics[width=\linewidth]{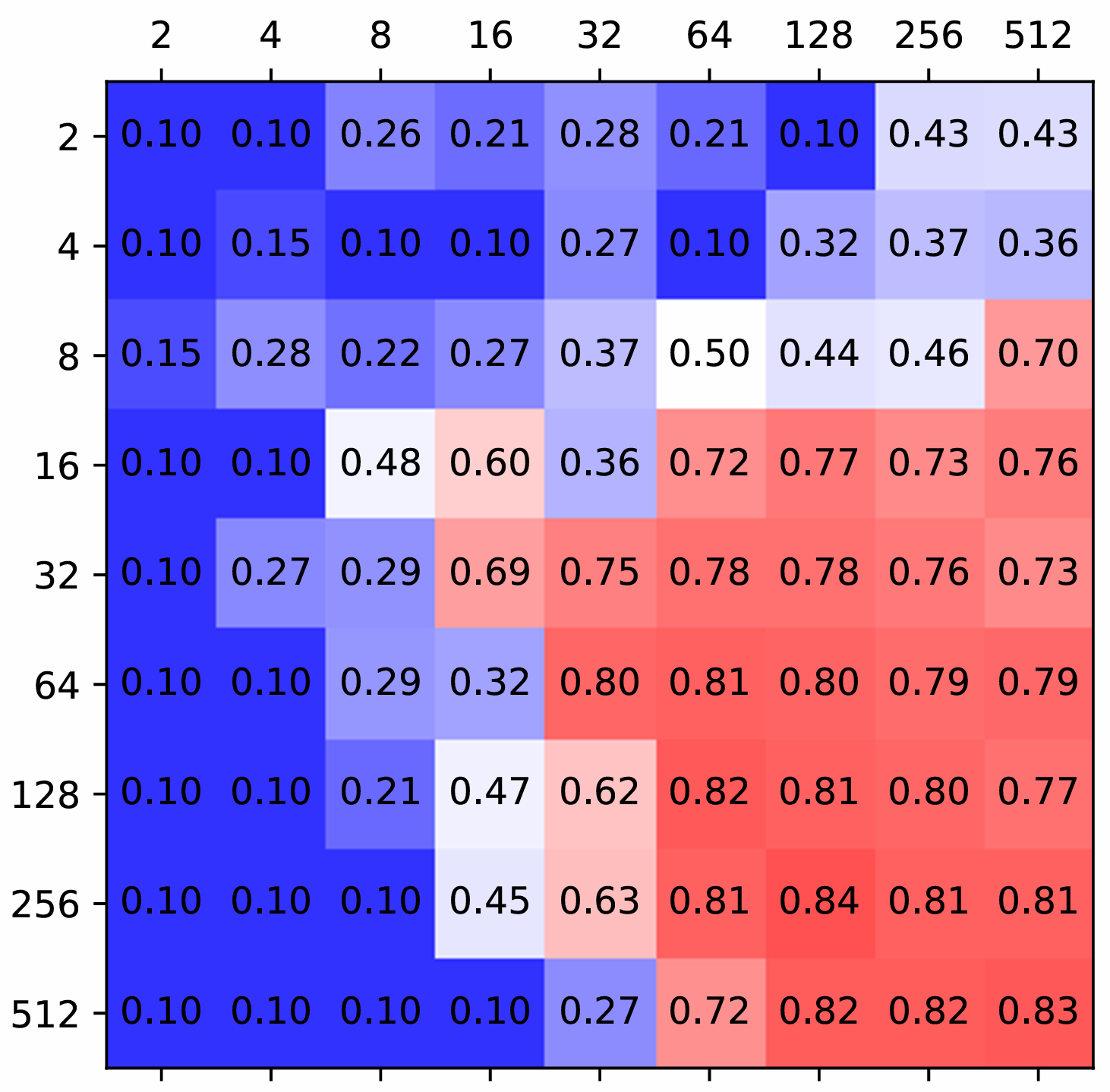}
\caption{Xavier}
\end{subfigure}
\\

\begin{subfigure}{0.23\linewidth}
 \includegraphics[width=\linewidth]{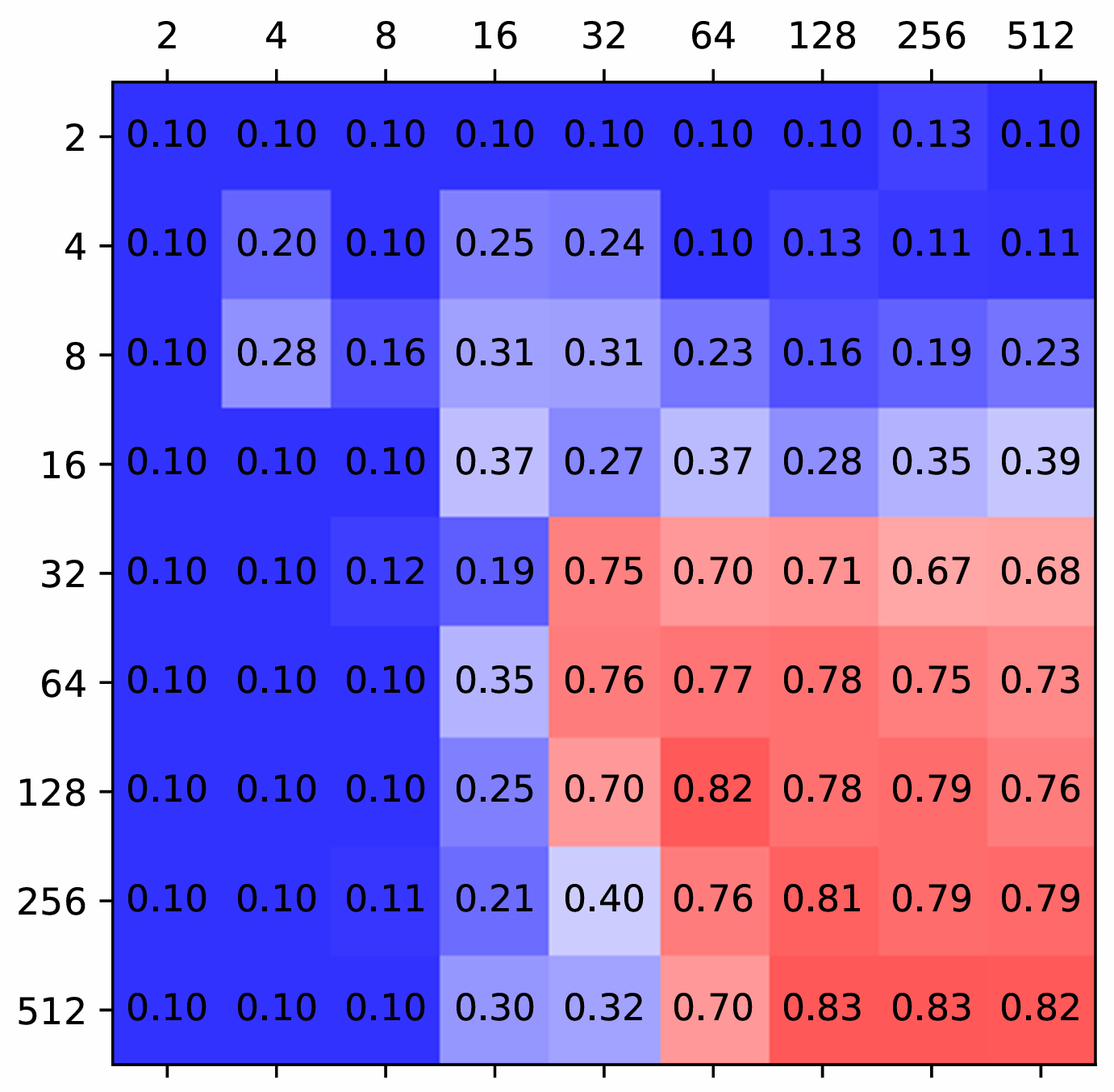}
\caption{He}
\end{subfigure}
\hfill
\begin{subfigure}{0.23\linewidth}
\includegraphics[width=\linewidth]{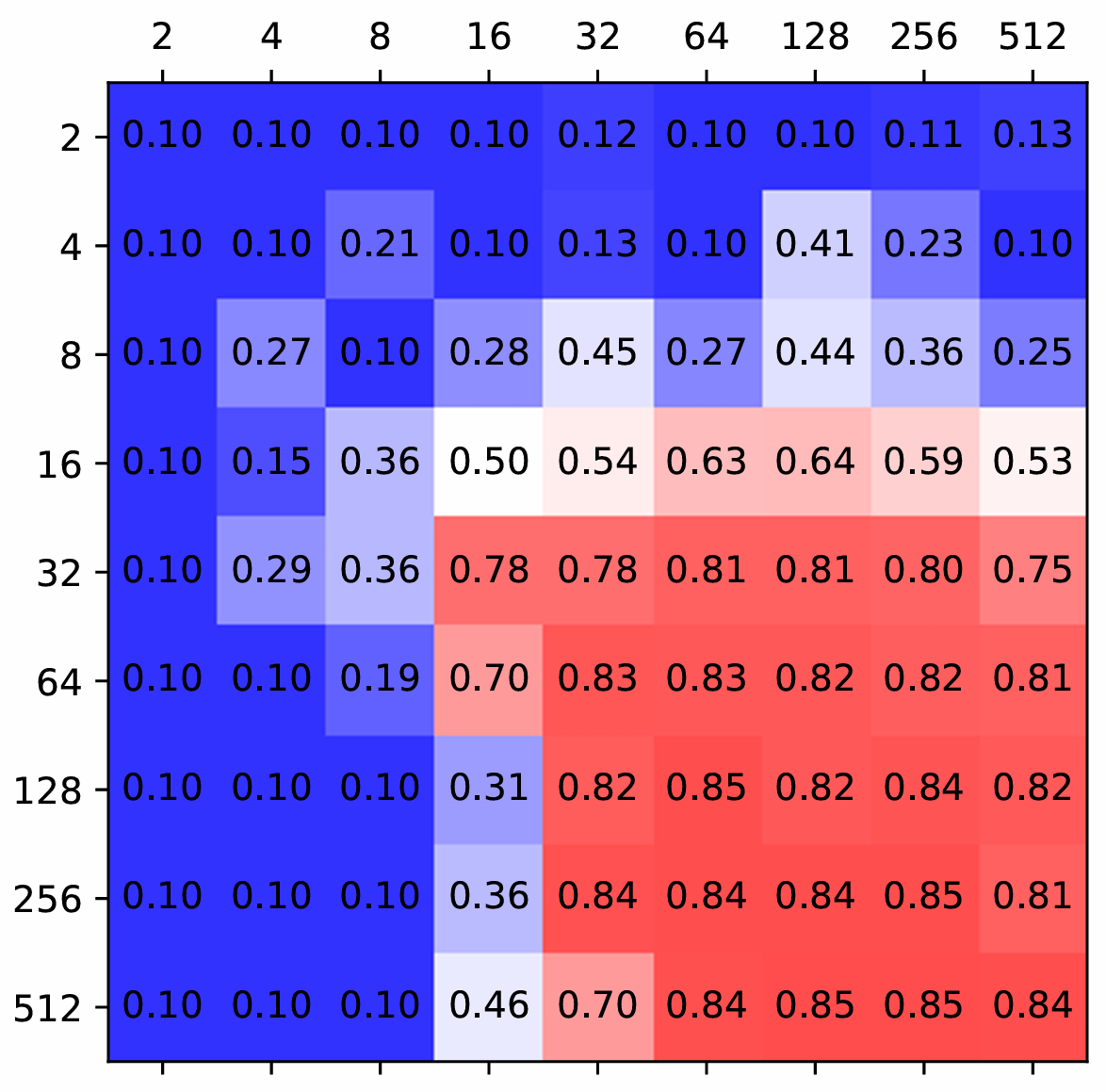}
\caption{RAI}
\end{subfigure}
\hfill
\begin{subfigure}{0.23\linewidth}
\includegraphics[width=\linewidth]{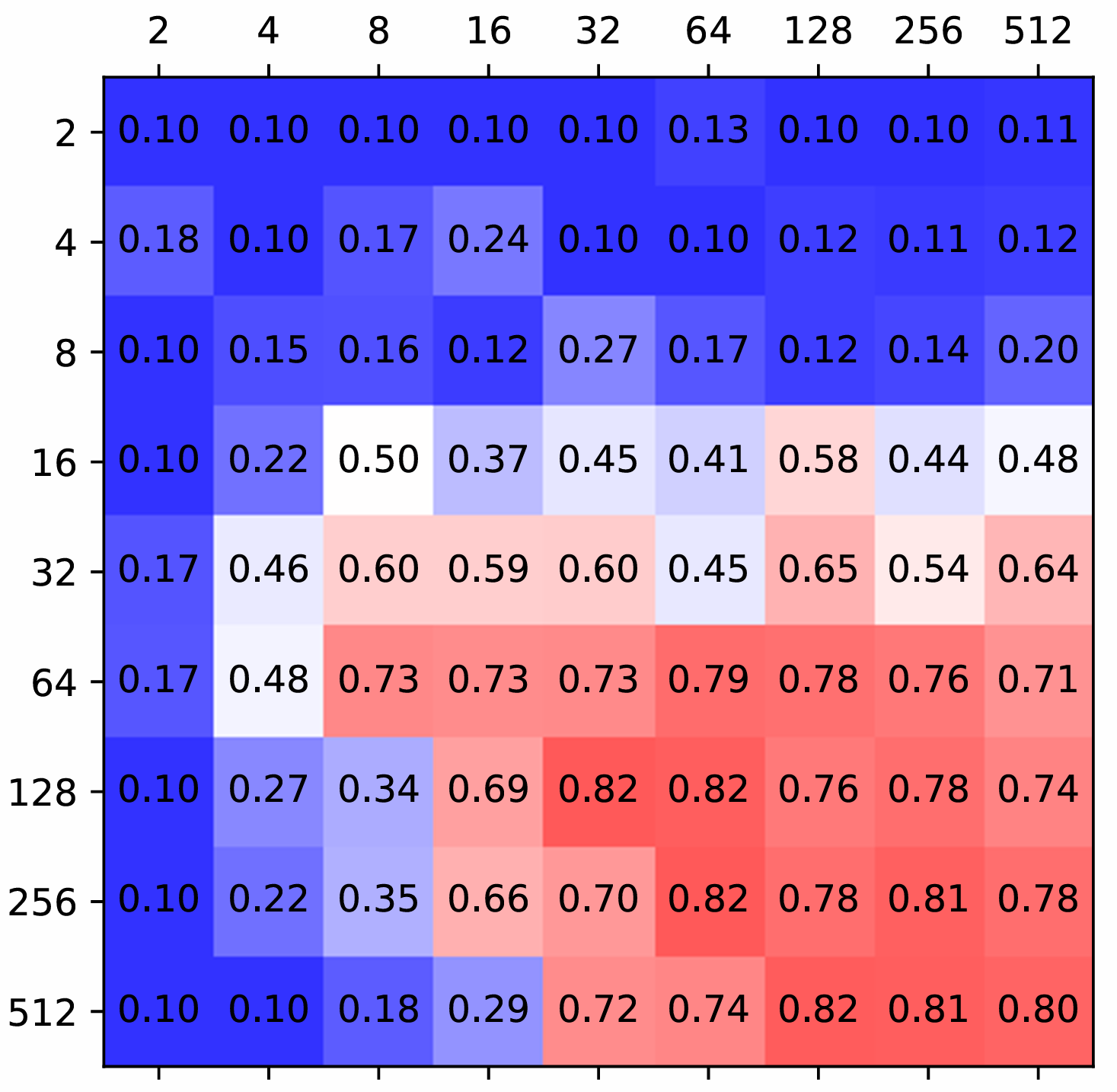}
\caption{GSM}
\end{subfigure}
\hfill
\begin{subfigure}{0.23\linewidth}
\includegraphics[width=\linewidth]{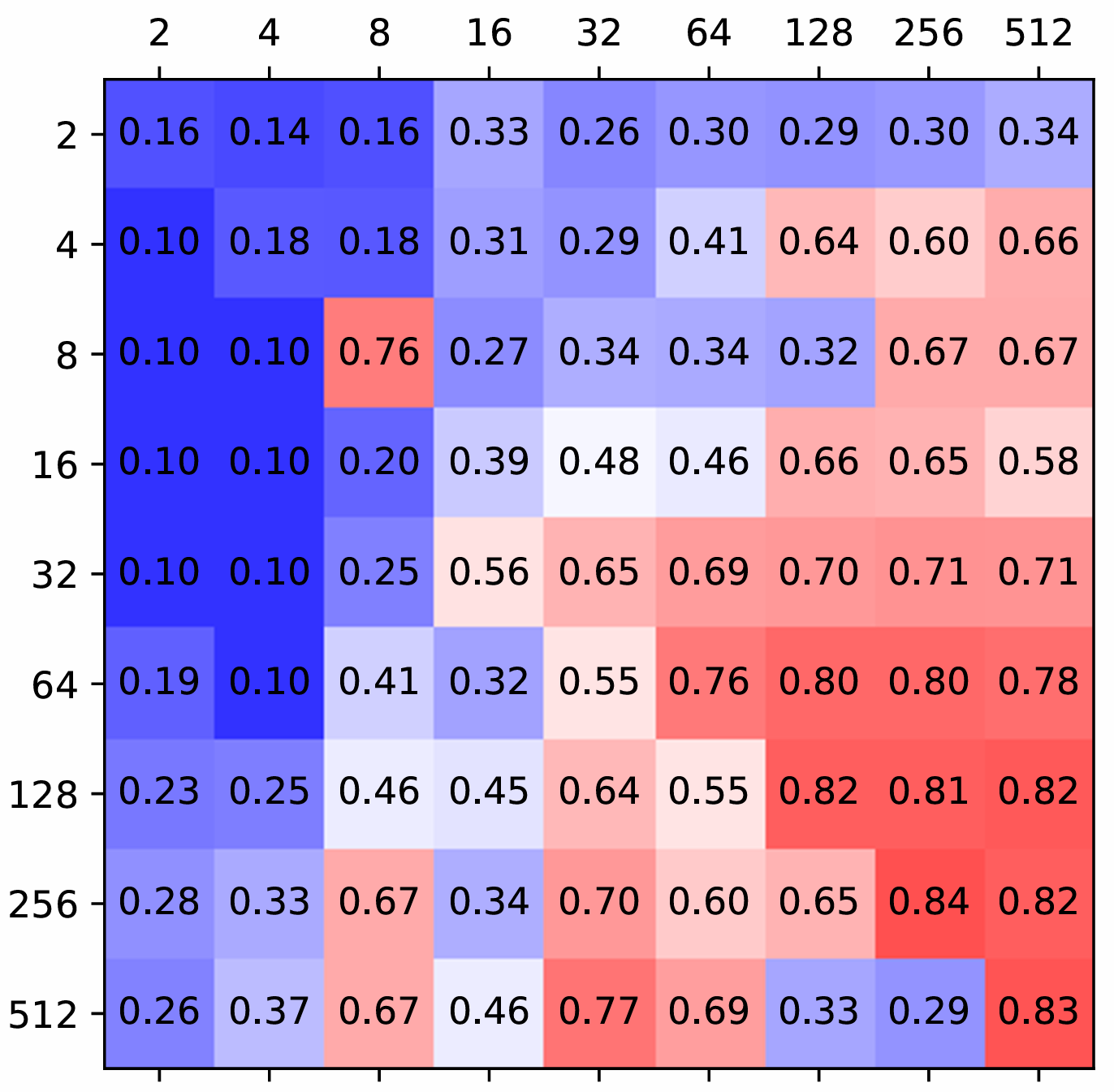}
\caption{ZerO}
\end{subfigure}
\caption{A validation accuracy is presented for FFNNs with two hidden layers and ReLU activation function.
The $y$-axis (resp. $x$-axis) presents the number of nodes in the first (resp. second) hidden layer. Each is trained on FMNIST dataset for 1 epoch.}
\label{fig:pre-processing}
\end{figure*}

\subsection{Depth Independent}\label{subsec:lay_indep}
In this section, we applied the proposed weight initialization method to investigate its effectiveness in training deep FFNNs with the ReLU activation function.
We compared the proposed initialization method with the RAI, GSM, and ZerO methods - previously studied and proven to perform well in training deep neural networks - using the MNIST and Fashion MNIST datasets. 
This experiment drew inspiration from the methods described in~\cite{reluinit2, reluinit1,zero}.

The experiments were divided into two main parts: one where the hidden layers had the same dimensionality, and the other where they had varying dimensionality. We made this division because networks in practice often exhibit varying dimensionalities across their layers.
Figure~\ref{fig:notsame} (a), (b) shows that the $x$-axis represents the number of hidden layers, while the $y$-axis represents the validation accuracy measured at 10 epochs. The label $10$ nodes indicate that all hidden layers have 10 nodes, while 4 nodes signify that each layer has 4 nodes. The experimental results of the MNIST dataset indicated that our proposed method demonstrated high validation accuracy, independent of the number of hidden layers and nodes.
Specifically, for configurations with $10$ nodes per layer, we observed that RAI and GSM were effective up to $50$ hidden layers, after which they experienced difficulty in training as the number of layers increased to $100$. ZerO initialization performed reasonably well, but when compared to our proposed method, it consistently yielded lower validation accuracy across various numbers of hidden layers. In scenarios where the network had only four nodes per layer, most initialization methods struggled due to the narrow network architecture. However, our proposed initialization method stood out by successfully enabling training even with 100 hidden layers. 
On the Fashion MNIST dataset, the experimental results also indicated that our proposed method demonstrated high validation accuracy, independent of the number of hidden layers and nodes. In contrast, zero initialization achieved high validation accuracy with $10$ nodes per layer but faced instability in narrower networks. Furthermore, RAI and GSM struggled to train networks with $50$ or $100$ hidden layers effectively.

In the experiments involving hidden layers with varying numbers of nodes, the results are depicted in Figure~\ref{fig:notsame} (c),(d),(e). The network architecture consisted of a layer with $10$ nodes and a layer with $6$ nodes, repeated throughout the structure. For instance, a network with $20$ hidden layers comprised $10$ node layers and $6$ node layers repeated $10$ times.
When the network had $40$ hidden layers, the validation accuracy of RAI and GSM across epochs exhibited significant variability. In contrast, our proposed method maintained stable validation accuracy during training. This trend continued as the number of hidden layers increased to 80, and it became evident that when the network comprised 120 hidden layers, only our proposed method and zero initialization managed to facilitate successful learning. Furthermore, we conducted simulations on two types of tabular data, each with fewer than 100 features: the Wine Quality Dataset~\cite{wine} and the Iris dataset\cite{iris}. In further experiments, we compared the proposed initialization method with the RAI, ZerO, He, and Orthogonal initialization methods. We trained on the Wine Quality Dataset~(resp. the Iris Dataset) using an FFNN configured with ReLU activation, comprising layers of $10$ nodes and layers of $6$ nodes, this configuration being repeated 60~(resp. 100) times. The experimental results indicated that the proposed initialization method achieved higher validation accuracy compared to other methods across both the Wine Quality and Iris datasets.
For the Wine Quality dataset at 200 epochs, the proposed method's validation accuracy~(58\%) surpassed those of ZerO~(50\%), Orthogonal~(40\%), RAI~(40\%), and He~(40\%). In the case of the Iris dataset, the proposed method rapidly achieved high accuracy by the 10th epoch and maintained it. Proposed~(94\%), ZerO~(63\%), Orthogonal~(30\%), RAI~(38\%), and He~(38\%) are validation accuracies at 100 epochs for the respective methods. 
In summary, our method demonstrated depth independence by achieving higher validation accuracy in deep networks compared to other initialization methods.

\begin{table*}[th]
\centering
\resizebox{15cm}{!}{%
\begin{tabular}{l cc cc cc cc cc cc cc cc}
\toprule
 & \multicolumn{2}{c}{Proposed} & \multicolumn{2}{c}{Orthogonal}
 & \multicolumn{2}{c}{Xavier}& \multicolumn{2}{c}{He}
 & \multicolumn{2}{c}{Zero}& \multicolumn{2}{c}{Identity}
 & \multicolumn{2}{c}{RAI}& \multicolumn{2}{c}{GSM}\\
\cmidrule(lr){2-17} 
Datset & M&F& M&F& M&F&M&F&M&F&M&F&M&F&M&F   \\
\midrule
Tanh      & 11.1&10.0&14.3&9.9& 11.0&9.9&11.7&9.9&10.3&9.9&\textbf{27.0}&9.9&16.1&10.0&12.3&\textbf{13.6}  \\
Sigmoid   & 11.3&10.0&10.3&10.0& 10.3&9.9&11.3&9.9&10.3&10.0&10.2&9.9&10.2&9.9&10.3&10.0 \\
Selu     & \textbf{38.3}&33.3& 11.7&9.9& 10.2&9.9&9.8&9.9&33.0&\textbf{34.5}&10.4&9.9 &10.2&9.9&12.0&11.0 \\
Gelu      & \textbf{83.6}&\textbf{68.1}&65.5&10.0& 11.2&10.0&11.3&10.9&76.2&65&11.3&10.0&11.0&34.0&13.1&34.4 \\
Relu    & \textbf{86.7}&\textbf{76.5}&11.3&10.0& 11.3&10.1&11.3&9.9&82.9&69.4&11.3&10.0&11.3&10.0&11.3&8.6 \\
\bottomrule 
\end{tabular}}
\caption{A validation accuracy is presented for FFNNs with various activation functions.
The FFNN comprises $120$ hidden layers with a layer of $10$ nodes and a layer of $6$ nodes repeated $60$ times each. Each is trained on MNIST~(M) and FMNIST~(F) datasets for $10$ epochs. Best results are marked in bold.}
\label{table:activation}
\end{table*}

\subsection{Width Independent}\label{subsec:node_indep}
In this section, we employed the proposed weight initialization method to assess its effectiveness in training feedforward neural networks (FFNNs) with ReLU activation function, emphasizing its independence from network width.
We created FFNNs with ReLU activation function, each consisting of two hidden layers.
As shown in Figure~\ref{fig:nodesheat1}, the $y$-axis represents the number of nodes in the first hidden layer, while the $x$-axis represents the number of nodes in the second hidden layer. The values within the heatmap correspond to the validation accuracy of feedforward neural networks with ReLU activation function, trained for 10 epochs on the MNIST dataset, where the accuracy is determined based on the respective numbers of nodes in the $x$ and $y$ dimensions. 
The proposed method achieved a validation accuracy of $54.3\%$ when the number of nodes in each hidden layer was set to $2$. In contrast, other methods exhibited the lowest validation accuracy of only $10\%$. Our proposed method demonstrated independence from the number of nodes, effectively enabling the training of narrow feedforward neural networks with ReLU activation function.
We also recorded the validation accuracy at $1$ epoch of training on the FMNIST dataset to assess the convergence speed for various network sizes. 
The proposed method achieved the lowest validation accuracy of $25\%$ among the tested architectures when the first hidden layer had $2$ nodes, and the second hidden layer had $4$ nodes. In contrast, the lowest accuracy recorded by all other methods was $10\%$. In summary, our proposed method demonstrated independence from the number of nodes in FFNNs with ReLU activation function and converged faster compared to other methods.

\subsection{Activation Independent}\label{subsec:activation_indep}
Finally, we employed the proposed weight initialization method to assess its effectiveness in training feedforward neural networks, emphasizing its independence from the activation function. 
Table~\ref{table:activation} illustrates the validation accuracy of tanh, sigmoid, ReLU~\cite{relu1}, GeLU~\cite{gelu}, and SeLU~\cite{selu} on the MNIST and FMNIST datasets across 10 epochs. Here, GeLU and SeLU were set to their default settings in TensorFlow.
A feedforward neural network was constructed following the same configuration used in the layer independence experiment in Section~\ref{subsec:lay_indep}, comprising $120$ hidden layers, with a layer of $10$ nodes and a layer of $6$ nodes, repeated $60$ times each. Notably, for activation functions within the ReLU family, our method outperformed other weight initialization strategies. In particular, with the GeLU activation function, our proposed method achieved a validation accuracy of 68.1\% on FMNIST, and with ReLU, it reached an accuracy of 76.5\%, demonstrating higher accuracy compared to other weight initialization methods. Also, the proposed method showed high validation accuracy on the MNIST dataset. With the GeLU activation function, the proposed weight initialization method achieved an accuracy of 83.6\%, and with ReLU, it achieved an accuracy of 86.7\% (see Table~\ref{table:activation}).

\section{Conclusion}\label{sec:conclusion}
In this work, we propose a novel weight initialization method and provide several properties for the proposed initial weight matrix. 
We demonstrated the proposed matrix holds orthogonality.
Moreover, it was shown that 
the proposed initial matrix has
constant row(or column) sum. Also, we demonstrate that our weight initialization method ensures efficient signal transmission even in extremely deep and narrow feedforward ReLU neural networks.
Experimental results demonstrate that 
the network performs well regardless of whether it is 
deep or narrow, and even when there is a significant difference in the number of nodes between hidden layers.

\section*{Declaration of competing interest}
The authors declare that they have no known competing financial interests or personal relationships that could have appeared to influence the work reported in this paper.

\section*{Acknowledgement}
The authors wish to express their gratitude to the anonymous referees for their careful reading of the manuscript and their helpful suggestions. This work of H. Lee, Y. Kim, S. Yang, and H. Choi was supported by the National Research Foundation of Korea(NRF) grant funded by the Korea government(MSIT) (No. 2022R1A5A1033624). 
The work of Seung Yeop Yang was supported by Learning \& Academic research institution for Master’s · PhD students, and Postdocs(LAMP) Program of the National Research Foundation of Korea(NRF) grant funded by the Ministry of Education(No. RS2023-00301914).

\bigskip

\bibliographystyle{amsplain}

\end{document}